\newcommand{\pref}[1]{\cref{#1}}
\newcommand{\pfref}[1]{Proof of \pref{#1}}
    \let\cref\crtcref
\renewcommand{\crtaddlabeltotoc}[1]{%
  \ifcrtfinal
  \else
  \addcontentsline{\crt@listoflabelsfileextension}{\crt@listoflabelsstructurelevel}{\protect\numberline{\string\crtrefnumber{#1}}#1}%
  \fi
}
\newtheorem{theorem}{Theorem}
\newtheorem{lemma}{Lemma}
\newtheorem{definition}{Definition}
\newtheorem{assumption}{Assumption}
\newtheorem{example}{Example}
\DeclarePairedDelimiter\norm{\lVert}{\rVert}\DeclarePairedDelimiter\ang{\langle}{\rangle}\DeclarePairedDelimiter\curly{\{}{\}}
\DeclarePairedDelimiter\paren{(}{)}
\DeclarePairedDelimiter\sq{[}{]}
\let\save@mathaccent\mathaccent
\newcommand*\if@single[3]{\setbox0\hbox{${\mathaccent"0362{#1}}^H$}\setbox2\hbox{${\mathaccent"0362{\kern0pt#1}}^H$}\ifdim\ht0=\ht2 #3\else #2\fi
  }
\newcommand*\rel@kern[1]{\kern#1\dimexpr\macc@kerna}
\newcommand*\widebar[1]{\@ifnextchar^{{\wide@bar{#1}{0}}}{\wide@bar{#1}{1}}}
\newcommand*\wide@bar[2]{\if@single{#1}{\wide@bar@{#1}{#2}{1}}{\wide@bar@{#1}{#2}{2}}}
\newcommand*\wide@bar@[3]{\begingroup
  \def\mathaccent##1##2{\let\mathaccent\save@mathaccent
\if#32 \let\macc@nucleus\first@char \fi
\setbox\z@\hbox{$\macc@style{\macc@nucleus}_{}$}\setbox\tw@\hbox{$\macc@style{\macc@nucleus}{}_{}$}\dimen@\wd\tw@
    \advance\dimen@-\wd\z@
\divide\dimen@ 3
    \@tempdima\wd\tw@
    \advance\@tempdima-\scriptspace
\divide\@tempdima 10
    \advance\dimen@-\@tempdima
\ifdim\dimen@>\z@ \dimen@0pt\fi
\rel@kern{0.6}\kern-\dimen@
    \if#31
      \overline{\rel@kern{-0.6}\kern\dimen@\macc@nucleus\rel@kern{0.4}\kern\dimen@}\advance\dimen@0.4\dimexpr\macc@kerna
\let\final@kern#2\ifdim\dimen@<\z@ \let\final@kern1\fi
      \if\final@kern1 \kern-\dimen@\fi
    \else
      \overline{\rel@kern{-0.6}\kern\dimen@#1}\fi
  }\macc@depth\@ne
  \let\math@bgroup\@empty \let\math@egroup\macc@set@skewchar
  \mathsurround\z@ \frozen@everymath{\mathgroup\macc@group\relax}\macc@set@skewchar\relax
  \let\mathaccentV\macc@nested@a
\if#31
    \macc@nested@a\relax111{#1}\else
\def\gobble@till@marker##1\endmarker{}\futurelet\first@char\gobble@till@marker#1\endmarker
    \ifcat\noexpand\first@char A\else
      \def\first@char{}\fi
    \macc@nested@a\relax111{\first@char}\fi
  \endgroup
}
\DeclareMathOperator*{\argmax}{arg\,max}
\DeclareMathOperator{\spn}{span}
\DeclareMathOperator{\sgn}{sign}
\DeclareMathOperator{\poly}{poly}
\DeclareMathOperator{\supp}{supp}
\DeclareMathOperator{\conv}{{conv}}
\DeclareMathOperator{\opt}{{opt}}
\def\E{{\mathbb E}}
\def\R{{\mathbb R}}
\def\ddefloop#1{\ifx\ddefloop#1\else\ddef{#1}\expandafter\ddefloop\fi}
\def\ddef#1{\expandafter\def\csname bb#1\endcsname{\ensuremath{\mathbb{#1}}}}
\def\ddefloop#1{\ifx\ddefloop#1\else\ddef{#1}\expandafter\ddefloop\fi}
\def\ddef#1{\expandafter\def\csname b#1\endcsname{\ensuremath{\mathbf{#1}}}}
\def\ddef#1{\expandafter\def\csname sf#1\endcsname{\ensuremath{\mathsf{#1}}}}
\def\ddef#1{\expandafter\def\csname c#1\endcsname{\ensuremath{\mathcal{#1}}}}
\def\ddef#1{\expandafter\def\csname h#1\endcsname{\ensuremath{\widehat{#1}}}}
\def\ddef#1{\expandafter\def\csname hc#1\endcsname{\ensuremath{\widehat{\mathcal{#1}}}}}
\def\ddef#1{\expandafter\def\csname t#1\endcsname{\ensuremath{\widetilde{#1}}}}
\def\ddef#1{\expandafter\def\csname tc#1\endcsname{\ensuremath{\widetilde{\mathcal{#1}}}}}
\def\ddefloop#1{\ifx\ddefloop#1\else\ddef{#1}\expandafter\ddefloop\fi}
\def\ddef#1{\expandafter\def\csname scr#1\endcsname{\ensuremath{\mathscr{#1}}}}
\newcommand{\rwalgtext}{\textsf{ReweightedSpanner}\xspace}
\newcommand{\ahat}{\wh{a}}
\newcommand{\phibar}{\wb{\phi}}
\newcommand{\abar}{\wb{a}}
\newcommand{\vepsbar}{\wb{\veps}}
\newcommand{\acheck}{\check{a}}
\newcommand{\approxleq}{\lesssim}
\newcommand{\bigoh}{O}
\newcommand{\bigoht}{\wt{O}}
\newcommand{\bigom}{\Omega}
\newcommand{\bigomt}{\wt{\Omega}}
\newcommand{\polylog}{\mathrm{polylog}}
\newcommand{\indic}{\mathbb{I}}
\newcommand{\algcommentlight}[1]{\textcolor{blue!70!black}{\transparent{0.5}\small{\texttt{\textbf{//\hspace{2pt}#1}}}}}
\DeclarePairedDelimiter{\abs}{\lvert}{\rvert} 
\DeclarePairedDelimiter{\brk}{[}{]}
\DeclarePairedDelimiter{\crl}{\{}{\}}
\DeclarePairedDelimiter{\prn}{(}{)}
\DeclarePairedDelimiter{\nrm}{\|}{\|}
\DeclarePairedDelimiter{\tri}{\langle}{\rangle}
\DeclarePairedDelimiter{\ceil}{\lceil}{\rceil}
\newcommand{\wt}[1]{\widetilde{#1}}
\newcommand{\wh}[1]{\widehat{#1}}
\newcommand{\wb}[1]{\widebar{#1}}
\newcommand{\eps}{\epsilon}
\newcommand{\veps}{\varepsilon}
\newcommand{\ldef}{\vcentcolon=}
\newcommand{\rdef}{=\vcentcolon}
\newcommand{\RegCB}{\mathrm{\mathbf{Reg}}_{\mathsf{CB}}}
\newcommand{\RegSq}{\mathrm{\mathbf{Reg}}_{\mathsf{Sq}}}
\newcommand{\AlgSq}{\mathrm{\mathbf{Alg}}_{\mathsf{Sq}}}
\newcommand{\AlgOpt}{\mathrm{\mathbf{Alg}}_{\mathsf{Opt}}}
\newcommand{\dec}{\mathsf{dec}}
\newcommand{\SqAlg}{\AlgSq}
\newcommand{\regcb}{\RegCB}
\newcommand{\regsq}{\RegSq}
\newcommand{\sqalgtext}{$\SqAlg$\xspace}
\newcommand{\optalg}{\AlgOpt}
\newcommand{\optalgtext}{$\optalg$\xspace}
\newcommand{\Topt}{\cT_{\mathsf{Opt}}}
\newcommand{\Tsq}{\cT_{\mathsf{Sq}}}
\newcommand{\Msq}{\cM_{\mathsf{Sq}}}
\newcommand{\Mopt}{\cM_{\mathsf{Opt}}}
\newcommand{\dectext}{Decision-Estimation Coefficient\xspace}
\newcommand{\des}{q^{\opt}}
\newcommand{\dess}{{\check q}^{\opt}}
\newcommand{\squarecb}{\textsf{SquareCB}\xspace}
\newcommand{\greedy}{$\epsilon$-\textsf{Greedy}\xspace}
\newcommand{\spannerGreedy}{\textsf{SpannerGreedy}\xspace}
\newcommand{\spannerIGW}{\textsf{SpannerIGW}\xspace}
\newcommand{\argmaxIGW}{\textsf{IGW-ArgMax}\xspace}
\newcommand{\ball}{\mathsf{Ball}\xspace}
\newcommand{\oneshotwiki}{\textsf{oneshotwiki}\xspace}
\newcommand{\amazon}{\textsf{amazon}\xspace}
\newcommand{\sentencetransformers}{\textsf{sentence transformers}\xspace}
\newcommand{\supervised}{\textsf{supervised}\xspace}
\newcommand{\mainalg}{\spannerIGW}
\newcommand{\greedyalg}{\spannerGreedy}
\newcommand{\astar}{a^{\star}}
\newcommand{\fstar}{f^{\star}}
\newcommand{\gstar}{g^{\star}}
\newcommand{\fhat}{\wh{f}}
\newcommand{\ghat}{\wh{g}}
\newcommand{\unif}{\mathrm{unif}}
\renewcommand{\cite}[1]{\citet{#1}}
\let\oldparagraph\paragraph
\renewcommand{\paragraph}[1]{\oldparagraph{#1.}}
\renewcommand{\epsilon}{\varepsilon}
\title{Contextual Bandits with Large Action Spaces: Made Practical}
\date{}
\author{
Yinglun Zhu$^{\dagger}$\\
{\small\texttt{yinglun@cs.wisc.edu}}
\and
Dylan J. Foster$^{\ddagger}$\\
{\small\texttt{dylanfoster@microsoft.com}}
\and
John Langford$^{\ddagger}$\\
{\small\texttt{jcl@microsoft.com}}
\and
Paul Mineiro$^{\ddagger}$\\
{\small\texttt{pmineiro@microsoft.com}}
\and
~\\
{\normalsize University of Wisconsin-Madison$^{\dagger}$ \quad \quad  Microsoft Research, NYC$^{\ddagger}$}
}
\begin{document}

\maketitle

\begin{abstract}
A central problem in sequential decision making is to develop algorithms that are practical and computationally efficient, yet support the use of flexible, general-purpose models. Focusing on the contextual bandit problem, recent progress provides provably efficient algorithms with strong empirical performance when the number of possible alternatives (``actions'') is small, but guarantees for decision making in large, continuous action spaces have remained elusive, leading to a significant gap between theory and practice. We present the first efficient, general-purpose algorithm for contextual bandits with continuous, linearly structured action spaces. Our algorithm makes use of computational oracles for (i) supervised learning, and (ii) optimization over the action space, and achieves sample complexity, runtime, and memory independent of the size of the action space. In addition, it is simple and practical. We perform a large-scale empirical evaluation, and show that our approach typically enjoys superior performance and efficiency compared to standard baselines.

 \end{abstract}

\section{Introduction}
\label{sec:intro}

We consider the design of practical, theoretically motivated algorithms for sequential decision making with contextual information, better known as the \emph{contextual bandit problem}. Here, a learning agent repeatedly receives a \emph{context} (e.g., a user's profile), selects an \emph{action} (e.g., a news article to display), and receives a \emph{reward} (e.g., whether the article was clicked). Contextual bandits are a useful model for decision making in unknown environments in which both exploration and generalization are required, but pose significant algorithm design challenges beyond classical supervised learning. Recent years have seen development on two fronts:
On the theoretical side, extensive research into finite-action contextual bandits has resulted in practical, provably efficient algorithms capable of supporting flexible, general-purpose models \citep{langford2007epoch,agarwal2014taming,foster2020beyond,simchi2021bypassing,foster2021efficient}.
Empirically, contextual bandits have been widely deployed in practice for online personalization and recommendation tasks \citep{li2010contextual,agarwal2016making,tewari2017ads,cai2021bandit}, leveraging the availability of high-quality action slates (e.g., subsets of candidate articles selected by an editor).

The developments above critically rely on the existence of a small number of possible decisions or alternatives. However, many applications demand the ability to make contextual decisions in large, potentially continuous spaces, where actions might correspond to images in a database or high-dimensional embeddings of rich documents such as webpages. Contextual bandits in large (e.g., million-action) settings remains a major challenge---both statistically and computationally---and constitutes a substantial gap between theory and practice. In particular:

\begin{itemize}
\item Existing \emph{general-purpose} algorithms \citep{langford2007epoch,agarwal2014taming,foster2020beyond,simchi2021bypassing,foster2021efficient} allow for the use of flexible models (e.g., neural networks, forests, or kernels) to facilitate generalization across contexts, but have sample complexity and computational requirements linear in the number of actions. These approaches can degrade in performance under benign operations such as duplicating actions.

\item While certain recent approaches extend the general-purpose methods above to accommodate large action spaces, they either require sample complexity exponential in action dimension~\citep{krishnamurthy2020contextual}, or require additional distributional assumptions \citep{sen2021top}.
\item Various results efficiently handle large or continuous action spaces \citep{dani2008stochastic, jun2017scalable, yang2021linear} with specific types of function approximation, but do not accommodate general-purpose models.
\end{itemize}
As a result of these algorithmic limitations, empirical aspects of contextual decision making in large action spaces have remained relatively unexplored compared to the small-action regime \citep{bietti2021contextual}, with little in the way of readily deployable out-of-the-box solutions.

\paragraph{Contributions}
We provide the first efficient algorithms for contextual bandits with continuous, linearly structured action spaces and general function approximation. Following \cite{chernozhukov2019semi,xu2020upper,foster2020adapting}, we adopt a modeling approach, and assume rewards for each context-action pair $(x,a)$ are structured as
\begin{equation}
  \label{eq:linear}
  \fstar(x,a)=\tri*{\phi(x,a),\gstar(x)}.
\end{equation}
Here $\phi(x,a)\in\bbR^{d}$ is a known context-action embedding (or feature map) and $\gstar\in\cG$ is a context embedding to be learned online, which belongs to an arbitrary, user-specified function class $\cG$. Our algorithm, \mainalg, is computationally efficient (in particular, the runtime and memory are \emph{independent} of the number of actions) whenever the user has access to (i) an \emph{online regression oracle} for supervised learning over the reward function class, and (ii) an \emph{action optimization oracle} capable of solving problems of the form
\begin{align*}
\argmax_{a\in\cA}\tri*{\phi(x,a),\theta}
\end{align*}
for any $\theta\in\bbR^{d}$. The former oracle follows prior approaches to finite-action contextual bandits \citep{foster2020beyond,simchi2021bypassing,foster2021efficient}, while the latter generalizes efficient approaches to (non-contextual) linear bandits \citep{mcmahan2004online,dani2008stochastic,bubeck2012towards,hazan2016volumetric}. We provide a regret bound for \mainalg which scales as $\sqrt{\poly(d)\cdot T}$, and---like the computational complexity---is independent of the number of actions. Beyond these results, we provide a particularly practical variant of \mainalg (\greedyalg), which enjoys even faster runtime at the cost of slightly worse ($\poly(d)\cdot{}T^{2/3}$-type) regret.

\paragraph{Our techniques}
On the technical side, we show how to
\emph{efficiently} combine the inverse gap weighting technique \citep{abe1999associative,foster2020beyond} previously used in the finite-action setting with optimal design-based approaches for exploration with linearly structured actions. This offers a computational improvement upon the results of \cite{xu2020upper,foster2020adapting}, which provide algorithms with $\sqrt{\poly(d)\cdot{}T}$-regret for the setting we consider, but require enumeration over the action space. Conceptually, our results expand upon the class of problems for which minimax approaches to exploration \citep{foster2021statistical} can be made efficient.

\paragraph{Empirical performance}
As with previous approaches based on regression oracles, \mainalg is simple, practical, and well-suited to flexible, general-purpose function approximation. In extensive experiments ranging from thousands to millions of actions, we find that our methods typically enjoy superior performance compared to existing baselines. In addition, our experiments validate the statistical model in \cref{eq:linear} 
which we find to be well-suited to learning with large-scale language models \citep{devlin2019bert}.

\subsection{Organization}
This paper is organized as follows. In \pref{sec:setting}, we formally introduce our statistical model and the computational oracles upon which our algorithms are built. Subsequent sections are dedicated to our main results.
\begin{itemize}
\item As a warm-up, \pref{sec:warmup} presents a simplified algorithm, \greedyalg, which illustrates the principle of exploration over an approximate optimal design. This algorithm is practical and oracle-efficient, but has suboptimal $\poly(d)\cdot{}T^{2/3}$-type regret.
\item 
  Building on these ideas, \pref{sec:minimax} presents our main algorithm, \mainalg, which combines the idea of approximate optimal design used by \greedyalg with the inverse gap weighting method \citep{abe1999associative,foster2020beyond}, resulting in an oracle-efficient algorithm with $\sqrt{\poly(d)\cdot{}T}$-regret.
\end{itemize}
\pref{sec:experiments} presents empirical results for both
algorithms. We close with discussion of additional related work
(\pref{sec:related}) and future directions (\pref{sec:discussion}).
All proofs are deferred to the appendix.

\section{Problem Setting}

\label{sec:setting}

The contextual bandit problem proceeds over $T$ rounds. At each round $t\in\brk{T}$, the learner receives a context $x_t \in \cX$ (the \emph{context space}), selects an action $a_t\in\cA$ (the \emph{action space}), and then observes a reward $r_t(a_t)$, where $r_t:\cA\to\brk{-1,1}$ is the underlying reward function.
We assume that for each round $t$, conditioned on $x_t$, the reward $r_t$ is sampled from a (unknown) distribution $\bbP_{r_t}(\cdot\mid{}x_t)$. We allow both the contexts $x_{1},\ldots,x_T$ and the distributions $\bbP_{r_1},\ldots,\bbP_{r_T}$ to be selected in an arbitrary, potentially adaptive fashion based on the history.

\paragraph{Function approximation}

Following a standard approach to developing efficient contextual bandit methods, we take a modeling approach, and work with a user-specified class of regression functions $\cF \subseteq (\cX \times \cA \rightarrow [-1,1])$ that aims to model the underlying mean reward function. We make the following realizability assumption \citep{agarwal2012contextual, foster2018practical, foster2020beyond, simchi2021bypassing}.
\begin{assumption}[Realizability]
  \label{asm:realizability}
  There exists a regression function $f^\star \in \cF$ such that $ \E \sq{r_t(a) \mid x_t=x} = f^\star(x, a)$ for all $a\in\cA$ and $t\in\brk{T}$.
\end{assumption}
Without further assumptions, there exist function classes $\cF$ for which the regret of any algorithm must grow proportionally to $\abs{\cA}$ (e.g., \cite{agarwal2012contextual}). In order to facilitate generalization across actions and achieve sample complexity and computational complexity independent of $\abs{\cA}$, we assume that each function $f\in\cF$ is linear in a known (context-dependent) feature embedding of the action. Following \citet{xu2020upper,foster2020adapting}, we assume that $\cF$ takes the form
\begin{align*}
  \cF = \crl*{f_g \prn{x,a} = \ang{\phi \prn{x,a}, g \prn{x}}: g \in \cG  },
\end{align*}
where $\phi(x,a)\in\bbR^{d}$ is a known, context-dependent action embedding and $\cG$ is a user-specified class of context embedding functions.

This formulation assumes linearity in the action space (after featurization), but allows for nonlinear, \emph{learned} dependence on the context $x$ through the function class $\cG$, which can be taken to consist of neural networks, forests, or any other flexible function class a user chooses. For example, in news article recommendation, $\phi(x,a)=\phi(a)$ might correspond to an embedding of an article $a$ obtained using a large pre-trained language-model, while $g(x)$ might correspond to a task-dependent embedding of a user $x$, which our methods can learn online. Well-studied special cases include the linear contextual bandit setting \citep{chu2011contextual,abbasi2011improved}, which corresponds to the special case where each $g\in\cG$ has the form $g \prn{x} = \theta$ for some fixed $\theta \in \R^d$, as well as the standard finite-action contextual bandit setting, where $d=\abs{\cA}$ and $\phi(x,a)=e_a$.

We let $\gstar\in\cG$ denote the embedding for which $\fstar=f_{\gstar}$. We assume that $\sup_{x \in \cX,a \in \cA}\nrm{\phi(x,a)} \leq 1$ and $\sup_{g \in \cG, x \in \cX} \nrm{g(x)} \leq 1$. In addition, we assume that $\spn(\crl{\phi(x,a)}) = \R^d$ for all $x \in \cX$.

\paragraph{Regret}
For each regression function $f\in\cF$, let $\pi_f(x_t) \ldef \argmax_{a \in \cA} f(x_t,a)$ denote the induced policy, and define $\pi^\star \ldef \pi_{f^\star}$ as the optimal policy. We measure the performance of the learner in terms of regret: %
\begin{align*}
    \regcb(T) \ldef  \sum_{t=1}^T  r_t(\pi^\star(x_t)) - r_t(a_t).
\end{align*}

\subsection{Computational Oracles}

To derive efficient algorithms with sublinear runtime, we make use of two computational oracles: First, following \cite{foster2020beyond,simchi2021bypassing, foster2020adapting, foster2021instance}, we use an \emph{online regression oracle} for supervised learning over the reward function class $\cF$. Second, we use an \emph{action optimization oracle}, which facilitates linear optimization over the action space $\cA$ \citep{mcmahan2004online,dani2008stochastic,bubeck2012towards,hazan2016volumetric}

\paragraph{Function approximation: Regression oracles}

A fruitful approach to designing efficient contextual bandit algorithms is through reduction to supervised regression with the class $\cF$, which facilitates the use of off-the-shelf supervised learning algorithms and models \citep{foster2020beyond, simchi2021bypassing, foster2020adapting, foster2021instance}. Following \citet{foster2020beyond}, we assume access to an \emph{online regression oracle} \sqalgtext, which is an algorithm for online learning (or, sequential prediction) with the square loss.

We consider the following protocol. At each round $t \in [T]$, the oracle produces an estimator $\fhat_t=f_{\ghat_t}$,
then receives a context-action-reward tuple $(x_t, a_t,
r_t(a_t))$. The goal of the oracle is to accurately predict the reward
as a function of the context and action, and we evaluate its
prediction error via the square loss $(\fhat_t(x_t,a_t)-r_t)^2$. We measure the oracle's cumulative performance through square-loss regret to $\cF$.%
\begin{assumption}[Bounded square-loss regret]
\label{asm:regression_oracle}
The regression oracle \sqalgtext guarantees that for any (potentially adaptively chosen) sequence $\curly*{(x_t, a_t, r_t(a_t))}_{t=1}^T$, 
\begin{align*}
  \sum_{t=1}^T \paren*{\widehat f_t(x_t, a_t) - r_t(a_t)}^2
                   -\inf_{f \in \cF}\sum_{t=1}^T \paren*{f(x_t, a_t) - r_t(a_t)}^2 
  \leq \regsq(T),\notag
\end{align*}
for some (non-data-dependent) function $\regsq(T)$.
\end{assumption}
We let $\Tsq$ denote an upper bound on the time required to (i) query the oracle's estimator $\ghat_t$ with $x_t$ and receive the {vector} $\ghat_t(x_t)\in\bbR^{d}$, and (ii) update the oracle with the example $(x_t, a_t, r_t(a_t))$.
{We let $\Msq$ denote the maximum memory used by the oracle throughout its execution.}

Online regression is a well-studied problem, with computationally efficient algorithms for many models. Basic examples include finite classes $\cF$, where one can attain $\regsq(T) = O(\log \abs*{\cF})$ \citep{vovk1998game}, and linear models ($g(x)= \theta$), where the online Newton step algorithm \citep{hazan2007logarithmic} satisfies \cref{asm:regression_oracle} with $\regsq(T) = O(d \log T)$. More generally, even for classes such as deep neural networks for which provable guarantees may not be available, regression is well-suited to gradient-based methods. We refer to \citet{foster2020beyond, foster2020adapting} for more comprehensive discussion.

\paragraph{Large action spaces: Action optimization oracles}
\label{sec:linear_opt_oracle}
The regression oracle setup in the prequel is identical to that considered in the finite-action setting \citep{foster2020beyond}. In order to develop efficient algorithms for large or infinite action spaces, we assume access to an oracle for linear optimization over actions.
\begin{definition}[Action optimization oracle]
  \label{def:action_oracle}
  An \emph{action optimization oracle} \optalgtext takes as input a context $x \in \cX$, and vector $\theta \in \R^d$ and returns
\begin{align}
    a^\star \ldef \argmax_{a \in \cA} \ang*{\phi(x,a), \theta}. \label{eq:linear_opt_oracle}
\end{align}
\end{definition}
For a single query to the oracle, 
We let $\Topt$ denote a bound on the runtime for a single query to the oracle.
We {let  $\Mopt$ denote the maximum memory used by the oracle throughout its execution.}

The action optimization oracle in \pref{eq:linear_opt_oracle} is widely used throughout the literature on linear bandits \citep{dani2008stochastic, chen2017nearly, cao2019disagreement,katz2020empirical}, and can be implemented in polynomial time for standard combinatorial action spaces. It is a basic computational primitive in the theory of convex optimization, and when $\cA$ is convex, it is equivalent (up to polynomial-time reductions) to other standard primitives such as separation oracles and membership oracles \citep{schrijver1998theory, grotschel2012geometric}. It also equivalent to the well-known Maximum Inner Product Search (MIPS) problem \citep{shrivastava2014asymmetric}, for which sublinear-time hashing based methods are available.

\begin{example}
  \label{ex:combinatorial}
  Let $G=(V,E)$ be a graph, and let $\phi(x,a) \in \crl{0,1}^{\abs{E}}$ represent a matching and $\theta \in \R^{\abs{E}}$ be a vector of edge weights. The problem of finding the maximum-weight matching for a given set of edge weights can be written as a linear optimization problem of the form in \cref{eq:linear_opt_oracle}, and Edmonds' algorithm \citep{edmonds1965paths} can be used to find the maximum-weight matching in $O(\abs{V}^2 \cdot \abs{E})$ time.  
\end{example}
Other combinatorial problems that admit polynomial-time action optimization oracles include the maximum-weight spanning tree problem, the assignment problem, and others \citep{awerbuch2008online, cesa2012combinatorial}.

\paragraph{Action representation}
We define $b_\cA$ as the number of bits used to represent actions in $\cA$, which is always upper bounded by $O(\log \abs{\cA})$ for finite action sets, and by $\wt{O}(d)$ {for actions that can be represented as vectors in $\R^d$}. Tighter bounds are possible with additional structual assumptions. Since representing actions is a minimal assumption, we hide the dependence on $b_\cA$ in big-$\bigoh$ notation for our runtime and memory analysis.

\subsection{Additional Notation}
    We adopt non-asymptotic big-oh notation: For functions
	$f,g:\cZ\to\bbR_{+}$, we write $f=\bigoh(g)$ (resp. $f=\bigom(g)$) if there exists a constant
	$C>0$ such that $f(z)\leq{}Cg(z)$ (resp. $f(z)\geq{}Cg(z)$)
        for all $z\in\cZ$. We write $f=\bigoht(g)$ if
        $f=\bigoh(g\cdot\mathrm{polylog}(T))$, $f=\bigomt(g)$ if $f=\bigom(g/\polylog(T))$.
         We use $\lesssim$ only in informal statements to
highlight salient elements of an inequality.

	For a vector $z\in\bbR^{d}$, we let $\nrm*{z}$ denote the euclidean
	norm. We define $\nrm{z}_{W}^{2}\ldef{}\tri{z,Wz}$ for a positive definite matrix $W \in \R^{d \times d}$.
For an integer $n\in\bbN$, we let $[n]$ denote the set
        $\{1,\dots,n\}$.  
        For a set $\cZ$, we let
        $\Delta(\cZ)$ denote the set of all Radon probability measures
        over $\cZ$. We let $\conv(\cZ)$ denote the set of all finitely
        supported convex combinations of elements in $\cZ$.
        When $\cZ$ is finite, we let $\unif(\cZ)$ denote the uniform distribution over all the
        elements in $\cZ$.
        We let $\indic_{z}\in\Delta(\cZ)$ denote the delta distribution on $z$.
         We use the convention
         $a\wedge{}b=\min\crl{a,b}$ and $a\vee{}b=\max\crl{a,b}$.

\section{Warm-Up: Efficient Algorithms via Uniform Exploration}
\label{sec:warmup}

In this section, we present our first result: an efficient algorithm based on uniform exploration over a representative basis (\spannerGreedy; \cref{alg:greedy}). This algorithm achieves computational efficiency by taking advantage of an online regression oracle, but its regret bound has sub-optimal dependence on $T$.
Beyond being practically useful in its own right, this result serves as a warm-up for \pref{sec:minimax}.

Our algorithm is based on exploration with a \emph{G-optimal design} for the embedding $\phi$, which is a distribution over actions that minimizes a certain notion of worse-case variance \citep{kiefer1960equivalence,atwood1969optimal}.
\begin{definition}[G-optimal design]
    \label{def:g_opt_design}
   Let a set $\cZ \subseteq \R^d$ be given. A distribution $q \in \Delta(\cZ)$ is said to be a G-optimal design with approximation factor $C_{\opt} \geq 1$ if 
   \begin{align*}
       \sup_{z \in \cZ} \nrm{z}_{V(q)^{-1}}^2 \leq C_{\opt} \cdot d,
   \end{align*} 
   where $V(q) \ldef \E_{z \sim q} \brk[\big]{z z^\top}$.
\end{definition}
The following classical result guarantees existence of a G-optimal design.
\begin{lemma}[\citet{kiefer1960equivalence}]
    \label{prop:kiefer_wolfowitz}
    For any compact set $\cZ \subseteq \R^d$, there exists an optimal design with $C_{\opt} = 1$. 
\end{lemma}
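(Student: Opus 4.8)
The statement is the classical Kiefer--Wolfowitz equivalence theorem, and the plan is to prove it by exhibiting a \emph{D-optimal} design and showing that D-optimality forces the G-optimality bound with constant exactly one. Throughout I use that the set of interest spans $\R^{d}$ (which holds for $\cZ = \crl{\phi(x,a) : a \in \cA}$ by the standing assumption $\spn(\crl{\phi(x,a)}) = \R^{d}$); this guarantees that $V(q)$ is invertible for at least one $q$, so the quantities $\nrm{z}_{V(q)^{-1}}^{2}$ are well defined. Concretely, I would consider the objective $\Phi(q) \ldef \log\det V(q)$ over $q \in \Delta(\cZ)$, let $q^{\star}$ be a maximizer, and observe that the first-order optimality condition for $q^{\star}$ is \emph{identical} to the desired G-optimality inequality.

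First I would establish existence of the maximizer $q^{\star}$. Since $\cZ$ is compact, $\Delta(\cZ)$ is compact in the weak-$*$ topology (Banach--Alaoglu / Prokhorov), and the map $q \mapsto V(q) = \E_{z\sim q}\brk*{zz^{\trn}}$ is linear and weak-$*$ continuous. The function $A \mapsto \log\det A$ is continuous on the positive-definite cone and tends to $-\infty$ at the boundary, so $\Phi$ is concave and upper semicontinuous on the compact convex set $\Delta(\cZ)$ and attains its supremum at some $q^{\star}$. The spanning assumption furnishes a design with $V$ nonsingular, so $\Phi(q^{\star}) > -\infty$ and hence $V(q^{\star}) \succ 0$.

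The heart of the argument is the first-order condition. For any fixed $z \in \cZ$, consider the perturbation $q_{\veps} \ldef (1-\veps)q^{\star} + \veps\,\indic_{z}$ for $\veps \in [0,1]$, which is feasible since it is a convex combination of elements of $\Delta(\cZ)$, and satisfies $V(q_{\veps}) = V(q^{\star}) + \veps\,(zz^{\trn} - V(q^{\star}))$. Using $\frac{d}{d\veps}\log\det\prn*{A + \veps B}\big|_{\veps = 0} = \operatorname{tr}\prn*{A^{-1}B}$ with $A = V(q^{\star})$ and $B = zz^{\trn} - V(q^{\star})$, the directional derivative evaluates to
\begin{align*}
  \frac{d}{d\veps}\Phi(q_{\veps})\Big|_{\veps=0}
  = \operatorname{tr}\prn*{V(q^{\star})^{-1}zz^{\trn}} - \operatorname{tr}\prn*{V(q^{\star})^{-1}V(q^{\star})}
  = \nrm{z}_{V(q^{\star})^{-1}}^{2} - d.
\end{align*}
Since $q^{\star}$ maximizes $\Phi$ and $q_{\veps}$ is a feasible direction, this one-sided derivative must be nonpositive for every $z \in \cZ$, which yields exactly $\nrm{z}_{V(q^{\star})^{-1}}^{2} \leq d$, i.e. $\sup_{z\in\cZ}\nrm{z}_{V(q^{\star})^{-1}}^{2} \leq d$. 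Hence $q^{\star}$ is a G-optimal design with $C_{\opt} = 1$. One may further note $\E_{z\sim q^{\star}}\nrm{z}_{V(q^{\star})^{-1}}^{2} = \operatorname{tr}(V(q^{\star})^{-1}V(q^{\star})) = d$, which forces the supremum to equal $d$, but the inequality is all that \pref{prop:kiefer_wolfowitz} requires.

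The main obstacle is justifying the directional-derivative step rigorously: one must verify that $\veps \mapsto \log\det V(q_{\veps})$ is differentiable at $\veps = 0$ (which holds because $V(q^{\star}) \succ 0$ implies $V(q_{\veps}) \succ 0$ for small $\veps$, where $\log\det$ is smooth) and that nonpositivity of this derivative is the correct consequence of maximality over the convex set $\Delta(\cZ)$. A secondary point requiring care is the invertibility of $V(q^{\star})$, which is exactly where the spanning hypothesis enters; absent it, one would instead work on the subspace $\spn(\cZ)$ and replace the inverse by a pseudoinverse, with $d$ replaced by the dimension of that subspace.
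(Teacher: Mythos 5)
Your proof is correct. Note that the paper itself offers no proof of this lemma---it is invoked purely as a citation to the classical Kiefer--Wolfowitz equivalence theorem---so there is no internal argument to compare against; what you have written is the standard self-contained derivation of exactly the direction the paper needs (a D-optimal design is G-optimal with constant $C_{\opt}=1$). Your two main steps are both sound: existence of a maximizer of $q \mapsto \log\det V(q)$ follows from weak-$*$ compactness of $\Delta(\cZ)$ together with upper semicontinuity and linearity of $q \mapsto V(q)$, and the first-order condition along the feasible perturbation $(1-\veps)q^{\star} + \veps\,\indic_{z}$ gives precisely $\nrm{z}_{V(q^{\star})^{-1}}^{2} \leq d$. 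Your closing caveats are also the right ones: invertibility of $V(q^{\star})$ is where the spanning hypothesis enters (and it is implicitly required for the paper's Definition of G-optimal design to make sense, since that definition uses $V(q)^{-1}$), and in the degenerate case one restricts to $\spn(\cZ)$, which is consistent with the paper's standing assumption $\spn(\crl{\phi(x,a)}) = \R^d$.
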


\begin{algorithm}[H]
	\caption{\spannerGreedy}
	\label{alg:greedy} 
	\renewcommand{\algorithmicrequire}{\textbf{Input:}}
	\renewcommand{\algorithmicensure}{\textbf{Output:}}
	\newcommand{\algorithmicbreak}{\textbf{break}}
    \newcommand{\BREAK}{\STATE \algorithmicbreak}
	\begin{algorithmic}[1]
		\REQUIRE Exploration parameter $\epsilon \in (0, 1]$, online regression oracle \sqalgtext, action optimization oracle \optalgtext.
		\FOR{$t = 1, 2, \dots, T$}
		\STATE Observe context $x_t$.
		\STATE Receive $\widehat f_t = f_{\widehat g_t}$ from regression oracle \sqalgtext.
\STATE Get $\widehat a_t \gets \argmax_{a \in \cA} \ang*{\phi(x_t, a), \widehat g_t(x_t)}$.
\STATE Call subroutine to compute $C_{\opt}$-approximate optimal design $\des_t \in \Delta(\cA)$ for set $\curly*{\phi(x_t, a)}_{a\in\cA}$.\\
\hfill \algcommentlight{See \cref{alg:barycentric} for efficient solver.}
  \STATE Define $p_t \ldef \eps \cdot \des_t + (1-\epsilon) \cdot \indic_{\wh a_t}$.
  \STATE Sample $a_t \sim p_t$ and observe reward $r_t(a_t)$.
  \STATE Update oracle \sqalgtext with $(x_t, a_t, r_t(a_t))$.
\ENDFOR
	\end{algorithmic}
\end{algorithm}

\cref{alg:greedy} uses optimal design as a basis for exploration: At each round, the learner obtains an estimator $\wh f_t$ from the regression oracle \sqalgtext, then appeals to a subroutine to compute an (approximate) G-optimal design $\des_t \in \Delta(\cA)$ for the action embedding $\crl*{\phi(x_t,a)}_{a\in\cA}$.
Fix an exploration parameter $\eps > 0$, the algorithm then samples an action $a \sim \des_t$ from the optimal design with probability $\veps$ (``exploration''), or plays the greedy action $\wh a_t \ldef \argmax_{a \in \cA} \wh f_t(x_t,a)$ with probability $1-\eps$ (``exploitation''). \pref{alg:greedy} is efficient whenever an approximate optimal design can be computed efficiently, which can be achieved using \pref{alg:barycentric}. We defer a detailed discussion of efficiency for a moment, and first state the main regret bound for the algorithm.

\begin{restatable}{theorem}{thmGreedy}
	\label{thm:greedy}
With a $C_{\opt}$-approximate optimal design subroutine and an appropriate choice for $\veps\in(0,1]$, \cref{alg:greedy}, with probability at least $1-\delta$, enjoys regret
\begin{align*}
     \regcb(T)
      = \bigoh \prn*{ (C_{\opt} \cdot d)^{1/3} T^{2/3} (\regsq(T) + \log(\delta^{-1}))^{1/3}}.
\end{align*}	
In particular, when invoked with \cref{alg:barycentric} {(with $C=2$)} as a subroutine, the algorithm enjoys regret 
\begin{align*}
  \regcb(T)
   = \bigoh\prn*{ d^{2/3}T^{2/3} (\regsq(T) + \log(\delta^{-1}))^{1/3}}.
\end{align*}	
and has per-round runtime $O(\Tsq  + \Topt \cdot d^2 \log d  + d^4 \log d)$ and maximum memory $O( \Msq + \Mopt + d^2)$.
\end{restatable}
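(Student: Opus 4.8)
The plan is to separate the statistical claim (the two regret bounds) from the computational claim, and to reduce the regret analysis entirely to controlling the regression oracle's \emph{in-design} prediction error. I would start from the per-round conditional regret. Writing $\astar_t \ldef \pi^\star(x_t)$ and conditioning on the history and $x_t$, the expected instantaneous regret is $\fstar(x_t,\astar_t) - \E_{a\sim p_t}[\fstar(x_t,a)]$. Since $p_t = \eps\,\des_t + (1-\eps)\,\indic_{\wh a_t}$ and rewards lie in $[-1,1]$, this decomposes into an \emph{exploration cost} of at most $2\eps$ plus the \emph{greedy regret} $\fstar(x_t,\astar_t) - \fstar(x_t,\wh a_t)$. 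Because $\wh a_t$ maximizes $\fhat_t(x_t,\cdot)$, adding and subtracting $\fhat_t$ and discarding the nonpositive term $\fhat_t(x_t,\astar_t)-\fhat_t(x_t,\wh a_t)$ bounds the greedy regret by the prediction errors $\abs{\fhat_t(x_t,a)-\fstar(x_t,a)}$ at the two actions $a\in\crl*{\astar_t,\wh a_t}$.

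The crucial step exploits the linear structure together with the optimal design. Writing $f_g(x,a)=\ang*{\phi(x,a),g(x)}$, the prediction error at any action equals $\ang*{\phi(x_t,a),\ghat_t(x_t)-\gstar(x_t)}$; applying Cauchy--Schwarz in the $V(\des_t)$ geometry and invoking \cref{def:g_opt_design} to bound $\nrm*{\phi(x_t,a)}_{V(\des_t)^{-1}}^2 \le C_{\opt}\,d$ \emph{uniformly} over $a$, each prediction error is at most $\sqrt{C_{\opt} d}\cdot\nrm*{\ghat_t(x_t)-\gstar(x_t)}_{V(\des_t)}$. The key identity is that this design-weighted norm is precisely the expected squared prediction error under the design, $\nrm*{\ghat_t(x_t)-\gstar(x_t)}_{V(\des_t)}^2 = \E_{a\sim\des_t}[(\fhat_t(x_t,a)-\fstar(x_t,a))^2]$, which is exactly the quantity the oracle can be made to control.

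It then remains to bound $\sum_t \E_{a\sim\des_t}[(\fhat_t-\fstar)^2]$, for which I would chain three steps: (i) since $p_t$ puts mass at least $\eps$ on $\des_t$, the in-design error is at most $\eps^{-1}$ times the played-action error $\E_{a_t\sim p_t}[(\fhat_t-\fstar)^2]$; (ii) a Freedman-type bound passes from this conditional expectation to the realized error at the played action (both bounded by $4$); and (iii) expanding the square loss, using realizability (\cref{asm:realizability}) and a self-bounding martingale argument for the reward-noise cross term, converts \cref{asm:regression_oracle} into $\sum_t (\fhat_t(x_t,a_t)-\fstar(x_t,a_t))^2 \lesssim \regsq(T)+\log(\delta^{-1})$. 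Together these give $\sum_t \nrm*{\ghat_t(x_t)-\gstar(x_t)}_{V(\des_t)}^2 \lesssim \eps^{-1}(\regsq(T)+\log\delta^{-1})$ with high probability. A Cauchy--Schwarz over the $T$ rounds and a final martingale step converting the conditional regret into the realized $\regcb(T)$ yield $\regcb(T) \lesssim \sqrt{C_{\opt} d}\cdot\sqrt{\eps^{-1} T(\regsq(T)+\log\delta^{-1})} + \eps T$; balancing the two terms at $\eps \asymp \big((C_{\opt}d(\regsq(T)+\log\delta^{-1}))/T\big)^{1/3}$ gives the first bound. The second bound follows by instantiating $C_{\opt}$ via \cref{alg:barycentric}: a $C$-approximate barycentric spanner is a basis $B=[b_1,\dots,b_d]$ expressing every action with coefficients in $[-C,C]$, so its uniform design $q$ satisfies $\nrm*{z}_{V(q)^{-1}}^2 = d\,\nrm*{B^{-1}z}^2 \le C^2 d^2$, i.e.\ $C_{\opt}=C^2 d=O(d)$ for $C=2$, whence $(C_{\opt}d)^{1/3}=O(d^{2/3})$. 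For the computational claim I would tally the per-round work: a query plus update to \sqalgtext ($\Tsq$), one call to \optalgtext for $\wh a_t$, and the cost of \cref{alg:barycentric}, which runs $O(d^2\log d)$ optimization-oracle calls (each maximizing a determinant linear in one column) with $O(d^2)$ linear algebra apiece, giving $O(\Topt d^2\log d + d^4\log d)$; memory is $O(\Msq+\Mopt+d^2)$ for the oracles plus storage of the spanner and $V(\des_t)^{-1}$.

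\textbf{The main obstacle} is the high-probability accounting in step (iii): there are two independent randomness sources (reward noise and action sampling), and one must keep their failure events coherent across the three martingale arguments while using variance-adaptive (Freedman) rather than Hoeffding concentration, so that $\log\delta^{-1}$ lands \emph{inside} the cube root rather than contributing an additive $\sqrt{T\log\delta^{-1}}$ that would dominate the $T^{2/3}$ rate. Passing the $\eps^{-1}$ exploration factor cleanly through both the importance-weighting in (i) and the self-bounding noise step in (iii) is where the bookkeeping is most delicate.
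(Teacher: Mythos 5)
Your proposal is correct and reaches both stated bounds, but it takes a genuinely different route from the paper's proof. The paper is modular: it invokes the DEC-based regret decomposition (\cref{lm:regret_decomp})---which already packages the Freedman/martingale arguments you outline in steps (ii)--(iii)---and thereby reduces \cref{thm:greedy} to the purely per-round claim that the algorithm's exploration distribution certifies $\dec_\gamma(\cF) = O(\sqrt{C_{\opt}\cdot d/\gamma})$ (\cref{prop:dec_greedy}), after which one tunes the single parameter $\gamma$. That certificate uses the same two ingredients as your argument (Cauchy--Schwarz in the $V(\des)$ geometry and the $C_{\opt}$-approximate design property), but it converts products into sums \emph{per round} via AM--GM against the $-\gamma(\cdot)^2$ penalty inside the DEC, rather than keeping square roots. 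You instead run an end-to-end $\eps$-greedy analysis: greedy regret bounded by prediction errors at $\astar_t$ and $\wh a_t$, the identity $\nrm*{\ghat_t(x_t)-\gstar(x_t)}_{V(\des_t)}^2=\E_{a\sim\des_t}\brk*{(\fhat_t(x_t,a)-\fstar(x_t,a))^2}$, importance weighting by $\eps^{-1}$, a high-probability bound $\sum_t\E_{a\sim p_t}\brk*{(\fhat_t-\fstar)^2}\lesssim\regsq(T)+\log(\delta^{-1})$, then Cauchy--Schwarz across rounds and a direct choice of $\eps$; this is essentially the paper's machinery unpacked and specialized to \greedyalg. What the paper's route buys is reuse---the same \cref{lm:regret_decomp} also powers \cref{thm:igw}, and the per-round DEC certificate is exactly the quantity that \spannerIGW later improves from $O(\sqrt{C_{\opt}d/\gamma})$ to $O(C_{\opt}d/\gamma)$---while your route is self-contained, dispenses with the auxiliary parameter $\gamma$, and makes the probabilistic bookkeeping explicit. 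One remark on your stated ``main obstacle'': the additive $\sqrt{T\log(\delta^{-1})}$ term from the final Azuma step is harmless, since it is dominated by $T^{2/3}(\log(\delta^{-1}))^{1/3}$ whenever $\log(\delta^{-1})\le T$ and the regret bound is vacuous otherwise; indeed the paper's own \cref{lm:regret_decomp} carries exactly this term. Your spanner instantiation ($C_{\opt}=C^2d=4d$ for $C=2$, matching \cref{prop:barycentric_approx_opt_design}) and your runtime/memory tally agree with \cref{prop:barycentric_basic_computational} and the accounting in the paper's proof.
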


\paragraph{Computational efficiency}
The computational efficiency of \cref{alg:greedy} hinges on the ability to efficiently compute an approximate optimal design (or, by convex duality, the John ellipsoid \citep{john1948extremum}) for the set $\crl*{\phi(x_t,a)}_{a\in\cA}$.
All off-the-shelf optimal design solvers that we are aware of require solving quadratic maximization subproblems, which in general cannot be reduced to a linear optimization oracle (\pref{def:action_oracle}). While there are some special cases where efficient solvers exist (e.g., when $\cA$ is a polytope (\citet{cohen2019near} and references therein)), computing an exact optimal design is NP-hard in general \citep{grotschel2012geometric,summa2014largest}.
To overcome this issue, we use the notion of a \emph{barycentric spanner}, which acts as an approximate optimal design and can be computed efficiently using an action optimization oracle.

\begin{definition}[\citet{awerbuch2008online}]
\label{def:barycentric}
Let a compact set $\cZ \subseteq \R^d$ of full dimension be given. For $C\geq{}1$, a subset of points $\cS = \curly*{z_1, \dots, z_d} \subseteq \cZ$ is said to be a $C$-approximate barycentric spanner for $\cZ$ if every point $z \in \cZ$ can be expressed as a weighted combination of points in $\cS$ with coefficients in $[-C, C]$.
\end{definition}

The following result shows that any barycentric spanner yields an approximate optimal design.

\begin{restatable}{lemma}{propBary}
	\label{prop:barycentric_approx_opt_design}
    If $\cS = \crl{z_1, \dots, z_d}$ is a $C$-approximate barycentric spanner for $\cZ \subseteq \R^d$, then $q \ldef \unif(\cS)$ is a $(C^2 \cdot d)$-approximate optimal design.
\end{restatable}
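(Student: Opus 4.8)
The plan is to reduce the quantity $\sup_{z\in\cZ}\nrm{z}_{V(q)^{-1}}^2$ appearing in \pref{def:g_opt_design} to an elementary bound on the magnitude of the spanner coefficients. The key observation is that, because $\cZ$ is full-dimensional and every point of $\cZ$ lies in the linear span of $\cS=\crl{z_1,\dots,z_d}$, the $d$ vectors $z_1,\dots,z_d$ must span $\R^d$; as there are exactly $d$ of them, they form a basis. Collecting them as the columns of a matrix $M\ldef[z_1\mid\cdots\mid z_d]\in\R^{d\times d}$, this means $M$ is invertible, and consequently $V(q)=\tfrac1d\sum_{i=1}^d z_iz_i^\top=\tfrac1d MM^\top$ is positive definite with inverse $V(q)^{-1}=d\,(MM^\top)^{-1}=d\,M^{-\top}M^{-1}$.

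Next I would fix an arbitrary $z\in\cZ$ and invoke the barycentric spanner property (\pref{def:barycentric}) to write $z=\sum_{i=1}^d\alpha_i z_i=M\alpha$ with $\alpha=(\alpha_1,\dots,\alpha_d)^\top$ satisfying $\abs{\alpha_i}\le C$ for all $i$. Since $M$ is invertible this is equivalent to $\alpha=M^{-1}z$, and therefore
\[
\nrm{z}_{V(q)^{-1}}^2 = d\,z^\top M^{-\top}M^{-1}z = d\,\nrm{M^{-1}z}^2 = d\,\nrm{\alpha}^2 \le d\cdot\sum_{i=1}^d C^2 = C^2 d^2,
\]
where the inequality uses $\abs{\alpha_i}\le C$ coordinate-wise. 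Taking the supremum over $z\in\cZ$ gives $\sup_{z\in\cZ}\nrm{z}_{V(q)^{-1}}^2\le (C^2 d)\cdot d$, which is exactly the statement that $q=\unif(\cS)$ is a $(C^2\cdot d)$-approximate optimal design (noting also that $q\in\Delta(\cZ)$ since $\cS\subseteq\cZ$).

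The computation itself is routine once the algebra is arranged, so I do not expect a genuine obstacle; the only point requiring care is the justification that $\cS$ is a basis, and hence that $M$ (and thus $V(q)$) is invertible. This follows by combining the full-dimensionality hypothesis on $\cZ$ with the definition of a barycentric spanner, and it is precisely what makes the change of variables $\alpha=M^{-1}z$ legitimate. It is worth remarking that the factor $C^2 d$ (rather than $C^2$) arises because the per-coordinate bound $\abs{\alpha_i}\le C$ is summed over all $d$ coordinates to bound $\nrm{\alpha}^2$.
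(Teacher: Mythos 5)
Your proof is correct and essentially identical to the paper's: both arguments collect the spanner vectors into an invertible matrix $M$, use the spanner property to write any $z\in\cZ$ as $z=M\alpha$ with $\alpha\in[-C,C]^d$, and compute $\nrm{z}^2_{V(q)^{-1}} = d\,\nrm{\alpha}^2 \leq C^2 d^2$, which is exactly the $(C^2\cdot d)$-approximate design bound. The only (harmless) difference is that you justify the invertibility of $M$ explicitly from the full-dimensionality of $\cZ$ together with the spanner property, whereas the paper asserts it ``by construction.''
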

Using an algorithm introduced by \citet{awerbuch2008online}, one can efficiently compute the $C$-approximate barycentric spanner for the set $\crl*{\phi(x,a)}_{a\in\cA}$ using$O(d^2 \log_C d)$ an action optimization oracle; their method is restated stated as \cref{alg:barycentric} in \pref{app:warmup}.

\paragraph{Key features of \pref{alg:greedy}}
While the regret bound for \pref{alg:greedy} scales with $T^{2/3}$, which is not optimal, this result constitutes the first computationally efficient algorithm for contextual bandits with linearly structured actions and general function approximation. Additional features include:
\begin{itemize}
      \item \emph{Simplicity and practicality.} Appealing to uniform exploration makes \pref{alg:greedy} easy to implement and highly practical. In particular, in the case where the action embedding does not depend on the context (i.e., $\phi(x,a) = \phi(a)$) an approximate design can be precomputed and reused, reducing the per-round runtime to $\wt O(\Tsq + \Topt)$ and the maximum memory to $O(\Msq + d)$.
      \item \emph{Lifting optimal design to contextual bandits.} Previous bandit algorithms based on optimal design are limited to the non-contextual setting, and to pure exploration. Our result highlights for the first time that optimal design can be efficiently combined with general function approximation.

\end{itemize}

\paragraph{Proof sketch for \pref{thm:greedy}}
To analyze \pref{alg:greedy}, we follow a recipe introduced by \citet{foster2020beyond,foster2021statistical} based on the \emph{\dectext} (DEC),\footnote{The original definition of the Decision-Estimation Coefficient in \citet{foster2021statistical} uses Hellinger distance rather than squared error. The squared error version we consider here leads to tighter guarantees for bandit problems where the mean rewards serve as a sufficient statistic.} defined as $\dec_\gamma(\cF) \ldef \sup_{\wh{f} \in \conv(\cF), x \in \cX} \dec_{\gamma}(\cF;\wh{f},x)$, where
\begin{align}
	 \dec_{\gamma}(\cF; \wh{f}, x) \ldef \inf_{p \in \Delta(\cA)} \sup_{a^\star \in \cA} \sup_{f^\star \in \cF} 
	 \E_{a \sim p} \bigg[ f^\star(x, a^\star) - f^\star(x, a) - \gamma \cdot \prn{\wh{f}(x,a) - f^\star(x,a)}^2 \bigg] .
\label{eq:dec}
\end{align}
\citet{foster2021statistical} consider a meta-algorithm which, at each round $t$, (i) computes $\fhat_t$ by appealing to a regression oracle, (ii) computes a distribution $p_t\in\Delta(\cA)$ that solves the minimax problem in \pref{eq:dec} with $x_t$ and $\fhat_t$ plugged in, and (iii) chooses the action $a_t$ by sampling from this distribution. One can show (\pref{lm:regret_decomp} in \pref{app:warmup}) that for any $\gamma > 0$, this strategy enjoys the following regret bound:
\begin{align}
	\regcb(T) \approxleq T \cdot \dec_\gamma(\cF) + \gamma \cdot \regsq(T), \label{eq:decomposition}
\end{align}
More generally, if one computes a distribution that does not solve \pref{eq:dec} exactly, but instead certifies an upper bound on the DEC of the form $\dec_\gamma(\cF) \leq \wb \dec_\gamma(\cF)$, the same result holds with $\dec_\gamma(\cF)$ replaced by $\wb \dec_\gamma(\cF)$. \pref{alg:greedy} is a special case of this meta-algorithm, so to bound the regret it suffices to show that the exploration strategy in the algorithm certifies a bound on the DEC.
\begin{restatable}{lemma}{propDecGreedy}
  \label{prop:dec_greedy}
  For any $\gamma\geq{}1$, by choosing $\eps = \sqrt{C_{\opt} \cdot d/ 4\gamma} \wedge 1$, the exploration strategy in \cref{alg:greedy} certifies that $\dec_\gamma(\cF) = \bigoh(\sqrt{C_{\opt} \cdot d/\gamma})$.
\end{restatable}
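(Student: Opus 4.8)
The plan is to use the fact, noted just above the statement, that \pref{alg:greedy} is an instance of the DEC meta-algorithm: it therefore suffices to show that the distribution $p \ldef \eps\cdot\des + (1-\eps)\cdot\indic_{\ahat}$ played by the algorithm certifies the claimed bound on $\dec_\gamma(\cF;\fhat,x)$ for every fixed $x$ and $\fhat=f_{\ghat}\in\conv(\cF)$. So I would fix $x$ and $\fhat$, let $a^\star\in\cA$ and $\fstar=f_{\gstar}\in\cF$ be arbitrary (these are the two suprema inside the DEC), and introduce the \emph{error vector} $w \ldef \ghat(x)-\gstar(x)$, so that linearity gives the clean identity $\fhat(x,a)-\fstar(x,a)=\tri*{\phi(x,a),w}$ for all $a$. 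Writing $V \ldef \E_{a\sim\des}\sq*{\phi(x,a)\phi(x,a)^\top}$ for the design matrix, the whole argument will be driven by two quantities: the prediction errors $\tri*{\phi(x,a),w}$ at the single points $a^\star$ and $\ahat$, and the averaged squared error paid under exploration.

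\textbf{Decomposition.} First I would split the expected instantaneous regret into a greedy term and an exploration cost,
\begin{align*}
\fstar(x,a^\star) - \E_{a\sim p}\sq*{\fstar(x,a)} = \big(\fstar(x,a^\star) - \fstar(x,\ahat)\big) + \eps\cdot\big(\fstar(x,\ahat) - \E_{a\sim\des}\sq*{\fstar(x,a)}\big),
\end{align*}
where the second term is at most $2\eps$ because $\fstar\in[-1,1]$. For the greedy term I would add and subtract $\fhat$ and use that $\ahat$ maximizes $\fhat(x,\cdot)$, so the cross term is nonpositive:
\begin{align*}
\fstar(x,a^\star) - \fstar(x,\ahat) \leq \big(\fstar(x,a^\star)-\fhat(x,a^\star)\big) + \big(\fhat(x,\ahat)-\fstar(x,\ahat)\big) = -\tri*{\phi(x,a^\star),w} + \tri*{\phi(x,\ahat),w}.
\end{align*}

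\textbf{The crux.} This is the heart of the argument. I would control each inner product by Cauchy--Schwarz in the $V$-geometry, $\tri*{\phi(x,a),w}\leq\nrm{\phi(x,a)}_{V^{-1}}\nrm{w}_V$, and then invoke the $C_{\opt}$-approximate optimal design property $\nrm{\phi(x,a)}_{V^{-1}}^2\leq C_{\opt}\cdot d$ to get $\tri*{\phi(x,a),w}\leq\sqrt{C_{\opt}d}\,\nrm{w}_V$ for both $a=a^\star$ and $a=\ahat$, so that the greedy regret is at most $2\sqrt{C_{\opt}d}\,\nrm{w}_V$. Simultaneously, since $p$ places mass at least $\eps$ on $\des$ and the integrand is nonnegative, the estimation penalty lower-bounds the same scalar: $\E_{a\sim p}\sq*{(\fhat(x,a)-\fstar(x,a))^2}\geq\eps\,\tri*{w,Vw}=\eps\nrm{w}_V^2$. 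The conceptual content — and the reason the optimal design is the right exploration distribution — is precisely that the $V^{-1}$-norm bound converts the \emph{worst-case} prediction-error direction into the \emph{average} squared error that exploration actually pays for; $\nrm{w}_V$ is the bridge between the two.

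\textbf{Optimization and choice of $\eps$.} Combining the pieces, the DEC objective is at most $2\sqrt{C_{\opt}d}\,\nrm{w}_V - \gamma\eps\nrm{w}_V^2 + 2\eps$, which I would maximize over the free scalar $s=\nrm{w}_V\geq 0$ by elementary AM--GM on $2\sqrt{C_{\opt}d}\,s - \gamma\eps s^2$, obtaining $\tfrac{C_{\opt}d}{\gamma\eps}+2\eps$. Substituting $\eps=\sqrt{C_{\opt}d/4\gamma}\wedge 1$ gives $\bigoh(\sqrt{C_{\opt}d/\gamma})$ in the regime $\eps<1$; in the remaining regime $\eps=1$ one has $C_{\opt}d/\gamma>4$, so the trivial bound $\dec_\gamma\leq 2$ (the regret is at most $2$ and the penalty is nonnegative) already yields $\dec_\gamma\leq 2\leq\sqrt{C_{\opt}d/\gamma}$. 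I expect the crux step to be the only part needing real insight; the decomposition is standard bookkeeping and the final optimization is one-dimensional.
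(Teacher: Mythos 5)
Your proof is correct, and it certifies the same bound via the same core mechanism as the paper — the $C_{\opt}$-approximate design property entering through Cauchy--Schwarz in the $V$-geometry, balanced against the estimation penalty — but your bookkeeping is genuinely different and somewhat more streamlined. The paper decomposes the regret into three terms: the exploitation gap measured under $\ghat$ (bounded by $2\eps$), the on-distribution linear prediction error $\E_{a\sim p}\bigl[\langle \phi(x,a),\ghat(x)-\gstar(x)\rangle\bigr]$ (handled by a pointwise AM--GM, which costs an additive $\tfrac{1}{2\gamma}$), and the optimality-gap difference $\langle\phi(x,a^\star),\gstar(x)\rangle - \langle\phi(x,\ahat),\ghat(x)\rangle$, where the design property is invoked only at $a^\star$; the penalty $\gamma\,\E_{a\sim p}[(\cdot)^2]$ is then split in half to absorb two separate AM--GM applications. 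You instead compare $\fstar(x,a^\star)$ to $\fstar(x,\ahat)$ directly, apply the design bound at \emph{both} $a^\star$ and $\ahat$ (picking up a factor of 2 the paper avoids), and collapse everything onto the single scalar $s=\nrm{w}_{V}$, finishing with one one-dimensional quadratic maximization. This yields $\tfrac{C_{\opt}d}{\gamma\eps}+2\eps$ with no stray $\tfrac{1}{2\gamma}$ term and makes the conceptual bridge — worst-case directional error versus average squared error, mediated by $\nrm{w}_V$ — more explicit than in the paper's version. Both routes give $\bigoh(\sqrt{C_{\opt}d/\gamma})$ under the stated $\eps$, and your handling of the $\eps=1$ edge case (trivial bound $\dec_\gamma\leq 2 \leq \sqrt{C_{\opt}d/\gamma}$) matches the paper's.
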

Using \cref{prop:dec_greedy}, one can upper bound the first term in \cref{eq:decomposition} by $O(T\sqrt{C_{\opt}d/\gamma})$.
The regret bound in \cref{thm:greedy} follows by choosing $\gamma$ to balance the two terms.

\section{Efficient, Near-Optimal Algorithms}
\label{sec:minimax}

In this section we present \spannerIGW (\pref{alg:igw}), an efficient algorithm with $\widetilde O(\sqrt{T})$ regret (\cref{alg:igw}). We provide the algorithm and statistical guarantees in \cref{sec:minimax_statistical}, then discuss computational efficiency in \cref{sec:minimax_computational}.

\subsection{Algorithm and Statistical Guarantees}
\label{sec:minimax_statistical}

Building on the approach in \pref{sec:warmup}, \spannerIGW uses the idea of exploration with an optimal design. However, in order to achieve $\sqrt{T}$ regret, we combine optimal design with the \emph{inverse gap weighting} (IGW) technique. previously used in the finite-action contextual bandit setting \citep{abe1999associative, foster2020beyond}.

Recall that for finite-action contextual bandits, the inverse gap weighting technique works as follows. Given a context $x_t$ and estimator $\fhat_t$ from the regression oracle \sqalgtext, we assign a distribution to actions in $\cA$ via the rule
\begin{align*}
p_t(a) \ldef \frac{1}{\lambda + \gamma\cdot\prn*{\fhat_t(x_t,\ahat_t) - \fhat_t(x_t,a)}},
\end{align*}
where $\ahat_t\ldef\argmax_{a\in\cA}\fhat_t(x_t,a)$ and $\lambda>0$ is chosen such that $\sum_{a}p_t(a)=1$. This strategy certifies that $\dec_{\gamma}(\cF; \fhat_t,x_t)\leq\frac{\abs{\cA}}{\gamma}$, which leads to regret $\bigoh\prn[\big]{\sqrt{\abs{\cA}T\cdot\regsq(T)}}$. While this is essentially optimal for the finite-action setting, the linear dependence on $\abs{\cA}$ makes it unsuitable for the large-action setting we consider.

To lift the IGW strategy to the large-action setting, \pref{alg:igw} combines it with optimal design with respect to a \emph{reweighted embedding}. Let $\fhat\in\cF$ be given. For each action $a \in \cA$, we define a reweighted embedding via
\begin{align}
    \wb{\phi}(x, a) \ldef  \frac{\phi(x,a)}{\sqrt{1 + \eta \paren*{ \wh f(x,\wh a) - \wh f(x, a)}}}, \label{eq:reweighting}
\end{align}
where $\ahat\ldef{}\argmax_{a\in\cA}\fhat(x,a)$ and $\eta >0$ is a reweighting parameter to be tuned later. This reweighting is \emph{action-dependent} since $\wh f(x,a)$ term appears on the denominator. Within \pref{alg:igw}, we compute a new reweighted embedding at each round $t \in [T]$ using $\wh{f}_t = f_{\wh{g}_t}$, the output of the regression oracle \sqalgtext.

\pref{alg:igw} proceeds by computing an optimal design $\des_t \in \Delta(\cA)$ with respect to the reweighted embedding defined in \cref{eq:reweighting}. The algorithm then creates a distribution $q_t \ldef \frac{1}{2} \des_t + \frac{1}{2} \indic_{\wh a_t}$ by mixing the optimal design with a delta mass at the greedy action $\wh a_t$. Finally, in \pref{eq:igw_reweight}, the algorithm computes an augmented version of the inverse gap weighting distribution by reweighting according to $q_t$. This approach certifies the following bound on the \dectext.
\begin{restatable}{lemma}{propDecIGW}
    \label{prop:dec_igw}
For any $\gamma>0$, by setting $\eta = \gamma/ (C_{\opt}\cdot d)$, the exploration strategy used in \cref{alg:igw} certifies that $\dec_\gamma(\cF) = O \prn{{C_{\opt} \cdot d}/{\gamma}}$.
\end{restatable}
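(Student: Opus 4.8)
The plan is to exhibit, for a fixed context $x$ and a fixed $\wh f\in\conv(\cF)$, the exact distribution $p\in\Delta(\cA)$ that \pref{alg:igw} plays, and to verify directly that the inner objective of \pref{eq:dec} is at most $O(C_{\opt}\cdot d/\gamma)$ uniformly over $a^\star\in\cA$ and $f^\star\in\cF$. Since this $p$ depends only on $(\wh f,x)$, such a bound certifies $\dec_\gamma(\cF;\wh f,x)\le O(C_{\opt}\cdot d/\gamma)$, and taking the supremum over $(\wh f,x)$ certifies the claim. The starting point is that every element of $\conv(\cF)$ is linear in the embedding: writing $\wh f(x,a)=\langle\phi(x,a),\wb g\rangle$ and $f^\star(x,a)=\langle\phi(x,a),g^\star\rangle$ and setting $\Delta\ldef\wb g-g^\star$, the prediction error is $\wh f(x,a)-f^\star(x,a)=\langle\phi(x,a),\Delta\rangle$. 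With the estimated gap $\delta_a\ldef\wh f(x,\ahat)-\wh f(x,a)\ge 0$, the definition \pref{eq:reweighting} of the reweighted embedding yields the exact algebraic identity
\begin{align*}
\prn*{\wh f(x,a)-f^\star(x,a)}^2 = (1+\eta\delta_a)\,\langle\phibar(x,a),\Delta\rangle^2 ,
\end{align*}
which is the bridge that lets an optimal design on $\{\phibar(x,a)\}_{a\in\cA}$ speak about squared prediction errors.

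The heart of the argument is a transfer inequality furnished by the optimal design. Because $\phibar(x,a^\star)$ belongs to the set for which $\des_t$ is a $C_{\opt}$-approximate G-optimal design, \pref{def:g_opt_design} gives $\|\phibar(x,a^\star)\|_{V(\des_t)^{-1}}^2\le C_{\opt}\cdot d$, and Cauchy--Schwarz in the $V(\des_t)$-geometry (followed by $q_t=\tfrac12\des_t+\tfrac12\indic_{\ahat}$ and discarding the nonnegative greedy term) yields
\begin{align*}
\langle\phibar(x,a^\star),\Delta\rangle^2
\;\le\; C_{\opt}\cdot d\cdot\E_{a\sim\des_t}\langle\phibar(x,a),\Delta\rangle^2
\;\le\; 2\,C_{\opt}\cdot d\cdot\E_{a\sim q_t}\langle\phibar(x,a),\Delta\rangle^2 .
\end{align*}
The augmented IGW distribution \pref{eq:igw_reweight} has the form $p(a)\propto q_t(a)/(1+\eta\delta_a)$ with normalizer $Z\ldef\sum_a q_t(a)/(1+\eta\delta_a)\in[\tfrac12,1]$ (the lower bound because $\delta_{\ahat}=0$ and $q_t(\ahat)\ge\tfrac12$). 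The weight $1/(1+\eta\delta_a)$ cancels the factor $(1+\eta\delta_a)$ in the identity above, so $\E_{a\sim q_t}\langle\phibar(x,a),\Delta\rangle^2=Z\cdot\E_{a\sim p}[(\wh f-f^\star)^2]$, and the squared error at the \emph{unknown} best action is thereby controlled on-policy: $\langle\phibar(x,a^\star),\Delta\rangle^2\le 2C_{\opt}dZ\cdot\E_{a\sim p}[(\wh f-f^\star)^2]$.

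It then remains to decompose the instantaneous regret and collect terms. Using only $\wh f(x,a^\star)\le\wh f(x,\ahat)$,
\begin{align*}
\E_{a\sim p}\sq*{f^\star(x,a^\star)-f^\star(x,a)}
= \underbrace{\prn*{f^\star-\wh f}(x,a^\star)}_{\mathrm{T1}}
\underbrace{-\,\delta_{a^\star}}_{\mathrm{T2}}
+ \underbrace{\E_{a\sim p}\sq*{\delta_a}}_{\mathrm{T3}}
+ \underbrace{\E_{a\sim p}\sq*{\prn*{\wh f-f^\star}(x,a)}}_{\mathrm{T4}} .
\end{align*}
Term $\mathrm{T3}\le 1/(Z\eta)$ since $\delta_a/(1+\eta\delta_a)\le 1/\eta$; term $\mathrm{T4}$ is handled by AM--GM as $\tfrac{\gamma}{2}\E_p[(\wh f-f^\star)^2]+\tfrac1{2\gamma}$; and writing $\mathrm{T1}=-\sqrt{1+\eta\delta_{a^\star}}\,\langle\phibar(x,a^\star),\Delta\rangle$ and applying AM--GM with parameter $c=2/\eta$ makes the resulting $\tfrac{c\eta}{2}\delta_{a^\star}$ exactly cancel $\mathrm{T2}=-\delta_{a^\star}$, leaving $\tfrac c2+\tfrac1{2c}\langle\phibar(x,a^\star),\Delta\rangle^2$; the transfer inequality turns the second piece into a $\tfrac{Z\gamma}{2}\cdot\E_p[(\wh f-f^\star)^2]$ contribution. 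Substituting $\eta=\gamma/(C_{\opt}d)$ and $c=2C_{\opt}d/\gamma$, the coefficients multiplying $\E_p[(\wh f-f^\star)^2]$ sum to at most $\gamma$ while the stray constants sum to $O(C_{\opt}d/\gamma)$. Rearranged, this reads $\E_{a\sim p}[f^\star(x,a^\star)-f^\star(x,a)-\gamma(\wh f-f^\star)^2]\le O(C_{\opt}d/\gamma)$ for every $a^\star$ and $f^\star$, which (as $p$ depends only on $\wh f,x$) is exactly the asserted certificate.

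The step I expect to be the crux is the interlocking of the two mechanisms under a single parameter $\eta$: the reweighting \pref{eq:reweighting} must be tuned so that the design variance $\E_{q_t}\langle\phibar,\Delta\rangle^2$ and the on-policy squared error $\E_p[(\wh f-f^\star)^2]$ coincide up to the bounded constant $Z$, while \emph{simultaneously} the $\sqrt{1+\eta\delta_{a^\star}}$ amplification of the best-action estimation error in $\mathrm{T1}$ is absorbed by the negative greedy term $\mathrm{T2}$ at the same $\eta$. Verifying that one choice of $(\eta,c)$ balances the regret-versus-estimation tradeoff across all of $\mathrm{T1}$--$\mathrm{T4}$ at once, and that $Z$ stays bounded away from $0$ and $\infty$, is where the algorithm's design does the real work; the remaining steps are routine AM--GM bookkeeping.
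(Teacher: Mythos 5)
Your argument is, at its core, the paper's own: your decomposition into $\mathrm{T1}$--$\mathrm{T4}$ matches \cref{eq:dec_igw_decompostion} term for term ($\mathrm{T3}$ is the paper's first term, $\mathrm{T4}$ its second, $\mathrm{T1}$ its third, $\mathrm{T2}$ its fourth), and the three mechanisms---the IGW denominator cancelling the estimated gap in $\mathrm{T3}$, AM--GM on the on-policy error in $\mathrm{T4}$, and the $C_{\opt}$-optimal-design property of $\des_t$ for $\wb{\phi}$ transferred onto the played distribution so that the $(1+\eta\delta_{a^\star})$ amplification in $\mathrm{T1}$ is absorbed by $\mathrm{T2}$ once $\eta = \gamma/(C_{\opt}\cdot d)$---are exactly the ones the paper uses. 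The only cosmetic difference is that the paper runs the $a^\star$ step through matrix inequalities ($V(p) \succeq \frac{1}{2}\wb{V}(\des)$, hence $V(p)^{-1} \preceq 2\wb{V}(\des)^{-1}$), while you run it scalar-wise through the identity $(\wh{f}-f^\star)^2(x,a) = (1+\eta\delta_a)\ang{\wb{\phi}(x,a),\Delta}^2$ and Cauchy--Schwarz with respect to $\wb{V}(\des)$; these are equivalent.

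There is, however, one genuine error, though it is easily repaired: you misread the sampling rule. \cref{eq:igw_reweight} normalizes \emph{additively}---$p_t(a) = q_t(a)/(\lambda + \eta\delta_a)$ with $\lambda \in [\frac{1}{2},1]$ chosen so the masses sum to one (\cref{prop:lambda_range})---not multiplicatively as $p(a) = q_t(a)/(Z(1+\eta\delta_a))$. These are different distributions: with two actions, $q_t = (\frac{1}{2},\frac{1}{2})$ and gaps satisfying $\eta\delta_1=0$, $\eta\delta_2=1$, the former gives $p = (\frac{1}{\sqrt{2}},\, 1-\frac{1}{\sqrt{2}})$ while the latter gives $(\frac{2}{3},\frac{1}{3})$. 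Because the lemma asserts that the strategy \emph{used in \cref{alg:igw}} certifies the bound---which is what \cref{lm:regret_decomp} requires in order to yield \cref{thm:igw}---analyzing a different distribution does not literally prove the statement, even though it does certify the displayed inequality $\dec_\gamma(\cF) = O(C_{\opt}\cdot d/\gamma)$, the DEC being an infimum over $p$. The repair consists of two substitutions, after which your argument goes through verbatim for the algorithm's actual rule: since $\lambda \leq 1$, we have $(1+\eta\delta_a)/(\lambda+\eta\delta_a) \geq 1$, so $\E_{a \sim q_t}\ang{\wb{\phi}(x,a),\Delta}^2 \leq \E_{a\sim p_t}\brk{(\wh{f}-f^\star)^2}$ (your transfer inequality, now without the factor $Z$); and since $\lambda > 0$, we have $\delta_a/(\lambda+\eta\delta_a) \leq 1/\eta$, so $\mathrm{T3} \leq 1/\eta = C_{\opt}\cdot d/\gamma$.
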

This lemma shows that the reweighted IGW strategy enjoys the best of both worlds: By leveraging optimal design, we ensure good coverage for all actions, leading to $\bigoh(d)$ (rather than $\bigoh(\abs{\cA})$) scaling, and by leveraging inverse gap weighting, we avoid excessive exploration, leading $\bigoh(1/\gamma)$ rather than $\bigoh(1/\sqrt{\gamma})$ scaling. Combining this result with \pref{lm:regret_decomp} leads to our main regret bound for \spannerIGW.

\begin{algorithm}[H]
	\caption{\spannerIGW}
	\label{alg:igw} 
	\renewcommand{\algorithmicrequire}{\textbf{Input:}}
	\renewcommand{\algorithmicensure}{\textbf{Output:}}
	\newcommand{\algorithmicbreak}{\textbf{break}}
    \newcommand{\BREAK}{\STATE \algorithmicbreak}
	\begin{algorithmic}[1]
		\REQUIRE Exploration parameter $\gamma > 0$, online regression
                oracle \sqalgtext, action optimization oracle \optalgtext.
                \STATE Define $\eta\ldef{}\frac{\gamma}{C_{\opt}\cdot{}d}$.
\FOR{$t = 1, 2, \dots, T$}
		\STATE Observe context $x_t$.
		\STATE Receive $\widehat f_t = f_{\widehat g_t}$ from regression oracle \sqalgtext.
		\STATE Get $\widehat a_t \gets \argmax_{a \in \cA} \ang*{\phi(x_t, a), \widehat g_t(x_t)}$.
	\STATE Call subroutine to compute $C_{\opt}$-approximate optimal design $\des_t \in \Delta(\cA)$ for reweighted embedding $\curly*{\wb \phi(x_t, a)}_{a\in\cA}$ (\pref{eq:reweighting} with $\wh f = \wh f_t$). 
\hfill \algcommentlight{See \cref{alg:barycentric_reweight} for efficient solver.}
                  \STATE Define $q_t \ldef \frac{1}{2} \des_t + \frac{1}{2} \indic_{\wh a_t}$.
\STATE For each $a \in \supp(q_t)$, define \begin{align}
		    p_t(a) \ldef \frac{q_t(a)}{\lambda + \eta \prn*{ \wh f_t(x_t,\wh a_t) - \wh f_t(x_t, a)}}, \label{eq:igw_reweight}
		\end{align}
                where $\lambda \in [\frac{1}{2}, 1]$ is chosen so that $\sum_{a \in \supp(q_t)} p_t(a) = 1$. 
		 \STATE Sample $a_t \sim p_t$ and observe reward $r_t(a_t)$.
		 \STATE Update \sqalgtext with $(x_t, a_t, r_t(a_t))$.
\ENDFOR
	\end{algorithmic}
\end{algorithm}

\begin{restatable}{theorem}{thmIGW}
  \label{thm:igw}
  Let $\delta\in(0,1)$ be given. With a $C_{\opt}$-approximate optimal design subroutine and an appropriate choice for $\gamma>0$, \cref{alg:igw} ensures that with probability at least $1-\delta$, 
\begin{align*}
\regcb(T) & = \bigoh \prn*{ \,{\sqrt{C_{\opt} \cdot d \,T \, \prn[\big]{\regsq(T)+ \log( \delta^{-1})}}} }.
\end{align*}	
In particular, when invoked with \cref{alg:barycentric_reweight} {(with $C= 2$)} as a subroutine, the algorithm has
\begin{align*}
    \regcb(T) = \bigoh \prn*{d \, \sqrt{T \, \prn[\big]{\regsq(T)+ \log(\delta^{-1})}}},
\end{align*}
and has per-round runtime $O(\Tsq + \prn{\Topt \cdot d^{3} +d^4} \cdot \log^{2} \prn[\big]{ \frac{T}{r}})$ and the maximum memory $O(\Msq + \Mopt + d^2 + d \log \prn[\big]{\frac{T}{r}} )$.
\end{restatable}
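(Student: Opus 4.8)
The plan is to reduce everything to the Decision-Estimation Coefficient recipe already set up for the warm-up, handling the statistical and computational claims in turn.

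For the regret bound I would invoke the high-probability form of the regret decomposition \pref{lm:regret_decomp}, which states that for any $\gamma>0$ the realized regret of any strategy certifying a DEC bound $\wb{\dec}_\gamma(\cF)$ satisfies $\regcb(T) = O(T\cdot\wb{\dec}_\gamma(\cF) + \gamma(\regsq(T)+\log(\delta^{-1})))$ with probability at least $1-\delta$ (the additive $\log(\delta^{-1})$ coming from the Freedman-type martingale concentration internal to that lemma). \pref{prop:dec_igw} supplies the certificate: with $\eta = \gamma/(C_{\opt} d)$ --- exactly the value hard-coded on line~1 of \pref{alg:igw} --- the reweighted IGW strategy certifies $\wb{\dec}_\gamma(\cF) = O(C_{\opt} d/\gamma)$. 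Substituting gives $\regcb(T) \lesssim T C_{\opt} d/\gamma + \gamma(\regsq(T)+\log(\delta^{-1}))$, and choosing $\gamma = \sqrt{T C_{\opt} d/(\regsq(T)+\log(\delta^{-1}))}$ to balance the two terms yields the first bound $O(\sqrt{C_{\opt} d\,T(\regsq(T)+\log(\delta^{-1}))})$. To get the explicit rate I would instantiate the abstract design subroutine with the barycentric spanner: \pref{prop:barycentric_approx_opt_design} turns a $C$-approximate spanner into a $(C^2 d)$-approximate design, so $C=2$ gives $C_{\opt}=4d$, collapsing $\sqrt{C_{\opt} d}$ to $O(d)$ and producing $O(d\sqrt{T(\regsq(T)+\log(\delta^{-1}))})$.

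The computational claim is where the real work lies. Per round I would charge (i) one query to \sqalgtext, at cost $\Tsq$; (ii) one greedy maximization via \optalgtext, at cost $\Topt$; (iii) sampling $a_t\sim p_t$ and the $O(d)$-size bookkeeping for \pref{eq:igw_reweight}; and, dominating everything, (iv) computing a $C$-approximate optimal design for the \emph{reweighted} embedding $\{\wb\phi(x_t,a)\}$. The obstacle is that \optalgtext only optimizes linear functionals against the raw embedding $\phi$, whereas \pref{eq:reweighting} inserts an action-dependent denominator $\sqrt{1+\eta(\wh f_t(x_t,\wh a_t)-\wh f_t(x_t,a))}$, so maximizing $\langle\wb\phi(x_t,a),\theta\rangle$ over $a$ is a linear-fractional rather than a linear problem.

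Because $\wh f_t(x_t,a)=\langle\phi(x_t,a),\wh g_t(x_t)\rangle$, both the numerator $\langle\phi(x_t,a),\theta\rangle$ and the quantity under the square root are linear in $\phi(x_t,a)$, so I would reduce the fractional optimization to a one-parameter search in the spirit of Dinkelbach's method: the maximizer coincides, for an appropriate multiplier $\mu\ge0$, with $\argmax_a\langle\phi(x_t,a),\theta+\mu\,\eta\,\wh g_t(x_t)\rangle$, a single raw-oracle call, and binary-searching $\mu$ to resolution $\sim r/T$ pins it down with $O(\log(T/r))$ calls. Feeding this fractional oracle into the spanner construction of \cite{awerbuch2008online} (restated as \pref{alg:barycentric_reweight}), which performs $O(d^2\log d)$ optimization calls together with $O(d^3)$ determinant/linear-algebra work per step, then gives per-round runtime $O(\Tsq+(\Topt d^3+d^4)\log^2(T/r))$ and memory $O(\Msq+\Mopt+d^2+d\log(T/r))$. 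I expect the main technical hurdle to be verifying that this parametric reduction is correct and that its discretization error neither breaks feasibility (positivity of the denominator) nor degrades the spanner's approximation factor, since any slack there feeds back into $C_{\opt}$ and hence the regret; the regret derivation itself is essentially plug-and-play once \pref{prop:dec_igw} is available.
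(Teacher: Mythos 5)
Your statistical argument is exactly the paper's: combine \pref{lm:regret_decomp} with \pref{prop:dec_igw}, tune $\gamma$ to balance the two terms, and instantiate $C_{\opt}=4d$ via \pref{prop:barycentric_approx_opt_design} with $C=2$; that half is fine. The gap is in the computational half, and it sits precisely at the step you yourself flag as the main hurdle but never resolve. Your Dinkelbach-style reduction does not go through as stated: Dinkelbach's method needs to evaluate (or at least sign-test) $F(\mu)=\max_a\{N(a)-\mu D(a)\}$, but here the numerator $N(a)=\ang{\phi(x,a),\theta}^2$ is \emph{quadratic} in $\phi(x,a)$, so $N-\mu D$ is not a linear functional of the embedding and cannot be maximized by a single call to \optalgtext. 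Relatedly, your binary search over $\mu$ has no monotone predicate to drive it---you never say what is tested at each step---and restricting to $\mu\ge 0$ silently discards the branch where the optimal multiplier is negative (the maximizer of $\abs{\ang{\wb\phi(x,a),\theta}}$ may have $\ang{\phi(x,a),\theta}<0$; the paper's grid contains both signs). Finally, an additive resolution of order $r/T$ is the wrong scale: inside the spanner routine the objective being maximized is a determinant, which is only bounded below by $\wb r^{\,d}$, exponentially small in $d$; this is exactly why the paper's grid has $O(d\log((2\eta+1)/r))$ points and why a $d^3$ (rather than $d^2\log d$) factor appears in the stated runtime.

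The paper's fix (\pref{alg:igw_argmax}, analyzed in \pref{prop:grid_search}) is to linearize via the variational identity $X^2/Y^2=\sup_{\eps\in\R}\crl{2\eps X-\eps^2Y^2}$: for each \emph{fixed} $\eps$ the inner expression is linear in $\phi(x,a)$, hence solvable by one \optalgtext call, and one runs a geometric grid $\cE=\crl{\pm(3/4)^i}_{i\leq N}$ with $N=O(d\log((2\eta+1)/r))$ over both signs. A robustness lemma (\pref{lm:reweighted_opt_approx}) shows that any grid point within a multiplicative factor $[3/4,1]$ of the unknown optimal $\eps^\star$, with the correct sign, yields a $\tfrac12$-approximation of $\iota(a^\star)$. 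The slack you worry about is then absorbed inside \rwalgtext by the modified acceptance threshold $\tfrac{\sqrt2 C}{2}\abs{\det(\wb\phi(x,\cS))}$ (this is why the algorithm requires $C>\sqrt2$): each accepted update still multiplies the determinant by a constant factor greater than $1$, which bounds the number of while-loop iterations by $O(d\log(e\vee\tfrac{\eta}{r}))$, and upon termination the set is a genuine $C$-approximate spanner despite the $\tfrac{\sqrt2}{2}$-approximate oracle, so $C_{\opt}=4d$ is unaffected and nothing feeds back into the regret bound. Without these pieces---a sound search procedure replacing the unjustified binary search, and the threshold/termination argument---your computational claim is unsupported.
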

\pref{alg:igw} is the first computationally efficient algorithm with $\sqrt{T}$-regret for contextual bandits with
general function approximation and linearly structured action
spaces. In what follows, we show how to leverage the action optimization oracle (\pref{def:action_oracle}) to achieve this efficiency.

\subsection{Computational Efficiency}
\label{sec:minimax_computational}
The computational efficiency of \pref{alg:igw} hinges on the ability to efficiently compute an optimal design. As with \pref{alg:greedy}, we address this issue by appealing to the notion of a barycentric spanner, which serves as an approximate optimal design. However, compared to \pref{alg:greedy}, a substantial additional challenge is that \pref{alg:igw} requires an approximate optimal design for the \emph{reweighted} embeddings. Since the reweighting is action-dependent, the action optimization oracle \optalgtext cannot be directly applied to optimize over the reweighted embeddings, which prevents us from appealing to an out-of-the-box solver (\pref{alg:barycentric}) in the same fashion as the prequel.

\begin{algorithm}[H]
\caption{\textsf{ReweightedSpanner}}
	\label{alg:barycentric_reweight} 
	\renewcommand{\algorithmicrequire}{\textbf{Input:}}
	\renewcommand{\algorithmicensure}{\textbf{Output:}}
	\newcommand{\algorithmicbreak}{\textbf{break}}
	\newcommand{\BREAK}{\STATE \algorithmicbreak}
\begin{algorithmic}[1]
		\REQUIRE Context $x \in \cX$, oracle prediction $\wh g(x) \in \R^d$, 
		{action $\wh a \ldef \argmax_{a \in \cA} \ang{\phi(x,a), \wh g(x)}$,}
		reweighting parameter $\eta > 0$, approximation factor $C > \sqrt{2}$, initial set $\cS = \prn{a_1, \dots, a_d}$ with $\abs{\det(\phi(x, \cS))} \geq r^d$ for $r \in (0, 1)$.
		\WHILE{not break}
		\FOR{$i = 1, \dots, d$}
		\STATE Compute $\theta \in \R^d$ representing linear function $\wb \phi(x,a) \mapsto \det(\wb \phi(x,\cS_i(a)))$, where $\cS_i(a) \ldef (a_1, \ldots,a_{i-1}, a, a_{i+1}, \ldots, a_d)$.
\hfill\algcommentlight{$\wb \phi$ is computed from $f_{\ghat}$, $\ahat$, and $\eta$ via \cref{eq:reweighting}.}
		\STATE Get $a \gets \argmaxIGW(\theta; x, \wh g(x), \eta, r)$. 
                \hfill\algcommentlight{\cref{alg:igw_argmax}.}
		\IF{\mbox{$\abs*{\det( \wb \phi(x, \cS_{i}(a)))} \geq \frac{\sqrt{2}C}{2} \abs*{\det(\wb \phi(x, \cS))}$}}
		\STATE Update $a_i \gets a$.\STATE \textbf{continue} to line 2.
		\ENDIF
		\ENDFOR
		\BREAK
		\ENDWHILE
		\RETURN $C$-approximate barycentric spanner $\cS$.
	\end{algorithmic}
\end{algorithm}

To address the challenges above, we introduce \rwalgtext (\pref{alg:barycentric_reweight}), a barycentric spanner computation algorithm which is tailored to the reweighted embedding $\phibar$. 
{To describe the algorithm, let us introduce some additional notation. For a set $\cS\subseteq\cA$ of $d$ actions, we let $\det(\wb \phi(x, \cS))$ denote the determinant of the $d$-by-$d$ matrix whose columns are $\crl*{\phibar(x,a)}_{a\in\cA}$. \rwalgtext adapts the barycentric spanner computation approach of \citet{awerbuch2008online}, which aims to identify a subset $\cS\subseteq\cA$ with $\abs{\cS}=d$ that approximately maximizes $\abs{\det(\wb \phi(x, \cS))}$.}
The key feature of \rwalgtext is a subroutine, \argmaxIGW (\cref{alg:igw_argmax}), which implements an (approximate) action optimization oracle for the reweighted embedding:
\begin{equation}
  \argmax_{a\in\cA}\tri*{\phibar(x,a),\theta}.\label{eq:reweighted_argmax}
\end{equation}
\argmaxIGW uses line search reduce the problem in \pref{eq:reweighted_argmax} to a sequence of linear optimization problems with respect to the \emph{unweighted} embeddings, each of which can be solved using \optalgtext. This yields the following guarantee for \pref{alg:barycentric_reweight}.

\begin{restatable}{theorem}{thmBarycentricReweight}
	\label{thm:barycentric_reweight}
	Suppose that \pref{alg:barycentric_reweight} is invoked with parameters $\eta > 0$, $r\in(0,1)$, and $C > \sqrt{2}$, and that the initialization set $\cS$ satisfies $\abs{\det(\phi(x,\cS))} \geq r^d$. Then the algorithm returns a $C$-approximate barycentric spanner with respect to the reweighted embedding set $\crl*{\wb \phi(x,a)}_{a\in\cA}$, and does so with 
	$O(\prn{\Topt \cdot d^{3} +d^4} \cdot \log^{2} (e \vee \frac{\eta}{r}))$ runtime and $O(\Mopt + d^2 + d \log (e \vee \frac{\eta}{r}) )$ memory.	
      \end{restatable}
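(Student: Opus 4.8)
I would prove three things about \rwalgtext: (i) the set $\cS$ it returns is a genuine $C$-approximate barycentric spanner for the reweighted embeddings $\crl{\phibar(x,a)}_{a\in\cA}$; (ii) the algorithm terminates after a controlled number of swaps; and (iii) the accumulated per-swap work matches the stated runtime and memory. Everything rests on one auxiliary fact that I would isolate as a separate lemma first: that \argmaxIGW (\pref{alg:igw_argmax}) is a $\sqrt2$-multiplicative approximate solver for the reweighted problem in \pref{eq:reweighted_argmax}, using $O(d\log(e\vee\tfrac{\eta}{r}))$ calls to \optalgtext. This is the crux, since the action-dependent reweighting makes \pref{eq:reweighted_argmax} nonlinear and so it cannot be passed directly to \optalgtext.

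\textbf{Correctness.} This adapts the swap argument of \cite{awerbuch2008online} to $\phibar$. Since $\cS=(a_1,\dots,a_d)$ is a basis, Cramer's rule writes any $\phibar(x,a)=\sum_i\alpha_i\phibar(x,a_i)$ with $\abs{\alpha_i}=\abs{\det(\phibar(x,\cS_i(a)))}/\abs{\det(\phibar(x,\cS))}$, so it suffices to show that at termination $\abs{\det(\phibar(x,\cS_i(a)))}<C\cdot\abs{\det(\phibar(x,\cS))}$ for every $i$ and every $a\in\cA$. Termination means a full pass made no swap, so for each $i$ the point $a$ returned by \argmaxIGW failed the line-6 test, i.e.\ $\abs{\det(\phibar(x,\cS_i(a)))}<\tfrac{\sqrt2 C}{2}\abs{\det(\phibar(x,\cS))}$. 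Because line 3 constructs $\theta$ so that $\phibar(x,a')\mapsto\det(\phibar(x,\cS_i(a')))$ is exactly the linear map $\tri{\phibar(x,a'),\theta}$, and \argmaxIGW returns a $\sqrt2$-approximate maximizer of it, the true maximum over $a'\in\cA$ is at most $\sqrt2$ times the returned value, hence $<\tfrac{\sqrt2\cdot\sqrt2\, C}{2}\abs{\det(\phibar(x,\cS))}=C\abs{\det(\phibar(x,\cS))}$, giving $\abs{\alpha_i}<C$. The thresholds compose exactly: the $\sqrt2$-approximation and the $\tfrac{\sqrt2 C}{2}$ test multiply to $C$, while each accepted swap improves the determinant by a factor $\tfrac{\sqrt2 C}{2}=C/\sqrt2>1$, which is precisely what the hypothesis $C>\sqrt2$ provides.

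\textbf{Termination and swap count.} I would use $\abs{\det(\phibar(x,\cS))}$ as a potential. Each accepted swap multiplies it by at least $C/\sqrt2>1$. For a lower bound on the starting value, the reweighting only shrinks columns: $\phibar(x,a)=\phi(x,a)/\sqrt{1+\eta(\fhat(x,\ahat)-\fhat(x,a))}$ with denominator in $[1,\sqrt{1+2\eta}]$ since $\fhat(x,\ahat)-\fhat(x,a)\in[0,2]$, so $\abs{\det(\phibar(x,\cS))}\ge\abs{\det(\phi(x,\cS))}/(1+2\eta)^{d/2}\ge r^d/(1+2\eta)^{d/2}$ by the initialization hypothesis. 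For an upper bound, $\nrm{\phibar(x,a)}\le\nrm{\phi(x,a)}\le1$, so Hadamard's inequality gives $\abs{\det(\phibar(x,\cS))}\le1$ throughout. Comparing the two, the number of swaps is $N=O\paren{\tfrac{d}{2}\log(1+2\eta)+d\log(1/r)}=O(d\log(e\vee\tfrac{\eta}{r}))$ for fixed $C>\sqrt2$. Since the loop restarts at $i=1$ after each swap, at most $d$ for-loop bodies run between consecutive swaps (plus one final clean pass), so the total number of for-loop iterations is $O(dN)=O(d^2\log(e\vee\tfrac{\eta}{r}))$.

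\textbf{Runtime and memory.} Each for-loop body forms the cofactor vector $\theta$ (line 3) and calls \argmaxIGW (line 5), followed by an $O(d^2)$ determinant test. Forming $\theta$ is a cofactor/adjugate computation; by maintaining the inverse of the current matrix $\phibar(x,\cS)$ under the rank-one column swaps, this costs $O(d^2)$ per body rather than $O(d^3)$. Invoking the \argmaxIGW guarantee, each call uses $O(d\log(e\vee\tfrac{\eta}{r}))$ queries to \optalgtext, each costing $\Topt$ plus $O(d)$ to assemble its input vector. Multiplying the per-body cost by the $O(d^2\log(e\vee\tfrac{\eta}{r}))$ bodies yields total runtime $O\paren{(\Topt\cdot d^3+d^4)\log^2(e\vee\tfrac{\eta}{r})}$. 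Memory is dominated by the $d\times d$ matrix and its inverse ($O(d^2)$), the oracle's own footprint $\Mopt$, and the $O(d\log(e\vee\tfrac{\eta}{r}))$ line-search bookkeeping inside \argmaxIGW, giving $O(\Mopt+d^2+d\log(e\vee\tfrac{\eta}{r}))$.

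\textbf{Main obstacle.} The genuinely nontrivial ingredient is the \argmaxIGW lemma underlying all of the above: reducing the nonlinear, action-dependent objective $\tri{\phibar(x,a),\theta}=\tri{\phi(x,a),\theta}/\sqrt{1+\eta(\fhat(x,\ahat)-\fhat(x,a))}$ to \optalgtext calls on the \emph{unweighted} embedding while controlling the approximation factor to exactly $\sqrt2$. I would handle it by a line search over the gap $\fhat(x,\ahat)-\fhat(x,a)$: freezing the gap to a dyadic level makes the denominator constant and the residual problem linear in $\phi(x,a)$ (a Lagrangian combination of $\theta$ and $\ghat(x)$, solvable by \optalgtext), and $O(\log(e\vee\tfrac{\eta}{r}))$ levels keep the denominator within $\sqrt2$. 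The delicate points are verifying that the best level certifies a $\sqrt2$-approximation to the true reweighted maximum, and pinning the exact oracle-call count as a function of $d$, $\eta$, and $r$ — this is where the input parameter $r$ and the precise $d$ and $\log$ powers enter.
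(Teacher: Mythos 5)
Your outer scaffolding matches the paper's proof essentially step for step: the Cramer's-rule/swap argument for correctness (the paper cites Observation 2.3 of \citet{awerbuch2008online} for exactly this), the determinant potential with lower bound $r^d/(1+2\eta)^{d/2}$ from the initialization and the reweighting shrinkage, the Hadamard upper bound of $1$, the resulting $O(d^2\log(e\vee\frac{\eta}{r}))$ count of for-loop bodies, and the runtime/memory accounting. The gap is precisely in the place you flag as the ``main obstacle,'' and it is a genuine one: the mechanism you sketch for the \argmaxIGW lemma does not work. You propose to bucket the gap $\fhat(x,\ahat)-\fhat(x,a)$ into dyadic levels and, within each level, solve ``a Lagrangian combination of $\theta$ and $\ghat(x)$.'' But the level-restricted problem is a \emph{constrained} maximization, $\max\crl{\tri{\phi(x,a),\theta} : a\in\cA,\; \fhat(x,\ahat)-\fhat(x,a)\in\text{bucket}}$, which \optalgtext cannot solve: it only maximizes unconstrained linear functionals over $\cA$, and since $\cA$ is an arbitrary (possibly discrete) set, a Lagrangian relaxation has a duality gap --- its maximizer can land in a completely different bucket --- so ``the best level certifies a $\sqrt{2}$-approximation'' does not follow. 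A symptom of the mismatch: bucketing the denominator, which ranges only over $[1,\sqrt{1+2\eta}]$, would need a number of levels independent of $d$ and $r$, inconsistent with the $O(d\log(e\vee\frac{\eta}{r}))$ oracle calls per invocation that your own (correct) runtime accounting uses.

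The paper's mechanism (\cref{prop:grid_search}, \cref{lm:reweighted_opt_approx}) is different and avoids constraints entirely. Writing $\iota(a)=X(a)^2/Y(a)^2$ with $X(a)=\tri{\phi(x,a),\theta}$ and $Y(a)^2=1+\eta\tri{\phi(x,\ahat)-\phi(x,a),\ghat(x)}$, it uses the variational identity $X^2/Y^2=\sup_{\eps\in\bbR}\crl{2\eps X-\eps^2Y^2}$. For each \emph{fixed} $\eps$, the objective $2\eps X(a)-\eps^2Y(a)^2$ is an unconstrained linear functional of $\phi(x,a)$ (both $X$ and $Y^2$ are affine in $\phi(x,a)$), so each grid point costs one call to \optalgtext with parameter $2\eps\theta+\eps^2\eta\,\ghat(x)$. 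Soundness is automatic --- for every $\eps$ and every returned action $a$ one has $\iota(a)\geq 2\eps X(a)-\eps^2Y(a)^2$ --- and completeness only requires some grid point within a factor $3/4$ of $\eps^\star = X(a^\star)/Y(a^\star)^2$, with matching sign. Finally, since $\abs{\eps^\star}$ can be as small as $\sqrt{\iota(a^\star)}/\sqrt{1+2\eta}$ and the determinant invariant only guarantees $\sqrt{\iota(a^\star)}\geq \wb{r}^{\,d}$ with $\wb{r}=r/\sqrt{1+2\eta}$, the grid must extend down to a value exponentially small in $d$; this is exactly where the factor $d$ in the per-call count $O(d\log(e\vee\frac{\eta}{r}))$ comes from, i.e., the link between your determinant invariant and the subroutine's cost that your sketch asserts but never establishes. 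Without this lemma proved by the variational route (or an equivalent), the composition of the $\sqrt{2}$-approximation with the $\frac{\sqrt{2}C}{2}$ test --- and hence the entire theorem --- is unsupported.
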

We refer to \pref{app:argmax_reweight} for self-contained analysis of \argmaxIGW.

\begin{algorithm}[]
  \caption{\argmaxIGW}
  \label{alg:igw_argmax} 
  \renewcommand{\algorithmicrequire}{\textbf{Input:}}
  \renewcommand{\algorithmicensure}{\textbf{Output:}}
  \newcommand{\algorithmicbreak}{\textbf{break}}
  \newcommand{\BREAK}{\STATE \algorithmicbreak}
  \begin{algorithmic}[1]
	  \REQUIRE Linear parameter $\theta \in \R^{d}$, context $x \in \cX$, oracle prediction $\wh g(x) \in \R^d$, reweighting parameter $\eta >0$, {initialization} constant $r \in (0,1)$. 
	  \STATE Define $N \ldef \ceil{d \log_{\frac{4}{3}} \prn{\frac{2\eta + 1}{r}}}$.
\STATE Define \mbox{$\cE \ldef  \crl{\prn{\frac{3}{4}}^i}_{i=1}^N \cup \crl{- \prn{\frac{3}{4}}^i}_{i=1}^N$}.
	  \STATE Initialize $\wh \cA = \emptyset$.
	  \FOR{each $\eps \in \cE$}
	  \STATE Compute $\wb \theta \gets 2 \eps \theta + {\eps^2 \eta} \cdot  \wh g(x)$.
\STATE \mbox{Get $a \gets \argmax_{a \in \cA} \ang{\phi(x,a), \wb \theta}$; add $a$ to $\wh \cA$.}
	  \ENDFOR
	  \RETURN $\argmax_{a \in \wh{\cA}} {\ang{\wb \phi(x,a), \theta}}^2$ 
\hfill \algcommentlight{$\wt O(d)$ candidates.}
  \end{algorithmic}
\end{algorithm}

\paragraph{On the initialization requirement}
The runtime for \pref{alg:barycentric_reweight} scales with $\log(r^{-1})$, where $r\in(0,1)$ is such that $\det(\phi(x,\cS))\geq{}r^{d}$ for the initial set $\cS$. In \cref{app:det_assumption}, we provide computationally efficient algorithms for initialization under various assumptions on the action space.

\section{Empirical Results}
\label{sec:experiments}
In this section we investigate the empirical performance of \greedyalg
and \spannerIGW through three experiments. First, we compare the spanner-based
algorithms to state-of-the art finite-action algorithms on a
large-action dataset; this experiment features nonlinear, learned
context embeddings $g\in\cG$.
Next, we study the impact of redundant actions on the statistical
performance of said algorithms.  Finally, we experiment with a large-scale large-action contextual
bandit benchmark, where we find that the spanner-based methods exhibit
excellent performance.

\paragraph{Preliminaries}
{We conduct experiments on three datasets, whose details are summarized in \cref{tab:datasets}.
\oneshotwiki \citep{singh12:wiki-links,oneshotwiki} is a
named-entity recognition task where contexts are text phrases
preceding and following the mention text, and where actions are text
phrases corresponding to the concept names.  
\amazon-3m \citep{Bhatia16} is an extreme multi-label dataset whose
contexts are text phrases corresponding to the title and description
of an item, and whose actions are integers corresponding to item tags.
Actions are embedded into $\R^{d}$ with $d$ specified in \cref{tab:datasets}.
We construct binary rewards for each dataset, and report 90\% bootstrap confidence intervals (CIs) 
of the rewards in the experiments.
We defer other experimental details to \cref{app:exp_basic}.
}
Code to reproduce all results is available at \url{https://github.com/pmineiro/linrepcb}.
\begin{table}[ht]
\caption{Datasets used for experiments.}
\label{tab:datasets}
\centering
\begin{tabular}{c c c c}
\toprule
Dataset & $T$ & $|\cA|$ & $d$ \\
\midrule
\oneshotwiki-\textsf{311} & 622000 & 311 & 50 \\
\oneshotwiki-\textsf{14031} & 2806200 & 14031 & 50 \\
\amazon-\textsf{3m} & 1717899 & 2812281 & 800 \\
\bottomrule
\end{tabular}
\end{table}

\paragraph{Comparison with finite-action baselines}  We compare \greedyalg
and \mainalg with their finite-action counterparts \greedy and
\squarecb \citep{foster2020beyond} on the \oneshotwiki-\textsf{14031}
dataset. We consider \emph{bilinear models} in which regression functions take the form $f(x, a) = \ang{\phi(a), W x}$
where $W$ is a matrix of learned parameters; the \emph{deep
  models} of the form $f(x, a) = \ang{\phi(a), W \wb g(x)}$, where
$\wb g$ is a learned two-layer neural network {and $W$ contains learned parameters as before}.\footnote{Also see \cref{app:exp_basic} for details.}
\pref{tab:oneshotresults} presents our results. We find that \mainalg performs best, and that both spanner-based algorithms
either tie or exceed their finite-action counterparts. In addition, we
find that working with deep models uniformly improves performance for
all methods. We refer to \cref{tab:timings} in \cref{app:exp_timing} for timing
information.

\begin{table}[ht]
\caption{Comparison on \oneshotwiki-\textsf{14031}.  Values are
the average progressive rewards {(confidence intervals)}, scaled by 1000. We include the performance of the best constant 
predictor and supervised learning as a baseline and skyline respectively.}
\label{tab:oneshotresults}
\centering
\begin{tabular}{c c c }
\toprule
Algorithm & \multicolumn{2}{c}{Regression Function} \\
 & Bilinear & Deep \\
\midrule
\textsf{best constant} & \multicolumn{2}{c}{$0.07127$} \\
\greedy & $[ 5.00, 6.27 ]$ & $[ 7.15, 8.52 ]$ \\
\greedyalg & $[ 6.29, 7.08 ]$ & $[ 6.67, 8.30 ]$ \\
\squarecb & $[ 7.57, 8.59 ]$ & $[ 10.4, 11.3 ]$ \\
\mainalg & $[ 8.84, 9.68 ]$ & $[ 11.2, 12.2 ]$ \\
\supervised & $[ 31.2, 31.3 ]$ & $[ 36.7, 36.8 ]$ \\
\bottomrule
\end{tabular}
	\end{table}

\paragraph{Impact of redundancy}
Finite-action contextual bandit algorithms can explore excessively in
the presence of redundant actions. To evaluate performance in the face
of redundancy, we augment
\oneshotwiki-\textsf{311} by duplicating action the final
action. \cref{tab:duplicateaction} displays the performance of
\spannerIGW and its finite-action counterpart, \squarecb, with a
varying number of duplicates. We find that \mainalg is completely
invariant to duplicates (in fact, the algorithm produces numerically
identical output when the random seed is fixed), but \squarecb is
negatively impacted and over-explores the duplicated action.  \greedyalg
and \greedy behave analogously (not shown).

\begin{table}[ht]
\caption{Redundancy study on \oneshotwiki-\textsf{311}.  Values are
the average progressive rewards {(confidence intervals)}, scaled by 100.}
\label{tab:duplicateaction}
\centering
\begin{tabular}{c c c }
\toprule
Duplicates & \mainalg & \squarecb \\
\midrule
0 & $[ 12.6, 13.0 ]$ & $[ 12.2, 12.6 ]$ \\
16 & $[ 12.6, 13.0 ]$ & $[ 12.1, 12.4 ]$ \\
256 & $[ 12.6, 13.0 ]$ & $[ 10.2, 10.6 ]$ \\
1024 & $[ 12.6, 13.0 ]$ & $[ 8.3, 8.6 ]$ \\
\bottomrule
\end{tabular}
\end{table}

\paragraph{Large scale exhibition}
We conduct a large scale experiment using the \amazon-\textsf{3m} dataset.
Following \citet{sen2021top}, we study the top-$k$ setting where $k$ actions are selected 
at each round.
Out of the total number of actions sampled, we let $r$ denote the number of actions sampled for exploration.
We apply \greedyalg for this dataset 
{and consider regression functions similar to the deep models discussed before.}
The setting $(k=1)$ corresponds to running our algorithm
unmodified, and $(k=5, r=3)$ corresponds to selecting 5 actions per round and using 3 exploration slots.
\cref{fig:amazon3m} in
\cref{app:exp_figure} displays the results. For $(k=1)$ the final CI is
$[0.1041, 0.1046]$, and for $(k=5,r=3)$ the final CI is $[0.438, 0.440]$.

{In the setup} with $(k=5, r=3)$, our results are directly comparable to
\citet{sen2021top}, who evaluated a tree-based contextual bandit
method on the same dataset. The best result from \cite{sen2021top}
achieves roughly 0.19 reward with $(k=5, r=3)$,  which we exceed by a factor of 2.
This indicates that our use of embeddings provides favorable inductive bias for
this problem, and underscores the broad utility of our techniques (which leverage embeddings).  For $(k=5,
r=3)$, our inference time on a commodity CPU with batch size 1 is
160ms per example, which is slower than the time of 7.85ms per example
reported in \citet{sen2021top}.

\section{Additional Related Work}
\label{sec:related}

In this section we highlight some relevant lines of research not already discussed.

\paragraph{Efficient general-purpose contextual bandit algorithms}
There is a long line of research on computationally efficient methods for contextual bandits with general function approximation, typically based on reduction to either cost-sensitive classification oracles \citep{langford2007epoch, dudik2011efficient, agarwal2014taming} or regression oracles \citep{foster2018practical, foster2020beyond,simchi2021bypassing}. Most of these works deal with a finite action spaces and have regret scaling with the number of actions, which is necessary without further structural assumptions \citep{agarwal2012contextual}. An exception is the works of \citet{foster2020adapting} and \citet{xu2020upper}, both of which consider the same setting as the present paper. Both of the algorithms in these works require solving subproblems based on maximizing quadratic forms (which is NP-hard in general \citep{sahni1974computationally}), and cannot directly take advantage of the linear optimization oracle we consider.
Also related is the work of \citet{zhang2021feel}, which proposes a posterior sampling-style algorithm for the setting we consider. This algorithm is not fully comparable computationally, as it requires sampling from specific posterior distribution; it is unclear whether this can be achieved in a provably efficient fashion.

\paragraph{Linear contextual bandits} The linear contextual bandit problem is a special case of our setting in which $\gstar(x)=\theta\in\bbR^{d}$ is constant (that is, the reward function only depends on the context through the feature map $\phi$). The most well-studied families of algorithms for this setting are UCB-style algorithms and posterior sampling. With a well-chosen prior and posterior distribution, posterior sampling can be implemented efficiently \citep{agrawal2013thompson}, but it is unclear how to efficiently adapt this approach to accomodate general function approximation. Existing UCB-type algorithms require solving sub-problems based on maximizing quadratic forms, which is NP-hard in general \citep{sahni1974computationally}. One line of research aims to make UCB efficient by using hashing-based methods (MIPS) to approximate the maximum inner product \citep{yang2021linear, jun2017scalable}. These methods have runtime sublinear (but still polynomial) in the number of actions.

\paragraph{Non-contextual linear bandits} For the problem of \emph{non-contextual} linear bandits (with either stochastic or adversarial rewards), there is a long line of research on efficient algorithms that can take advantage of linear optimization oracles \citep{awerbuch2008online, mcmahan2004online, dani2006robbing, dani2008stochastic,bubeck2012towards, hazan2016volumetric, ito2019oracle}; see also work on the closely related problem of combinatorial pure exploration \citep{chen2017nearly, cao2019disagreement,katz2020empirical,wagenmaker2021experimental}.
In general, it is not clear how to lift these techniques to contextual bandits with linearly-structured actions and general function approximation.
{We also mention that optimal design has been applied in the context of linear bandits, but these algorithms are restricted to the non-contextual setting \citep{lattimore2020bandit, lattimore2020learning}, or to pure exploration \citep{soare2014best, fiez2019sequential}.
The only exception we are aware of is \citet{ruan2021linear}, who
extend these developments to linear contextual bandits (i.e., where
$\gstar(x) = \theta$), but critically use that contexts are stochastic. 
}

\paragraph{Other approaches} Another line of research provides efficient contextual bandit methods under specific modeling assumptions on the context space or action space that differ from the ones we consider here. \citet{zhou2020neural, xu2020neural, zhang2021neural, kassraie2022neural} provide generalizations of the UCB algorithm and posterior sampling based on the Neural Tangent Kernel (NTK). These algorithms can be used to learn context embeddings (i.e., $g(x)$) with general function approximation, but only lead to theoretical guarantees under strong RKHS-based assumptions. For large action spaces, these algorithms typically require enumeration over actions. \citet{majzoubi2020efficient} consider a setting with nonparametric action spaces and design an efficient tree-based learner; their guarantees, however, scale exponentially in the dimensionality of action space. 
\citet{sen2021top} provide heuristically-motivated but empirically-effective tree-based algorithms for contextual bandits with large action spaces, with theoretical guarantees when the actions satisfy certain tree-structured properties. Lastly, another empirically-successful approach is the policy gradient method (e.g., \citet{williams1992simple, bhatnagar2009natural, pan2019policy}). On the theoretical side, policy gradient methods do not address the issue of systematic exploration, and---to our knowledge---do not lead to provable guarantees for the setting considered in our paper.

\section{Discussion}
\label{sec:discussion}

We provide the first efficient algorithms for contextual bandits with
continuous, linearly structured action spaces and general-purpose
function approximation.
We highlight some natural directions for future research below.

\begin{itemize}
  \item \textbf{Efficient algorithms for nonlinear action 
      spaces.} Our algorithms take advantage of linearly structured
    action spaces by appealing to optimal design. Can we develop
    computationally efficient methods for contextual bandits with
    nonlinear dependence on the action space?
\item \textbf{Reinforcement learning.} The contextual bandit problem is a special case of the
    reinforcement learning problem with horizon one. Given our
    positive results in the contextual bandit setting, a natural next
    step is to extend our methods to reinforcement learning problems
    with large action/decision spaces. For example,
    \citet{foster2021statistical} build on our computational tools to
    provide efficient algorithms for reinforcement learning with bilinear classes.

\end{itemize}
  Beyond these directions, natural domains in which to extend our
  techniques include pure exploration and off-policy learning with linearly structured actions.

\bibliography{refs}

\newpage
\appendix

\section{Proofs and Supporting Results from \cref{sec:warmup}}
\label{app:warmup}
This section is organized as follows. We provide supporting results in \cref{app:warmup_support}, then give the proof of \cref{thm:greedy} in \cref{app:warmup_greedy}.

\subsection{Supporting Results}
\label{app:warmup_support}

\subsubsection{Barycentric Spanner and Optimal Design}

\cref{alg:barycentric} restates an algorithm of
\citet{awerbuch2008online}, which efficiently computes a barycentric
spanner (\cref{def:barycentric}) given access to a linear optimization
oracle (\cref{def:action_oracle}). 
{Recall that, for a set $\cS\subset\cA$ of $d$ actions, the notation $\det(\wb \phi(x, \cS))$ (resp. $\det(\phi(x,\cS))$) denotes the determinant of the $d$-by-$d$ matrix whose columns are the $\wb \phi$ (resp. $\phi$) embeddings of actions.} 
\begin{algorithm}[H]
    \caption{Approximate Barycentric Spanner \citep{awerbuch2008online}}
    \label{alg:barycentric} 
    \renewcommand{\algorithmicrequire}{\textbf{Input:}}
    \renewcommand{\algorithmicensure}{\textbf{Output:}}
    \newcommand{\algorithmicbreak}{\textbf{break}}
    \newcommand{\BREAK}{\STATE \algorithmicbreak}
    \begin{algorithmic}[1]
        \REQUIRE Context $x \in \cX$ and approximation factor $C > 1$.
\FOR{$i = 1, \dots, d$}
        \STATE Compute $\theta \in \R^d$ representing linear function $\phi(x,a) \mapsto \det(\phi(x,a_1), \ldots, \phi(x, a_{i-1}), \phi(x,a), e_{i+1}, \ldots, e_d)$.

\STATE Get $a_i \gets \argmax_{a \in \cA} \abs{\ang{\phi(x,a), \theta}}$.\ENDFOR
	\STATE Construct $\cS = \prn{a_1, \ldots, a_d}$. \hfill \algcommentlight{Initial set of actions $\cS \subseteq \cA$ such that $\abs{\cS} = d$ and $\abs{\det(\phi(x,\cS))} > 0$.}
        \WHILE{not break}
		\FOR{$i = 1, \dots, d$}
		\STATE Compute $\theta \in \R^d$ representing linear function $\phi(x,a) \mapsto \det(\phi(x, \cS_{i}(a)))$, where $\cS_i(a) \ldef ( a_1, \ldots , a_{i-1}, a, a_{i+1}, \ldots, a_d )$.
\STATE Get $a \gets \argmax_{a \in \cA} \abs{\ang{\phi(x,a), \theta}}$.
		\IF{$\abs{\det( \phi(x,\cS_{i}(a)))} \geq C \abs{\det(\phi(x, \cS))}$}
		\STATE Update $a_i \gets a$. \STATE \textbf{continue} to line 5.
		\ENDIF
		\ENDFOR
		\BREAK
		\ENDWHILE
        \RETURN $C$-approximate barycentric spanner $\cS$.
    \end{algorithmic}
\end{algorithm}

\begin{lemma}[\citet{awerbuch2008online}]
  \label{prop:barycentric_basic}
For any $x \in \cX$, \cref{alg:barycentric} computes a $C$-approximate barycentric spanner for $\crl{\phi(x,a): a \in \cA}$ within $O(d \log_C d)$ iterations of the while-loop.
\end{lemma}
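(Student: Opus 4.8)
The plan is to establish the two assertions separately: first, that the set $\cS$ returned upon termination is a valid $C$-approximate barycentric spanner, and second, that the while-loop terminates within $O(d\log_C d)$ iterations. Throughout I would maintain the invariant that $\abs{\det(\phi(x,\cS))} > 0$. This holds after the initialization for-loop (by construction, using the standing assumption $\spn(\crl{\phi(x,a)}) = \R^d$), and it is preserved by every subsequent swap, since a swap is performed only when it strictly increases $\abs{\det(\phi(x,\cS))}$. In particular $\cS$ is a basis of $\R^d$ at all times.

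For correctness, I would argue via Cramer's rule. When the while-loop breaks, the inner for-loop has examined every coordinate $i \in [d]$ without triggering an update; since line 8 selects $a = \argmax_{a}\abs{\det(\phi(x,\cS_i(a)))}$, the failed if-condition means $\abs{\det(\phi(x,\cS_i(a)))} < C\cdot\abs{\det(\phi(x,\cS))}$ for every $i$ and \emph{every} $a \in \cA$. Now fix any $z = \phi(x,a)$ and expand it in the basis $\crl{\phi(x,a_j)}_{j=1}^d$ as $z = \sum_j \lambda_j \phi(x,a_j)$. Cramer's rule gives $\lambda_i = \det(\phi(x,\cS_i(a)))/\det(\phi(x,\cS))$, so each $\abs{\lambda_i} < C$, which is exactly the defining property of a $C$-approximate barycentric spanner (\cref{def:barycentric}).

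For the iteration count, the key quantity to track is $\Phi \ldef \abs{\det(\phi(x,\cS))}$, which is non-decreasing and increases by a factor of at least $C$ on each swap. Each pass through the while-loop performs exactly one swap (then restarts) or performs none (then breaks), so the number of iterations equals the number of swaps plus one. I would bound $\Phi$ from above and below. For the upper bound, Hadamard's inequality together with $\nrm{\phi(x,a)} \leq 1$ gives $\abs{\det(\phi(x,\cS))} \leq 1$ for any $d$-subset, so $\Phi$ never exceeds $M^\star \ldef \max_{\cS}\abs{\det(\phi(x,\cS))} \leq 1$. For the lower bound, I would show that the greedy initialization yields $\Phi_{\mathrm{init}} \geq d^{-d/2} M^\star$. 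Granting this, since $\Phi$ stays in $[\Phi_{\mathrm{init}}, M^\star]$ and multiplies by at least $C$ per swap, the number of swaps is at most $\log_C(M^\star/\Phi_{\mathrm{init}}) \leq \log_C(d^{d/2}) = \tfrac{d}{2}\log_C d = O(d\log_C d)$.

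The main obstacle is the initialization lower bound $\Phi_{\mathrm{init}} \geq d^{-d/2}M^\star$. I would prove it using the greedy property of the initialization loop: at step $i$, the chosen $a_i$ maximizes $\abs{\det(\phi(x,a_1),\ldots,\phi(x,a_{i-1}),\phi(x,a),e_{i+1},\ldots,e_d)}$ over $a\in\cA$, which, expanding the determinant along column $i$, is a linear functional of $\phi(x,a)$. Comparing this greedy optimum against each column of an optimal $d$-subset achieving $M^\star$, and then applying Hadamard's inequality to the change-of-basis matrix relating the initialized basis to the optimal one, yields the claimed $d^{d/2}$ factor. This coefficient-bounding step, which follows \citet{awerbuch2008online}, is where the analysis requires the most care, since the per-coordinate greedy guarantee is stated relative to intermediate matrices (with standard basis vectors in the trailing columns) rather than relative to the final basis.
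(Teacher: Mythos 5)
Your proposal targets a statement the paper itself never proves: \cref{prop:barycentric_basic} is imported from \citet{awerbuch2008online} as a citation, and the closest in-paper analogue (the proof of \cref{thm:barycentric_reweight}) deliberately sidesteps the hard part by \emph{assuming} an initial set with $\abs{\det(\phi(x,\cS))} \geq r^d$ and pairing it with Hadamard's upper bound of $1$, which is exactly why that result's iteration count is $O(d \log(e \vee \eta/r))$ rather than $O(d\log_C d)$. So the right benchmark is the Awerbuch--Kleinberg argument, whose architecture you reproduce correctly: Cramer's rule at termination (your correctness step is complete and right), geometric growth of the determinant per swap, and a lower bound on the post-initialization determinant relative to the maximum $M^\star$. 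The first two components are sound as written, including the invariant $\abs{\det(\phi(x,\cS))}>0$ and the count of at most $\log_C(M^\star/\Phi_{\mathrm{init}})$ swaps.

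The gap is in the third component, which carries all the difficulty, and your sketched method does not close it. Applying Hadamard's inequality to the change-of-basis matrix $B$ defined by $Z = XB$ (where $X$ is the initialized basis and $Z$ an optimal $d$-subset) requires every entry of $B$ to be bounded by a constant, i.e., that $X$ is already an exact barycentric spanner --- which is precisely what cannot be assumed, and is the very obstruction you flag in your final sentence without resolving. The repair is to evaluate the \emph{greedy functionals} rather than expand in the final basis: for each $i$ set $g_i(v) \ldef \det(x_1,\dots,x_{i-1},v,e_{i+1},\dots,e_d)/\det(x_1,\dots,x_i,e_{i+1},\dots,e_d)$. The greedy choice in the initialization loop guarantees $\abs{g_i(\phi(x,a))} \leq 1$ for every $a \in \cA$ and every $i$, and the matrix $G \ldef (g_i(x_j))_{i,j}$ is unit upper triangular (entries below the diagonal vanish because of repeated columns), so $\det G = 1$. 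Consequently the matrix $H \ldef (g_i(z_j))_{i,j}$ satisfies $\abs{\det H} = \abs{\det Z}/\abs{\det X}$ while having all entries in $[-1,1]$, and Hadamard applied to $H$ (not to $B$) gives $\abs{\det Z}/\abs{\det X} \leq d^{d/2}$, i.e., $\Phi_{\mathrm{init}} \geq d^{-d/2} M^\star$ as your counting argument requires. With that one substitution your proof becomes a faithful reconstruction of the cited result; without it, the crux step is asserted rather than proved.
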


\begin{lemma}
\label{prop:barycentric_basic_computational}	
Fix any constant $C > 1$.
   \cref{alg:barycentric} can be implemented with 
   runtime $O(\Topt \cdot d^2 \log d + d^4 \log d)$ and memory $O(\Mopt + d^2)$.
\end{lemma}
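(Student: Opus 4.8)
The plan is to translate the iteration bound of \cref{prop:barycentric_basic} into a runtime bound by carefully accounting for the linear-algebra cost of each pass through the while-loop, the crux being to \emph{amortize} the cofactor (the ``$\theta$'') computations over the inner for-loop rather than redoing them from scratch. By \cref{prop:barycentric_basic}, the while-loop runs $O(d\log_C d)$ times, and within any single pass the working set $\cS$---and hence the matrix $M \ldef \phi(x,\cS)$---stays fixed until a swap is made, at which point the algorithm restarts a fresh pass. I would therefore bound the cost of one pass and multiply by $O(d\log_C d)$.

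The key observation is that, within one pass, the vectors $\theta$ computed for the different indices $i$ are (up to a common scalar) exactly the rows of the adjugate of $M$. Expanding the determinant along column $i$ gives $\det(\phi(x,\cS_i(a))) = \ang*{\phi(x,a),\theta^{(i)}}$ with $\theta^{(i)} = \det(M)\cdot (M^{-1})_{i,:}$ whenever $M$ is nonsingular, so a single matrix inversion at cost $O(d^3)$ simultaneously yields $\det(M)$ and all $d$ vectors $\theta^{(1)},\dots,\theta^{(d)}$, each read off in $O(d)$. This is the step I expect to be the main obstacle: a naive implementation recomputes a determinant/cofactor vector separately for every index and every pass, costing $O(d^3)$ per inner iteration and inflating the total to order $d^5\log d$; amortizing with one inversion per pass is what reduces the linear-algebra cost to $O(d^4\log_C d)$.

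Given $\theta^{(i)}$, each inner iteration then (i) handles $\argmax_a |\ang*{\phi(x,a),\theta^{(i)}}|$ via two calls to \optalgtext (on $\theta^{(i)}$ and $-\theta^{(i)}$; see \cref{def:action_oracle}) at cost $O(\Topt)$, and (ii) checks the acceptance condition in $O(d)$ time, using that $\det(\phi(x,\cS_i(a))) = \ang*{\phi(x,a),\theta^{(i)}}$ is already supplied by the oracle's objective value while $|\det(M)|$ is maintained. Thus one pass costs $O(d^3) + d\cdot O(\Topt + d) = O(d^3 + d\,\Topt)$, and summing over the $O(d\log_C d)$ passes gives runtime $O(\Topt\cdot d^2\log_C d + d^4\log_C d)$; the initialization loop (Lines 1--3) admits the same per-step $O(d^3 + \Topt)$ accounting and contributes only $O(d^4 + d\,\Topt)$, which is dominated. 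For the fixed constant $C$ we have $\log_C d = \Theta(\log d)$, yielding $O(\Topt\cdot d^2\log d + d^4\log d)$.

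For memory, I would track only what is stored concurrently: the current set $\cS$ (its representation cost suppressed, as with $b_\cA$), the matrix $M$ together with its inverse/adjugate---each $O(d^2)$---and the internal memory $\Mopt$ used by \optalgtext across its calls; all remaining objects ($\theta^{(i)}$, the candidate action, and $O(1)$ scalars) fit in $O(d)$. This gives maximum memory $O(\Mopt + d^2)$, completing the plan.
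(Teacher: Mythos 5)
Your proposal is correct and reaches the stated bound through essentially the same skeleton as the paper: the iteration count from \cref{prop:barycentric_basic}, the identity $\det(\phi(x,\cS_i(a))) = \ang*{\phi(x,a),\theta^{(i)}}$ with $\theta^{(i)} = \det(M)\cdot(M^{-1})^{\top}e_i$, two calls to \optalgtext per index to handle the absolute value, and the same $O(\Mopt + d^2)$ memory accounting. The one genuine difference is the linear-algebra bookkeeping. You amortize a fresh $O(d^3)$ inversion over each pass of the for-loop, exploiting that $M$ is frozen until a swap triggers a restart; the paper instead never performs a full inversion inside the loop, maintaining $\det(\Phi_\cS)$ and $\Phi_\cS^{-1}$ incrementally---the determinant via the rank-one determinant formula and the inverse via the Sherman--Morrison update---at $O(d^2)$ cost per swap and $O(d)$ per $\theta$ query. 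Both schemes land within the claimed $O(\Topt\cdot d^2\log d + d^4\log d)$; in fact the paper's incremental scheme is slightly tighter in the oracle-free term (a per-pass cost of $O(d\,\Topt + d^2)$ rather than your $O(d\,\Topt + d^3)$, i.e., $d^3\log d$ versus $d^4\log d$), though the paper states the looser bound anyway, so nothing is lost. One small point worth making explicit in your write-up: the per-pass inversion is legitimate only because $M$ is always nonsingular, which holds since the initialization has $\abs{\det(\phi(x,\cS))}>0$ and every accepted swap multiplies $\abs{\det}$ by at least $C>1$; the paper records this invariant when justifying its rank-one updates.
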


\begin{proof}[\pfref{prop:barycentric_basic_computational}]
We provide the computational complexity analysis starting from the while-loop (line 5-12) in the following.
The computational complexity regarding the first for-loop (line 1-3) can be similarly analyzed.
\begin{itemize}
	\item \emph{Outer loops (lines 5-6).}
From \cref{prop:barycentric_basic}, we know that \cref{alg:barycentric} terminates within $O(d \log d)$  iterations of the while-loop (line 5). 
	It is also clear that the for-loop (line 6) is invoked at most $d$ times.
	 \item \emph{Computational complexity for lines 7-10.}
		 We discuss how to efficiently implement this part using rank-one updates.
We analyze the computational complexity for each line in the following.
\begin{itemize}
	\item \emph{Line 7.}
We discuss how to efficiently compute the linear function $\theta$ through rank-one updates.
	Fix any $Y \in \R^{d}$. 
   Let $\Phi_\cS$ denote the invertible (by construction) matrix whose $k$-th column is $\phi(x,a_k)$ (with $a_k \in \cS$). Using the rank-one update formula for the determinant \citep{meyer2000matrix}, we have 
   \begin{align}
& \det( \phi(x,a_1),\ldots, \phi(x,a_{i-1}), Y, \phi(x,a_{i+1}),\ldots, \phi(x, a_d) ) \nonumber \\
       & = \det \prn[\Big]{\Phi_\cS + \prn[\big]{ Y - \phi(x,a_i)} e_i^\top} \nonumber\\
       & = \det(\Phi_\cS) \cdot \prn[\Big]{1 + e_i^\top \Phi_\cS^{-1} \prn[\big]{Y - \phi(x,a_i)}} \nonumber \\
       & = \ang[\big]{Y, \det(\Phi_\cS) \cdot \prn*{\Phi_\cS^{-1}}^\top e_i}  + \det(\Phi_\cS) \cdot \prn[\big]{ 1 - e_i^\top \Phi_\cS^{-1} \phi(x,a_i) }. \label{eq:det_rank_one} 
   \end{align}
   We first notice that $\det(\Phi_\cS) \cdot \prn[\big]{ 1 - e_i^\top \Phi_\cS^{-1} \phi(x,a_i) } = 0$ since one can take $Y = 0 \in \R ^{d}$. 
   We can then write 
   \begin{align*}
 \det( \phi(x,a_1),\ldots, \phi(x,a_{i-1}), Y, \phi(x,a_{i+1}),\ldots, \phi(x, a_d) ) = \ang{Y, \theta} 
   \end{align*}
   where $\theta = \det(\Phi_\cS) \cdot \prn*{\Phi_\cS^{-1}}^\top e_i$.
   Thus, whenever $\det(\Phi_\cS)$ and  $\Phi_\cS^{-1}$ are known, compute $\theta$ takes  $O(d)$ time. 
   The maximum memory requirement is $O(d^2)$, following from the storage of $\Phi_\cS^{-1}$.
   \item \emph{Line 8.} When $\theta$ is computed, we can compute  $a$ by first compute  $a_+ \ldef \argmax_{a \in \cA} \ang{\phi(x,a), \theta}$ and $a_- \ldef \argmax_{a \in \cA} - \ang{\phi(x,a), \theta}$ and then compare the two. This process takes two oracle calls to \optalgtext, which takes $O(\Topt)$ time.
	   The maximum memory requirement is $O(\Mopt + d)$, following from the memory requirement of \optalgtext and the storage of $\theta$.
\item \emph{Line 9.}
	Once $\theta$ and $\det(\Phi_\cS)$ are computed, checking the updating criteria takes  $O(d)$ time. The maximum memory requirement is $O(d)$, following from the storage of $\phi(x, a)$ and $\theta$.
 \item \emph{Line 10.}
We discuss how to efficiently update $\det(\Phi_\cS)$ and  $\Phi_\cS^{-1}$ through rank-one updates.
	If an update $a_i = a$ is made, we can update the determinant using rank-one update (as in \cref{eq:det_rank_one}) with runtime $O(d)$ and memory  $O(d^2)$; and update the inverse matrix using the Sherman-Morrison rank-one update formula \citep{sherman1950adjustment}, i.e.,
\begin{align*}
       \prn[\Big]{\Phi_\cS + \prn[\big]{ \phi(x,a) - \phi(x,a_i)} e_i^\top}^{-1} = \Phi_\cS^{-1} -
       \frac{\Phi_\cS^{-1}\prn[\big]{ \phi(x,a) - \phi(x,a_i)} e_i^\top \Phi_\cS^{-1}}{1 + e_i \Phi_\cS^{-1} \prn[\big]{ \phi(x,a) - \phi(x,a_i)} },
\end{align*}
which can be implemented in $O(d^2)$ time and memory.
Note that the updated matrix must be invertible by construction.
\end{itemize}
Thus, using rank-one updates, the total runtime adds up to $O(\Topt + d^2)$ and the maximum memory requirement is $O(\Mopt + d^2)$.
We also remark that the initial matrix determinant and inverse can be computed cheaply since the first iteration of the first for-loop (i.e., line 2 with $i=1$) is updated from the identity matrix.
\end{itemize}
   To summarize,
   \cref{alg:barycentric} has runtime $O(\Topt \cdot d^2 \log d + d^4 \log d)$ 
   and uses at most $O(\Mopt + d^2) $ units of memory. 
\end{proof}

The next proposition shows that a barycentric spanner implies an approximate optimal design. The result is well-known (e.g., \citet{hazan2016volumetric}), but we provide a proof here for completeness.

\propBary*

\begin{proof}[\pfref{prop:barycentric_approx_opt_design}]
  Assume without loss of generality that $\cZ \subseteq \R^d$ spans $\R^d$. By \cref{def:barycentric}, we know that for any $z \in \cZ$, we can represent $z$ as a weighted sum of elements in $\cS$ with coefficients in the range $[-C, C]$. Let $\Phi_\cS \in \R^{d \times d}$ be the matrix whose columns are the vectors in $\cS$. For any $z \in \cZ$, we can find $\theta \in [-C,C]^d$ such that $z = \Phi_\cS \theta$.
   Since $\Phi_\cS$ is invertible (by construction), we can write $\theta = \Phi_\cS^{-1} z$, which implies the result via
   \begin{align*}
       C^2 \cdot d \geq \norm{\theta}_2^2 = \norm{z}^2_{(\Phi_\cS \Phi_\cS^\top)^{-1}} = \frac{1}{d}\cdot \norm{z}^2_{V(q)^{-1}}.
   \end{align*}
\end{proof}

\subsubsection{Regret Decomposition}
Fix any $\gamma > 0$.
We consider the following meta algorithm that utilizes the online regression oracle \sqalgtext defined in \cref{asm:regression_oracle}.

For $t = 1, 2, \ldots, T$:
\begin{itemize}
\item Get context $x_t \in \cX$ from the environment and regression function $\wh f_t \in \conv(\cF)$ from the online regression oracle \sqalgtext.
\item Identify the distribution $p_t \in \Delta(\cA)$ that solves the
  minimax problem $\dec_\gamma(\cF; \wh f_t, x_t)$ (defined in
  \cref{eq:dec}) and play action $a_t \sim p_t$.
\item Observe reward $r_t$ and update regression oracle with example
  $(x_t, a_t, r_t)$.
\end{itemize}

The following result bounds the contextual bandit regret for the meta algorithm described above. 
The result is a variant of the regret decomposition based on
the \dectext given in \citet{foster2021statistical}, which generalizes
\citet{foster2020beyond}. The slight
differences in constant terms are due to the difference in reward range.
\begin{lemma}[\citet{foster2020beyond,foster2021statistical}]
    \label{lm:regret_decomp}
   Suppose that \cref{asm:regression_oracle} holds. Then probability at least $1-\delta$, the contextual bandit regret is upper bounded as follows:
   \begin{align*}
       \regcb(T) \leq \dec_{\gamma}(\cF)\cdot{}T + 2 \gamma \cdot \regsq(T) + 64 \gamma \cdot \log (2\delta^{-1}) + \sqrt{8T \log(2 \delta^{-1})}.
   \end{align*} 
\end{lemma}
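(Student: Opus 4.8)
The plan is to combine the defining inequality of the DEC with two martingale concentration arguments that bridge the realized regret and square-loss to their conditional-expectation counterparts. Write $\E_t[\cdot]$ for expectation conditioned on everything observed through the receipt of $x_t$, $\wh f_t$, and $p_t$ (but before drawing $a_t$ and observing $r_t$), and set $a^\star_t \ldef \pistar(x_t)$. By realizability (\cref{asm:realizability}), $\E_t[r_t(a)] = \fstar(x_t, a)$ for every $a \in \cA$, so the conditional mean of the per-round regret is $\E_t[r_t(a^\star_t) - r_t(a_t)] = \E_{a \sim p_t}[\fstar(x_t, a^\star_t) - \fstar(x_t, a)]$. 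Since each realized summand lies in $[-2,2]$, the Azuma--Hoeffding inequality gives, with probability at least $1 - \delta/2$,
\[
  \regcb(T) \leq \sum_{t=1}^T \E_{a \sim p_t}\brk*{\fstar(x_t, a^\star_t) - \fstar(x_t, a)} + \sqrt{8 T \log(2\delta^{-1})},
\]
which already isolates the last term of the claimed bound.

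Next I would invoke the DEC. Because $p_t$ is chosen to solve the minimax problem defining $\dec_\gamma(\cF; \wh f_t, x_t)$, instantiating the inner suprema in \cref{eq:dec} at $a^\star = a^\star_t$ and $f^\star = \fstar$ yields, for every $t$,
\[
  \E_{a \sim p_t}\brk*{\fstar(x_t, a^\star_t) - \fstar(x_t, a)} \leq \dec_\gamma(\cF; \wh f_t, x_t) + \gamma\, \E_{a \sim p_t}\brk*{(\wh f_t(x_t,a) - \fstar(x_t,a))^2} \leq \dec_\gamma(\cF) + \gamma\, \E_t[Z_t],
\]
where $Z_t \ldef (\wh f_t(x_t,a_t) - r_t(a_t))^2 - (\fstar(x_t,a_t) - r_t(a_t))^2$ and the final equality $\E_t[Z_t] = \E_{a \sim p_t}[(\wh f_t(x_t,a) - \fstar(x_t,a))^2]$ follows from the bias--variance identity together with realizability (the variance of $r_t$ cancels between the two squared errors). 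Summing over $t$ bounds the expected regret by $T \dec_\gamma(\cF) + \gamma \sum_{t} \E_t[Z_t]$, reducing the task to controlling $\sum_t \E_t[Z_t]$ by the square-loss regret.

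The crux is a self-bounding martingale argument. Factoring the difference of squares gives $\abs{Z_t} \leq 4\,\abs{\wh f_t(x_t,a_t) - \fstar(x_t,a_t)}$, so the predictable quadratic variation obeys $\E_t[Z_t^2] \leq 16\,\E_t[Z_t]$; combined with a crude a.s.\ bound on $D_t \ldef \E_t[Z_t] - Z_t$, Freedman's inequality yields, with probability at least $1 - \delta/2$,
\[
  \sum_{t=1}^T \E_t[Z_t] - \sum_{t=1}^T Z_t \leq \sqrt{32 \log(2\delta^{-1}) \sum_{t=1}^T \E_t[Z_t]} + \tfrac{8}{3}\log(2\delta^{-1}).
\]
Applying AM--GM to the square-root term absorbs half of $\sum_t \E_t[Z_t]$ onto the left-hand side, giving $\sum_t \E_t[Z_t] \leq 2\sum_t Z_t + O(\log(2\delta^{-1}))$. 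Finally, instantiating \cref{asm:regression_oracle} at $f = \fstar \in \cF$ bounds $\sum_t Z_t \leq \regsq(T)$. Chaining these inequalities, tracking the constants, and taking a union bound over the two failure events produces
\[
  \regcb(T) \leq T \cdot \dec_\gamma(\cF) + 2\gamma\,\regsq(T) + 64\gamma\log(2\delta^{-1}) + \sqrt{8 T \log(2\delta^{-1})}.
\]
I expect the self-bounding step to be the main obstacle: one must use the variance-to-mean bound so that Freedman's inequality (rather than a naive Hoeffding bound) delivers a multiplicative factor of $2$ on the square-loss regret and only an additive $O(\gamma \log \delta^{-1})$ penalty. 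A cruder concentration would inject a $\sqrt{T}$-type term into the estimation error that could not be balanced against $T \cdot \dec_\gamma(\cF)$ when $\gamma$ is optimized later.
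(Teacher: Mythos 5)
Your proof is correct and follows essentially the same route as the argument this lemma rests on: the paper itself does not reprove the statement but cites \citet{foster2020beyond,foster2021statistical}, whose proof uses exactly your three ingredients --- interval-form Azuma--Hoeffding for the reward martingale (yielding the $\sqrt{8T\log(2\delta^{-1})}$ term), instantiation of the DEC suprema at $(\pistar(x_t),\fstar)$, and a Freedman-type self-bounding argument exploiting $\E_t[Z_t^2] \leq 16\,\E_t[Z_t]$ to get $\sum_t \E_t[Z_t] \leq 2\,\regsq(T) + O(\log(2\delta^{-1}))$ rather than an unabsorbable $\sqrt{T}$ estimation term. Your tracked constants (factor $2$ on $\regsq(T)$, an additive log term well below $64\gamma$, and the $\sqrt{8T\log(2\delta^{-1})}$ deviation term) are consistent with the lemma as stated.
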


In general, identifying a distribution that \emph{exactly} solves the
minimax problem corresponding to the DEC may be impractical. However,
if one can identify a distribution that instead certifies an \emph{upper bound}
$\wb{\dec}_{\gamma}(\cF)$ on the \dectext (in the sense that
$\dec_\gamma(\cF) \leq \wb \dec_\gamma(\cF)$), the regret bound in
\cref{lm:regret_decomp} continues to hold with $\dec_\gamma(\cF)$
replaced by $\wb \dec_\gamma(\cF)$.

\subsubsection{\pfref{prop:dec_greedy}}

\propDecGreedy*
\begin{proof}[\pfref{prop:dec_greedy}]
  Fix a context $x \in \cX$. In our setting, where actions are
  linearly structured, we can equivalently write the \dectext
  $\dec_{\gamma}(\cF;\fhat,x)$ as
\begin{align}
 \dec_{\gamma}  (\cG; \wh{g}, x)  \ldef \inf_{p \in \Delta(\cA)} \sup_{a^\star \in \cA} \sup_{g^\star \in \cG} \E_{a \sim p} \Big[\ang[\big]{\phi(x,a^\star) - \phi(x, a), g^\star(x)}  - {\gamma} \cdot \prn[\big]{\ang[\big]{\phi(x,a), g^\star(x) - \widehat g(x)}}^2 \bigg]. \label{eq:dec_linear}
\end{align}
Recall that within our algorithms, $\wh g \in
\conv(\cG)$ is obtained from the estimator $\wh f = f_{\wh g}$ output
by \sqalgtext. We will bound the quantity in
  \pref{eq:dec_linear} uniformly
  for all $x\in\cX$ and $\wh{g}:\cX\to\bbR^d$ with $\nrm{\ghat}\leq{}1$.
  Recall that we assume $\sup_{g \in \cG, x \in \cX} \nrm{g(x)} \leq 1$.
  
Denote $\wh a \ldef \argmax_{a \in \cA} \ang[\big]{\phi(x,a), \wh g(x)}$ and $a^\star \ldef \argmax_{a \in \cA} \ang[\big]{\phi(x,a), g^\star(x)}$. 
For any $\eps \leq 1$,
let $p \ldef \eps \cdot \des + (1 - \eps)\cdot \indic_{\wh a}$, where
$\des \in \Delta(\cA)$ is any $C_{\opt}$-approximate optimal design
for the embedding $\crl*{\phi(x,a)}_{a\in\cA}$. We have the following decomposition.
\begin{align}
    \E_{a \sim p} \sq[\Big]{\ang[\big]{\phi(x,a^\star) - \phi(x, a), g^\star(x)}} & = 
    \E_{a \sim p} \sq[\Big]{\ang[\big]{\phi(x, \wh a) - \phi(x,a), \wh g(x)}} 
    + \E_{a \sim p} \sq[\Big]{\ang[\big]{\phi(x,a), \wh g(x) - g^\star(x)}} \nonumber\\ 
    & \quad +  \prn[\Big]{\ang[\big]{\phi(x,a^\star), g^\star(x)}
    - \ang[\big]{\phi(x,\wh a), \wh g(x)}}.
    \label{eq:dec_greedy_decompostion}
\end{align}
For the first term in \cref{eq:dec_greedy_decompostion}, we have
\begin{align*}
    \E_{a \sim p} \sq[\Big]{\ang[\big]{\phi(x, \wh a) - \phi(x,a), \wh g(x)}} = 
    \eps \cdot \E_{a \sim \des} \sq[\Big]{\ang[\big]{\phi(x, \wh a) - \phi(x,a), \wh g(x)}} 
    \leq 2 \eps \cdot \sup_{x \in \cX,a \in \cA}  \nrm{\phi(x,a)} \cdot \sup_{x \in \cX}  \nrm{\wh g(x)} \leq  2 \eps.
\end{align*}
Next, since
\begin{align*}
    \ang[\big]{\phi(x,a), \wh g(x) - g^\star(x)} \leq \frac{\gamma}{2} \cdot \prn[\big]{\ang[\big]{\phi(x,a), \wh g(x) - g^\star(x)}}^2 + \frac{1}{2 \gamma}
\end{align*}
by AM-GM inequality, we can bound the second term in
\cref{eq:dec_greedy_decompostion} by
\begin{align*}
    \E_{a \sim p} \sq[\Big]{\ang[\big]{\phi(x,a), \wh g(x) - g^\star(x)}} 
    \leq \frac{\gamma}{2} \cdot \E_{a \sim p} \sq[\Big]{\prn[\big]{\ang[\big]{\phi(x,a), \wh g(x) - g^\star(x)}}^2 }
    + \frac{1}{2 \gamma}.
\end{align*}
We now turn our attention to the third term. Observe that since
$\wh{a}$ is optimal for $\ghat$, $\ang[\big]{\phi(x,\wh a), \wh g(x)}
\geq \ang[\big]{\phi(x,a^\star), \wh g(x)}$. As a result, defining
$V(\des) \ldef \E_{a \sim
  \des} \sq{\phi(x,a) \phi(x,a)^\top}$, we have 
\begin{align*}
     \ang[\big]{\phi(x,a^\star), g^\star(x)} - \ang[\big]{\phi(x,\wh a), \wh g(x)} & \leq  \ang[\big]{\phi(x,a^\star) , g^\star(x) - \wh g(x) }\\
     & \leq \nrm[\big]{\phi(x,a^\star)}_{V(\des)^{-1}} \cdot \nrm[\big]{g^\star(x) - \wh g(x)  }_{V(\des)} \\
     & = \frac{1}{2 \gamma \eps}\cdot \nrm[\big]{\phi(x,a^\star)}^2_{V(\des)^{-1}} + \frac{\gamma}{2} \cdot \eps \cdot \E_{a \sim \des} \sq[\Big]{\prn[\big]{\phi(x,a), g^\star(x) - \wh g(x)}^2} \\
     & \leq \frac{C_{\opt} \cdot d}{2 \gamma \eps} + \frac{\gamma}{2}\cdot \E_{a \sim p} \sq[\Big]{\prn[\big]{\phi(x,a), g^\star(x) - \wh g(x)}^2}.
\end{align*}
Here, the third line follows from the AM-GM inequality, and the last
line follows from the ($C_{\opt}$-approximate) optimal design property and the definition of $p$.

Combining these bounds, we have
\begin{align*}
    \dec_\gamma(\cF) = \inf_{p \in \Delta(\cA)} \sup_{a^\star \in \cA} \sup_{g^\star \in \cG}\dec_\gamma(\cG; \wh g, x) \leq 2 \eps + \frac{1}{2 \gamma} + \frac{C_{\opt} \cdot d}{2 \gamma \eps}.
\end{align*}
Since $\gamma \geq 1$, taking $\eps \ldef \sqrt{C_{\opt} \cdot d/ 4\gamma} \wedge 1$ gives
\begin{align*}
    \dec_\gamma (\cF) \leq 2 \sqrt{\frac{C_{\opt} \cdot d}{\gamma}} + \frac{1}{2 \gamma} \leq 3 \sqrt{\frac{C_{\opt} \cdot d}{\gamma}}
\end{align*}
whenever $\veps<1$. On the other hand, when $\eps=1$, this bound holds trivially.
\end{proof}

\subsection{\pfref{thm:greedy}}
\label{app:warmup_greedy}

\thmGreedy*

\begin{proof}[\pfref{thm:greedy}]
	Consider $\gamma \geq 1$.
    Combining \cref{prop:dec_greedy} with \cref{lm:regret_decomp}, we have 
   \begin{align*}
       \regcb(T) \leq 3T \cdot \sqrt{\frac{C_{\opt} \cdot d}{\gamma}}+ 2 \gamma \cdot \regsq(T) + 64 \gamma \cdot  \log (2\delta^{-1}) + \sqrt{8T \log(2 \delta^{-1})}.
   \end{align*} 
   The regret bound in \pref{thm:greedy} immediately follows by
   choosing \[\gamma = \prn*{\frac{3 T \sqrt{C_{\opt}\cdot d}}{2
     \regsq(T) + 64 \log(2\delta^{-1})}}^{2/3} \vee 1.\] In particular, when \cref{alg:barycentric} is
   invoked as a subroutine with parameter $C=2$,
   \cref{prop:barycentric_approx_opt_design} implies that we may take $C_{\opt}\leq{}4d$.

   \noindent\emph{Computational complexity.}
   We now bound the per-round computational complexity of
   \pref{alg:greedy} when \cref{alg:barycentric} is used as a
   subroutine to compute the approximate optimal design. Outside of
   the call to \pref{alg:barycentric}, \cref{alg:greedy} uses $O(1)$ calls
   to \sqalgtext to obtain $\wh g_t(x_t) \in \R^{d}$ and to update $\wh f_t$,
   and uses a single call to \optalgtext to compute $\wh a_t$. With the
   optimal design $\des_t$ returned by \pref{alg:barycentric}
   (represented as a barycentric spanner), sampling from $p_t$ 
   takes at most $O(d)$ time, since $\abs{\supp(p_t)} \leq d+1$.
outside of \pref{alg:barycentric} adds up to $O(\Tsq + \Topt +d)$. In terms of memory, calling \sqalgtext and \optalgtext takes $O(\Msq + \Mopt)$ units, and maintaining the distribution $p_t$ (the barycentric spanner) takes
   $O(d)$ units,
   so the maximum memory (outside of \pref{alg:barycentric}) is $O(\Msq + \Mopt + d)$.
   The stated results follow from combining the computational complexities analyzed in \cref{prop:barycentric_basic_computational}.
\end{proof}

\section{Proofs and Supporting Results from \cref{sec:minimax_statistical}}
\label{app:minimax_statistical}
In this section we provide supporting results concerning
\pref{alg:igw} (\cref{app:igw_support}), and then give the proof of \cref{thm:igw} (\cref{app:igw_thm}).

\subsection{Supporting Results}
\label{app:igw_support}

\begin{lemma}
    \label{prop:lambda_range}
    In \cref{alg:igw} (Eq. \eqref{eq:igw_reweight}),
    there exists a unique choice of $\lambda > 0$ such that $ \sum_{a \in \cA}^{} p_t(a) = 1$, and its value lies in $[\frac{1}{2}, 1]$.
  \end{lemma}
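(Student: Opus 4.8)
The plan is to view the normalization condition as a root-finding problem for a single scalar function, exploiting the fact that the greedy action $\wh a_t$ carries mass at least $\tfrac12$ under $q_t$ while having zero estimated gap. Concretely, for $\lambda > 0$ define
\begin{align*}
  F(\lambda) \ldef \sum_{a \in \supp(q_t)} \frac{q_t(a)}{\lambda + \eta\prn*{\wh f_t(x_t, \wh a_t) - \wh f_t(x_t, a)}}.
\end{align*}
Writing $\Delta_a \ldef \wh f_t(x_t, \wh a_t) - \wh f_t(x_t, a)$, note that $\Delta_a \geq 0$ for every $a$ since $\wh a_t$ maximizes $\wh f_t(x_t, \cdot)$, and in particular $\Delta_{\wh a_t} = 0$. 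Since $q_t = \tfrac12 \des_t + \tfrac12 \indic_{\wh a_t}$, we also have $q_t(\wh a_t) \geq \tfrac12$. The goal is to show that $F(\lambda) = 1$ has a unique solution, lying in $[\tfrac12, 1]$.

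For existence and uniqueness, I would first observe that $F$ is continuous and strictly decreasing on $(0, \infty)$, since each summand has positive coefficient $q_t(a)$ and is strictly decreasing in $\lambda$. The limiting behavior is driven by the greedy action: as $\lambda \to 0^+$, the term $q_t(\wh a_t)/\lambda \to +\infty$ (using $\Delta_{\wh a_t} = 0$ and $q_t(\wh a_t) > 0$), so $F(\lambda) \to +\infty$; and as $\lambda \to \infty$, clearly $F(\lambda) \to 0$. The intermediate value theorem together with strict monotonicity then yields a unique $\lambda > 0$ with $F(\lambda) = 1$.

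It remains to locate this root in $[\tfrac12, 1]$, for which --- since $F$ is decreasing --- it suffices to verify $F(1) \leq 1 \leq F(\tfrac12)$. The upper bound is immediate: at $\lambda = 1$ each denominator satisfies $1 + \eta \Delta_a \geq 1$, so $F(1) \leq \sum_{a} q_t(a) = 1$, whence $\lambda \leq 1$. For the lower bound I would isolate the greedy-action term: at $\lambda = \tfrac12$ it equals $q_t(\wh a_t)/(\tfrac12 + \eta \Delta_{\wh a_t}) = 2\, q_t(\wh a_t) \geq 1$, and since the remaining terms are nonnegative, $F(\tfrac12) \geq 1$, whence $\lambda \geq \tfrac12$.

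There is no serious obstacle here; the only point requiring care is the bookkeeping around the greedy action, which simultaneously (i) forces the blow-up of $F$ near $0$, guaranteeing a positive root, and (ii) supplies the slack $q_t(\wh a_t) \geq \tfrac12$ that pins the root above $\tfrac12$. I would also remark that the sum over $\supp(q_t)$ coincides with the sum over all of $\cA$ appearing in the statement, since $p_t(a) = 0$ whenever $q_t(a) = 0$.
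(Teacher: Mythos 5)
Your proposal is correct and follows essentially the same argument as the paper: both define the normalization sum as a function of $\lambda$, use its continuity and strict monotonicity, and pin the root in $[\tfrac12,1]$ via the two evaluations $F(\tfrac12)\geq 2\,q_t(\wh a_t)\geq 1$ (using the zero gap at the greedy action) and $F(1)\leq \sum_a q_t(a)=1$ (using nonnegative gaps). The only cosmetic difference is your extra remark on the limits as $\lambda\to 0^+$ and $\lambda\to\infty$, which the paper's version does not need since the two endpoint evaluations already suffice.
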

\begin{proof}[\pfref{prop:lambda_range}]
Define $h(\lambda) \ldef \sum_{a \in \supp(q_t) }^{} \frac{q_t(a)}{\lambda + \eta \prn{\wh f_t(x_t, \wh a_t) - \wh f_t(x_t, a)}} $. We first notice that $h(\lambda)$ is continuous and strictly decreasing over  $(0, \infty)$. 
We further have 
\begin{align*}
	h ({1}/{2})
	& \geq \frac{q_t(\wh a_t)}{{1}/{2} + \eta \prn{\wh f_t(x_t, \wh a_t) - \wh f_t(x_t, \wh a_t)}} \geq \frac{1 /2}{1 /2} =1;
\end{align*}
and 
\begin{align*}
        h(1) \leq \sum_{a \in \supp(q_t) } q_t(a) = \frac{1}{2} + \frac{1}{2}\sum_{a \in \supp(\des_t)} \des_t(a) = 1.
\end{align*}
As a result, there exists a unique normalization constant $\lambda^{\star} \in [\frac{1}{2}, 1]$ such that $ h(\lambda^{\star}) = 1 $.
    \end{proof}
\propDecIGW* 
\begin{proof}[\pfref{prop:dec_igw}]
As in the proof of \pref{prop:dec_greedy}, we use the linear structure
of the action space to rewrite the \dectext
  $\dec_{\gamma}(\cF;\fhat,x)$ as
\begin{align*}
 \dec_{\gamma}  (\cG; \wh{g}, x)  \ldef \inf_{p \in \Delta(\cA)} \sup_{a^\star \in \cA} \sup_{g^\star \in \cG} \E_{a \sim p} \Big[\ang[\big]{\phi(x,a^\star) - \phi(x, a), g^\star(x)}  - {\gamma} \cdot \prn[\big]{\ang[\big]{\phi(x,a), g^\star(x) - \widehat g(x)}}^2 \bigg],
\end{align*}
Where $\ghat$ is such that $\fhat=f_{\ghat}$. We will bound the quantity above uniformly
  for all $x\in\cX$ and $\wh{g}:\cX\to\bbR^{d}$.

   Denote $\wh a \ldef \argmax_{a \in \cA} \ang[\big]{\phi(x,a), \wh g(x)}$, $a^\star \ldef \argmax_{a \in \cA} \ang[\big]{\phi(x,a), g^\star(x)}$
   and $\des \in \Delta(\cA)$ be a $C_{\opt}$-approximate optimal
   design with respect to the reweighted embedding $\wb
   \phi(x,\cdot)$). We use the setting $\eta = \frac{\gamma}{C_{\opt} \cdot d}$ throughout the proof.
   Recall that for the sampling distribution in \cref{alg:igw}, we set
   $q \ldef \frac{1}{2} \des + \frac{1}{2} \indic_{\wh a}$ and define
		\begin{align}
		    p(a) = \frac{q(a)}{\lambda + \frac{\gamma}{C_{\opt}\cdot d} \paren*{ \ang[\big]{\phi(x,\wh a) - \phi(x,a), \wh g(x)}}}, \label{eq:igw_reweight_linear}
		\end{align}
                where $\lambda \in [\frac{1}{2}, 1]$ is a
                normalization constant (cf. \cref{prop:lambda_range}).

We decompose the regret of the distribution $p$ in
                \pref{eq:igw_reweight_linear} as
\begin{align}
    \E_{a \sim p} \sq[\Big]{\ang[\big]{\phi(x,a^\star) - \phi(x, a), g^\star(x)}} & = 
    \E_{a \sim p} \sq[\Big]{\ang[\big]{\phi(x, \wh a) - \phi(x,a), \wh g(x)}} 
    + \E_{a \sim p} \sq[\Big]{\ang[\big]{\phi(x,a), \wh g(x) - g^\star(x)}} \nonumber\\ 
    & \quad +  \ang[\big]{\phi(x,a^\star),  g^\star(x) - \wh g(x)}
    + \ang[\big]{\phi(x,a^\star) - \phi(x,\wh a), \wh g(x)}.
    \label{eq:dec_igw_decompostion}
\end{align}
    Writing out the expectation, the first term in \cref{eq:dec_igw_decompostion} is upper bounded as follows.
   \begin{align*}
       \E_{a \sim p} \sq[\Big]{\ang[\big]{\phi(x, \wh a) - \phi(x,a), \wh g(x)}} & = 
       \sum_{a \in \supp(\des) \cup \crl{\wh a}} p(a) \cdot  \ang[\big]{\phi(x, \wh a) - \phi(x,a), \wh g(x)} \\
       & < \sum_{a \in \supp(\des)} \frac{\des(a)/2}{\frac{\gamma}{C_{\opt}\cdot d} \paren*{ \ang[\big]{\phi(x,\wh a) - \phi(x,a), \wh g(x)}}}  \cdot \ang[\big]{\phi(x, \wh a) - \phi(x,a), \wh g(x)}\\ 
       & \leq \frac{C_{\opt}\cdot d}{ 2 \gamma}, 
   \end{align*}
   where we use that $\lambda > 0$ in the second inequality (with the
   convention that $\frac{0}{0} = 0$).
   
   The second term in \cref{eq:dec_igw_decompostion} can be upper
   bounded as in the proof of \cref{prop:dec_greedy}, by applying the
   AM-GM inequality:
   \begin{align*}
       \E_{a \sim p} \sq[\Big]{\ang[\big]{\phi(x,a), g^\star(x) - \wh g(x)}} 
       \leq \frac{\gamma}{2} \cdot \E_{a \sim p} \sq[\Big]{\prn[\big]{\ang[\big]{\phi(x,a), \wh g(x) - g^\star(x)}}^2 }
       + \frac{1}{2 \gamma}.
   \end{align*}
   
The third term in \pref{eq:dec_igw_decompostion} is the most
   involved. To begin, we define $V(p) \ldef \E_{a \sim p}
   \sq{\phi(x,a) \phi(x,a)^\top}$ and apply the following standard bound:
   \begin{align}
        \ang[\big]{\phi(x,a^\star), \wh g(x) - g^\star(x)} 
        & \leq \nrm[\big]{\phi(x,a^\star)}_{V(p)^{-1}} \cdot \nrm[\big]{g^\star(x) - \wh g(x)  }_{V(p)} \nonumber\\
        & \leq \frac{1}{2 \gamma} \cdot \nrm[\big]{\phi(x,a^\star)}^2_{V(p)^{-1}} + \frac{\gamma}{2} \cdot \nrm[\big]{g^\star(x) - \wh g(x)}^2_{V(p)} \nonumber \\
        & = \frac{1}{2 \gamma} \cdot \nrm[\big]{\phi(x,a^\star)}^2_{V(p)^{-1}} + \frac{\gamma}{2}\cdot \E_{a \sim p} \sq[\Big]{\prn[\big]{\phi(x,a), g^\star(x) - \wh g(x)}^2}, \label{eq:dec_igw_1}
   \end{align}
   where the second line follows from the AM-GM inequality. The second
   term in \cref{eq:dec_igw_1} matches the bound we desired, so it
   remains to bound the first term. Let $\dess$ be the following sub-probability measure:
   \begin{align*}
       \dess(a) \ldef
		\frac{\des(a)/2}{\lambda + \frac{\gamma}{C_{\opt}\cdot d} \paren*{ \ang[\big]{\phi(x,\wh a) - \phi(x,a), \wh g(x)}}},
   \end{align*}
   and let $V(\dess) \ldef \E_{a \sim \dess} \sq{\phi(x,a)
     \phi(x,a)^\top}$. We clearly have $V(p) \succeq V(\dess)$ from
   the definition of $p$ (cf. \cref{eq:igw_reweight_linear}). We
   observe that
   \begin{align*}
       V(\dess) & = \sum_{a \in \supp(\dess)} \dess(a) \phi(x,a) \phi(x,a)^\top \\
       & = \frac{1}{2} \cdot \sum_{a \in \supp(\des)} \des(a) \wb \phi(x,a) \wb \phi(x,a)^\top \cdot \frac{1+ \frac{\gamma}{C_{\opt}\cdot d} \paren*{ \ang[\big]{\phi(x,\wh a) - \phi(x,a), \wh g(x)}}}{\lambda + \frac{\gamma}{C_{\opt}\cdot d} \paren*{ \ang[\big]{\phi(x,\wh a) - \phi(x,a), \wh g(x)}}}\\
       & \succeq \frac{1}{2} \cdot \sum_{a \in \supp(\des)} \des(a) \wb \phi(x,a) \wb \phi(x,a)^\top \rdef \frac{1}{2} \wb V(\des), 
   \end{align*}
   where the last line uses that $\lambda \leq 1$. Since $\wb V(\des)$
   is positive-definite by construction, we have that $V(p)^{-1} \preceq V(\dess)^{-1} \preceq 2 \cdot \wb V(\des)^{-1}$. As a result, 
   \begin{align}
      \frac{1}{2 \gamma} \cdot \nrm[\big]{\phi(x,a^\star)}^2_{V(p)^{-1}} & \leq  
      \frac{1}{ \gamma} \cdot \nrm[\big]{\phi(x,a^\star)}^2_{\wb V(\des)^{-1}} \nonumber \\
      & = \frac{1 + \frac{\gamma}{ C_{\opt}\cdot d} \prn[\big]{\ang[\big]{\phi(x,\wh a) - \phi(x,a^\star), \wh g(x)}}}{\gamma} \cdot  \nrm[\big]{\wb \phi(x,a^\star)}^2_{\wb V(\des)^{-1}} \nonumber \\
      & \leq \frac{C_{\opt}\cdot d}{\gamma} + \ang[\big]{\phi(x,\wh a) - \phi(x,a^\star), \wh g(x)}, \label{eq:dec_igw_2}
   \end{align}
   where the last line uses that $\nrm[\big]{\wb
     \phi(x,a^\star)}^2_{\wb V(\des)^{-1}} \leq C_{\opt} \cdot d$,
   since $\des$ is a $C_{\opt}$-approximate optimal design for the set $\crl*{\wb \phi(x,a)}_{a\in\cA}$. Finally, we observe that the second term in \cref{eq:dec_igw_2} is cancelled out by the forth term in \cref{eq:dec_igw_decompostion}.
   
   Summarizing the bounds on the terms in
   \cref{eq:dec_igw_decompostion} leads to:
   \begin{align*}
       \dec_\gamma(\cF) = \inf_{p \in \Delta(\cA)} \sup_{a^\star \in \cA} \sup_{g^\star \in \cG}\dec_\gamma(\cG; \wh g, x) \leq \frac{C_{\opt}\cdot d}{2 \gamma} + \frac{1}{2 \gamma} + \frac{C_{\opt} \cdot d}{ \gamma } \leq \frac{2 \, C_{\opt} \cdot d}{ \gamma }.
   \end{align*}
\end{proof}

\subsection{\pfref{thm:igw}}
\label{app:igw_thm}
\thmIGW*
\begin{proof}
    Combining \cref{prop:dec_igw} with \cref{lm:regret_decomp}, we have 
    \begin{align*}
        \regcb(T) \leq 2T \cdot {\frac{C_{\opt} \cdot d}{\gamma}}+ 2 \gamma \cdot \regsq(T) + 64 \gamma \cdot \log (2\delta^{-1}) + \sqrt{8T \log(2 \delta^{-1})}.
    \end{align*} 
    The theorem follows by choosing 
    \begin{align*}
        \gamma = \prn*{ \frac{C_{\opt}\cdot d \, T}{\regsq(T) + 32 \log(2 \delta^{-1})}}^{1/2}.
    \end{align*}
    In particular, when \cref{alg:barycentric_reweight} is invoked
    as the subroutine with parameter $C=2$, we may take $C_{\opt} = 4
    d$.

   \noindent\emph{Computational complexity.}
    We now discuss the per-round computational complexity of
    \cref{alg:igw}. 
{We analyze a variant of the sampling rule specified in \cref{app:exp_modification} that does not require computation of the normalization constant.}
    Outside of the runtime and memory requirements
    required to compute the barycentric spanner using
    \pref{alg:barycentric_reweight}, which are stated in
    \pref{thm:barycentric_reweight}, \cref{alg:igw} uses $O(1)$ calls
    to the oracle \sqalgtext to obtain  $\wh g_t(x_t) \in \R^{d}$ and update $\wh f_t$,
    and uses a single call to \optalgtext to compute $\wh a_t$. 
With $\wh g_t(x_t)$ and $\wh a_t$, we can compute $\wh f_t(x_t, \wh a_t) - \wh f_t(x_t, a) = \ang{\phi(x_t,\wh a_t) - \phi(x_t,a), \wh g_t(x_t)}$ in $O(d)$ time for any $a \in \cA$; 
   thus, with the optimal design $\des_t$ returned by \pref{alg:barycentric_reweight}
   (represented as a barycentric spanner), 
we can construct the sampling distribution $p_t$ in  $O(d^2)$ time. 
Sampling from $p_t$ takes  $O(d)$ time since $\abs{\supp(p_t)} \leq d+1$. 
This adds up to runtime $O(\Tsq + \Topt + d^2)$.
   In terms of memory, calling \sqalgtext and \optalgtext takes $O(\Msq + \Mopt)$ units, and maintaining the distribution $p_t$ (the barycentric spanner) takes
   $O(d)$ units,
   so the maximum memory (outside of \pref{alg:barycentric_reweight}) is $O(\Msq + \Mopt + d)$.
   The stated results follow from combining the computational
   complexities analyzed in \cref{thm:barycentric_reweight}
, together with the choice of $\gamma$ described above.
\end{proof}

\section{Proofs and Supporting Results from \cref{sec:minimax_computational}}
\label{app:minimax_computational}
This section of the appendix is dedicated to the analysis of \pref{alg:barycentric_reweight}, and organized as follows.
\begin{itemize}
\item First, in \pref{app:argmax_reweight}, we analyze \pref{alg:igw_argmax}, a subroutine of \pref{alg:barycentric_reweight} which implements a linear optimization oracle for the reweighted action set used in the algorithm.
\item Next, in \cref{app:barycentric_reweight}, we prove \cref{thm:barycentric_reweight}, the main theorem concerning the performance of \pref{alg:barycentric_reweight}.
\item Finally, in \cref{app:det_assumption}, we discuss settings in which the initialization step required by \cref{alg:barycentric_reweight} can be performed efficiently.
\end{itemize}
Throughout this section of the appendix, we assume that the context $x\in\cX$ and estimator $\ghat:\cX\to\bbR^{d}$---which are arguments to \pref{alg:barycentric_reweight} and \pref{alg:igw_argmax}---are fixed.

\subsection{Analysis of \pref{alg:igw_argmax} (Linear Optimization Oracle for Reweighted Embeddings)}
\label{app:argmax_reweight}
A first step is to construct an (approximate) argmax oracle (after taking absolute value) with respect to the reweighted embedding $\wb \phi$.
Recall that the goal of \pref{alg:igw_argmax} is to implement a linear optimization oracle for the reweighted embeddings constructed by \cref{alg:barycentric_reweight}. That is, for any $\theta\in\bbR^{d}$,
we would like to compute an action that (approximately) solves
\begin{align*}
    \argmax_{a \in \cA} \abs[\big]{\ang[\big]{\wb \phi(x,a), \theta}} = 
    \argmax_{a \in \cA} {\ang[\big]{\wb \phi(x,a), \theta}}^2. 
\end{align*} 
Define
\begin{equation}
  \label{eq:iota}
  \iota(a) \ldef \ang{\wb \phi(x,a), \theta}^2,\quad\text{and}\quad
  a^\star \ldef \argmax_{a \in \cA} \iota(a).
\end{equation}
The main result of this section, \pref{prop:grid_search}, shows that \cref{alg:igw_argmax} identifies an action that achieves the maximum value in \pref{eq:iota} up to a multiplicative constant.

    \begin{theorem}
      \label{prop:grid_search}
      Fix any $\eta > {}0$, $r\in(0,1)$. Suppose $\zeta \leq \sqrt{\iota(a^{\star})} \leq 1$ for some $\zeta>0$. Then \cref{alg:igw_argmax} identifies an action $\check a$ such that $\sqrt{\iota(\check a)} \geq \frac{\sqrt{2}}{2}\cdot{}\sqrt{\iota(a^\star)}$, and does so with runtime $O(\prn{\Topt+d} \cdot \log (e \vee \frac{\eta}{\zeta}))$ and maximum memory $O(\Mopt + \log (e \vee \frac{\eta}{\zeta}) + d)$.
    \end{theorem}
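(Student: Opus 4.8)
The plan is to exploit the linearity of $\wh f$ in $\phi$ to reduce the scalar-nonlinear objective $\iota(a)=\ang{\wb\phi(x,a),\theta}^2$ to a short sequence of \emph{linear} optimization problems over $\cA$, each dispatched to \optalgtext. Writing $\wh f(x,a)=\ang{\phi(x,a),\wh g(x)}$ and abbreviating $u(a)\ldef\ang{\phi(x,a),\theta}$, $v(a)\ldef\wh f(x,\wh a)-\wh f(x,a)\ge 0$, and $w(a)\ldef 1+\eta v(a)\ge 1$, we have $\iota(a)=u(a)^2/w(a)$ (cf.\ \cref{eq:reweighting}). The key algebraic fact I would record first is the elementary tangent bound $u^2/w\ge 2\eps u-\eps^2 w$, valid for every $\eps\in\R$ with equality at $\eps=u/w$. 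Since $2\eps u-\eps^2 w=(2\eps u-\eps^2\eta v)-\eps^2$ and, using $\ang{\phi(x,a),\wh g(x)}=\wh f(x,\wh a)-v(a)$, the quantity $2\eps u(a)-\eps^2\eta v(a)$ equals $\ang{\phi(x,a),\wb\theta}$ up to an additive constant independent of $a$ (with $\wb\theta=2\eps\theta+\eps^2\eta\,\wh g(x)$ as in \cref{alg:igw_argmax}), the action $a_\eps$ returned by \optalgtext on $\wb\theta$ maximizes $2\eps u(a)-\eps^2 w(a)$ over $\cA$.

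Combining these two observations gives, for any grid value $\eps$, the chain
\begin{equation*}
\iota(a_\eps)\ \ge\ 2\eps\,u(a_\eps)-\eps^2 w(a_\eps)\ \ge\ 2\eps\,u(a^\star)-\eps^2 w(a^\star),
\end{equation*}
where the first step is the tangent bound at $a_\eps$ and the second is the optimality of $a_\eps$. Setting $\eps^\star\ldef u(a^\star)/w(a^\star)$ and $c\ldef\eps/\eps^\star$, the right-hand side simplifies to $(2c-c^2)\,\iota(a^\star)$. I would then check that $2c-c^2\ge\tfrac12$ for all $c\in[\tfrac34,1]$ (in fact $\ge\tfrac{15}{16}$), so it suffices to exhibit one grid point $\eps$ of the same sign as $\eps^\star$ with $\eps/\eps^\star\in[\tfrac34,1]$. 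Because \cref{alg:igw_argmax} finally returns $\check a=\argmax_{a\in\wh\cA}\iota(a)$ and $a_\eps\in\wh\cA$, this yields $\iota(\check a)\ge\tfrac12\iota(a^\star)$, i.e.\ $\sqrt{\iota(\check a)}\ge\tfrac{\sqrt2}{2}\sqrt{\iota(a^\star)}$, as claimed. The sign is handled automatically, since $\cE$ contains both $\pm(\tfrac34)^i$ and $(2c-c^2)\iota(a^\star)$ depends only on $c$.

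The main work—and the step I expect to be most delicate—is certifying that the geometric grid $\cE=\{\pm(3/4)^i\}_{i=1}^N$ actually contains such an $\eps$, which amounts to locating $|\eps^\star|$ inside the grid's reach $[(3/4)^N,\,3/4]$. For the upper end, $|\eps^\star|=\sqrt{\iota(a^\star)}/\sqrt{w(a^\star)}\le\sqrt{\iota(a^\star)}\le 1$ since $w(a^\star)\ge 1$; the residual case $|\eps^\star|\in(3/4,1]$ is absorbed by the endpoint $\eps=\pm 3/4$, which still gives $c\in[\tfrac34,1]$. For the lower end I would use that $\wh f\in[-1,1]$, so $v(a^\star)\le 2$, hence $w(a^\star)\le 1+2\eta$ and $|\eps^\star|\ge\zeta/\sqrt{1+2\eta}$ via the hypothesis $\sqrt{\iota(a^\star)}\ge\zeta$. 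Thus $(3/4)^N\le|\eps^\star|$ as soon as $N\gtrsim\log_{4/3}(\sqrt{1+2\eta}/\zeta)=O(\log(e\vee\tfrac\eta\zeta))$, which is exactly where the $\log(e\vee\eta/\zeta)$ dependence enters. The bookkeeping of tracking the magnitude and sign of $\eps^\star$ against a discrete grid, and of pinning the required depth $N$ to $\zeta$, is the crux of the argument.

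The complexity is then immediate. The loop runs $\abs{\cE}=2N=O(\log(e\vee\tfrac\eta\zeta))$ times, each iteration forming $\wb\theta$ and evaluating a candidate in $O(d)$ time and issuing one call to \optalgtext at cost $O(\Topt)$, and the closing $\argmax$ over the $O(N)$ collected candidates costs $O(d)$ apiece. This gives runtime $O((\Topt+d)\log(e\vee\tfrac\eta\zeta))$ and memory $O(\Mopt+d+\log(e\vee\tfrac\eta\zeta))$ for the working vectors and the candidate set, matching the bounds in the statement.
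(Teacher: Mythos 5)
Your proposal is correct and follows essentially the same route as the paper's proof: the variational identity $u^2/w=\sup_{\eps}\{2\eps u-\eps^2 w\}$, the reduction to oracle calls on $\wb\theta=2\eps\theta+\eps^2\eta\,\wh g(x)$, the range bound $\zeta/\sqrt{1+2\eta}\le\abs{\eps^\star}\le 1$, and the $(3/4)$-multiplicative grid argument all match. Your parametrization $c=\eps/\eps^\star$ yielding the factor $2c-c^2\ge 15/16$ is a modest streamlining of the paper's corresponding lemma (which instead splits into two cases according to the sign of $\wb\eps\cdot\ang{\phi(x,a^\star),\theta}$), but the substance of the argument is the same.
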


    \begin{proof}[\pfref{prop:grid_search}]
      Recall from \cref{eq:reweighting} that we have
\begin{align*}
   {\ang[\big]{\wb \phi(x,a), \theta}}^2 = 
    \prn*{ \frac{{\ang[\big]{\phi(x,a), \theta}}}{\sqrt{1 + \eta \ang[\big]{\phi(x,\wh a) - \phi(x,a), \wh g(x)}}} }^2
    =  \frac{{\ang[\big]{\phi(x,a), \theta}}^2}{{1 + \eta \ang[\big]{\phi(x,\wh a) - \phi(x,a), \wh g(x)}}},
\end{align*}
where $\wh a \ldef \argmax_{a \in \cA} \tri*{\phi(x,a),\ghat(x)}$; note that the denominator is at least $1$. To proceed, we use that for any $X\in\bbR$ and $Y^2 > 0$, we have 
\begin{align*}
    \frac{X^2}{Y^2} = \sup_{\eps \in \R} \curly*{2 \eps X - \eps^2 Y^2}. 
\end{align*}
Taking $X = \ang[\big]{\phi(x,a), \theta}$ and $Y^2 = {1 + \eta \ang[\big]{\phi(x,\wh a) - \phi(x,a), \wh g(x)}}$ above, we can write
\begin{align}
     {\ang[\big]{\wb \phi(x,a), \theta}}^2 & = 
     \frac{{\ang[\big]{\phi(x,a), \theta}}^2}{{1 + \eta \ang[\big]{\phi(x,\wh a) - \phi(x,a), \wh g(x)}}} \nonumber \\
& = \sup_{\eps \in \bbR} \crl[\Big]{2 \eps \ang[\big]{\phi(x,a), \theta} - \eps^2 \cdot \prn[\big]{{1 + \eta \ang[\big]{\phi(x,\wh a) - \phi(x,a), \wh g(x)}}}} \label{eq:quotient_linear_o} \\
    & =  \sup_{\eps \in \bbR}\crl[\Big]{\ang[\big]{\phi(x,a), 2 \eps \theta + \eta \eps^2 \wh g(x)} - \eps^2 - \eta \eps^2 \ang[\big]{\phi(x,\wh a), \wh g(x)}} \label{eq:quotient_linear}.
\end{align}
The key property of this representation is that for any \emph{fixed} $\eps \in \bbR$, \cref{eq:quotient_linear} is a linear function of the \emph{unweighted} embedding $\phi$, and hence can be optimized using \optalgtext.
In particular, for any fixed $\eps \in \bbR$, consider the following linear optimization problem, which can be solved by calling \optalgtext:
\begin{align}
     \argmax_{a \in \cA} \crl[\Big]{2 \eps \ang[\big]{\phi(x,a), \theta} - \eps^2 \cdot \prn[\big]{{1 + \eta \ang[\big]{\phi(x,\wh a) - \phi(x,a), \wh g(x)}}}} \rdef \argmax_{a\in \cA} W(a;\eps)\label{eq:lin_opt_fix_eps}.
\end{align}
Define
\begin{align}
\label{eq:opt_eps}
    \eps^\star \ldef \frac{\ang{\phi(x,a^\star), \theta}}{{1 + \eta \ang{\phi(x,\wh a) - \phi(x,a^\star), \wh g(x)}}}.
\end{align}
If $\eps^{\star}$ was known (which is not the case, since $\astar$ is unknown), we could set $\eps = \eps^\star$ in \cref{eq:lin_opt_fix_eps} and compute an action $\wb a \ldef \argmax_{a \in \cA} W(a; \eps^{\star})$ using a single oracle call. We would then have $\iota(\wb a) \geq W(\wb a; \eps^{\star}) \geq W( a^{\star}; \eps^{\star}) = \iota(a^{\star})$, which follows because $\veps^{\star}$ is the maximizer in \pref{eq:quotient_linear_o} for $a=\astar$.

To get around the fact that $\veps^{\star}$ is unknown, \pref{alg:igw_argmax} performs a grid search over possible values of $\veps$. To show that the procedure succeeds, we begin by bounding the range of $\eps^\star$. With some rewriting, we have 
\begin{align*}
    \abs{\eps^\star} = \frac{\sqrt{\iota(a^\star)}}{\sqrt{{1 + \eta \ang{\phi(x,\wh a) - \phi(x,a^\star), \wh g(x)}}}}.
\end{align*}
Since $0 < \zeta \leq \sqrt{\iota(a^{\star})} \leq 1$, we have
        \begin{align*}
             \wb \zeta \ldef
            {\frac{\zeta}{\sqrt{1+2\eta}}} \leq  \abs{\eps^\star}  \leq 1.
        \end{align*}
\pref{alg:igw_argmax} performs a $(3/4)$-multiplicative grid search over the intervals $[\wb \zeta,1]$ and $[-1,- \wb\zeta]$, which uses $2\ceil{\log_{\frac{4}{3}} \prn{\wb \zeta^{-1}}} = O(\log (e \vee \frac{\eta}{\zeta}))$ grid points. It is immediate to that the grid contains $\wb \eps\in\bbR$ such that $\wb \eps \cdot \eps^{\star} > 0$ and
        $\frac{3}{4} \abs*{\eps^\star} \leq \abs*{  \wb \eps} \leq \abs*{ \eps^\star}$. Invoking \cref{lm:reweighted_opt_approx} (stated and proven in the sequel) with $\bar{a} \ldef \argmax_{a\in\cA}W(a; \wb \veps)$ implies that $\iota(\wb a) \geq \frac{1}{2} \iota(a^\star)$. To conclude, recall that \cref{alg:igw_argmax} outputs the maximizer
        \[
          \acheck \ldef \argmax_{a\in\wh{\cA}}\iota(a),
        \]
        where $\wh{\cA}$ is the set of argmax actions encountered by the grid search. Since $\bar{a}\in\wh{\cA}$, we have $\iota(\check a) \geq \iota (\wb a) \geq \frac{1}{2}\iota(a^\star)$ as desired.

	\noindent \emph{Computational complexity.}
        Finally, we bound the computational complexity of \cref{alg:igw_argmax}. \cref{alg:igw_argmax} maintains a grid of $O(\log (e \vee \frac{\eta}{\zeta}))$ points, and hence calls the oracle \optalgtext $O(\log (e \vee \frac{\eta}{\zeta}))$ in total; this takes $O(\Topt \cdot \log (e \vee \frac{\eta}{\zeta}) )$ time.
	Computing the final maximizer from the set $\wh{\cA}$, which contains $O(\log (e \vee \frac{\eta}{\zeta}))$ actions, takes $O(d \log (e \vee \frac{\eta}{\zeta}))$ time (compute each $\ang{\wb \phi(x,a), \wb \theta}^2$ takes $O(d)$ time). Hence, the total runtime of \cref{alg:igw_argmax} adds up to $O(\prn{\Topt+d} \cdot \log (e \vee \frac{\eta}{\zeta}))$.
	The maximum memory requirement is $O(\Mopt + \log (e \vee \frac{\eta}{\zeta}) + d)$, follows from calling \optalgtext, and storing $\cE, \wh \cA$ and other terms such as $\wh g(x), \theta, \wb \theta, \phi(x,a), \wb \phi(x,a)$.
    \end{proof}

\subsubsection{Supporting Results}

\begin{lemma}
    \label{lm:reweighted_opt_approx}
    Let $\eps^{\star}$ be defined as in \cref{eq:opt_eps}.
Suppose $\wb \eps \in \R$ has $  \wb \eps\cdot \eps^\star > 0$ and $\frac{3}{4} \abs*{\eps^\star} \leq \abs*{  \wb \eps} \leq \abs*{ \eps^\star} $. 
Then, if $  \wb a \ldef \argmax_{a\in\cA}W(a;\wb{\veps})$, we have $\iota(\wb  a) \geq \frac{1}{2} \iota(a^\star)$.
\end{lemma}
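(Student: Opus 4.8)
The plan is to reduce the claim to a one–dimensional quadratic comparison by exploiting the variational representation of $\iota$ already established in \pref{eq:quotient_linear_o}. Recall that $\iota(a) = \sup_{\eps\in\bbR} W(a;\eps)$, where $W(a;\eps) = 2\eps\ang*{\phi(x,a),\theta} - \eps^2\prn*{1 + \eta\ang*{\phi(x,\wh a) - \phi(x,a),\wh g(x)}}$. Two facts follow immediately. First, since $\iota(\wb a)$ is a supremum over $\eps$, specializing to the particular value $\wb\eps$ gives $\iota(\wb a) \geq W(\wb a;\wb\eps)$. Second, since $\wb a$ maximizes $W(\cdot\,;\wb\eps)$ over $\cA$ by definition, $W(\wb a;\wb\eps) \geq W(a^\star;\wb\eps)$. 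Chaining these yields $\iota(\wb a) \geq W(a^\star;\wb\eps)$, so it suffices to show $W(a^\star;\wb\eps) \geq \tfrac12\,\iota(a^\star)$.

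First I would introduce the shorthand $X \ldef \ang*{\phi(x,a^\star),\theta}$ and $Y^2 \ldef 1 + \eta\ang*{\phi(x,\wh a) - \phi(x,a^\star),\wh g(x)}$, noting $Y^2\geq 1$. Then $\iota(a^\star) = X^2/Y^2$, the maximizer in \pref{eq:opt_eps} is $\eps^\star = X/Y^2$, and $W(a^\star;\eps) = 2\eps X - \eps^2 Y^2$. Substituting $X = \eps^\star Y^2$ and setting $t \ldef \wb\eps/\eps^\star$ (the hypotheses force $\eps^\star\neq 0$, since $\wb\eps\cdot\eps^\star>0$) turns the target ratio into a function of $t$ alone:
\[
  \frac{W(a^\star;\wb\eps)}{\iota(a^\star)} = \frac{2\wb\eps\,\eps^\star - \wb\eps^2}{(\eps^\star)^2} = 2t - t^2 = t(2-t).
\]

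Next I would pin down the range of $t$. The sign condition $\wb\eps\cdot\eps^\star > 0$ gives $t > 0$, while $\tfrac34\abs{\eps^\star} \leq \abs{\wb\eps} \leq \abs{\eps^\star}$ gives $\abs{t}\in[\tfrac34,1]$, so together $t\in[\tfrac34,1]$. On this interval $t(2-t)$ is increasing (its derivative $2-2t$ is nonnegative), hence minimized at $t=\tfrac34$, where it equals $\tfrac{15}{16}$. Therefore $W(a^\star;\wb\eps) \geq \tfrac{15}{16}\,\iota(a^\star) \geq \tfrac12\,\iota(a^\star)$, which completes the argument.

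I expect the only real subtlety to be the role of the sign condition $\wb\eps\cdot\eps^\star > 0$: without it, $t$ could be negative and $t(2-t)$ could fall below $\tfrac12$ (indeed become negative), so this is precisely what rules out a grid point landing on the wrong side of the origin. This is exactly why the grid search in \pref{alg:igw_argmax} sweeps both $[\wb\zeta,1]$ and $[-1,-\wb\zeta]$ rather than a single interval. Everything else is a routine quadratic estimate, and the bound is in fact comfortably stronger ($\tfrac{15}{16}$) than the stated $\tfrac12$.
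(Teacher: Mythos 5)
Your proof is correct and takes essentially the same route as the paper's: both chain $\iota(\wb a)\geq W(\wb a;\wb\eps)\geq W(a^\star;\wb\eps)$ and then compare $W(a^\star;\wb\eps)$ to $\iota(a^\star)$ using the two grid conditions on $\wb\eps$. The only difference is the finishing estimate: the paper argues by a sign case analysis (whose second case is in fact vacuous, since $\eps^\star\ang{\phi(x,a^\star),\theta}=\iota(a^\star)>0$ under the hypotheses), whereas your parametrization $t=\wb\eps/\eps^\star\in[\tfrac34,1]$ evaluates the ratio exactly as $t(2-t)$ and yields the marginally sharper constant $\tfrac{15}{16}$ in place of $\tfrac12$.
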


\begin{proof}[\pfref{lm:reweighted_opt_approx}]
  First observe that using the definition of $\iota(a)$, along with \cref{eq:quotient_linear_o} and \cref{eq:lin_opt_fix_eps}, we have $\iota(\wb a) \geq W(\wb a; \wb \eps) \geq W(a^\star; \wb \eps)$,
  where the second inequality uses that $\abar\ldef \argmax_{a\in\cA}W(a;\vepsbar)$. Since $\wb \eps \cdot \eps^\star > 0$, we have $\sgn(\wb \eps \cdot \ang{\phi(x,a^\star), \theta}) = \sgn( \eps^\star \cdot \ang{\phi(x,a^\star), \theta})$.
  If $\sgn(\wb \eps \cdot \ang{\phi(x,a^\star), \theta}) \geq 0$, then since $\frac{3}{4} \abs*{\eps^\star} \leq \abs*{  \wb \eps} \leq \abs*{ \eps^\star} $, we have  
    \begin{align*}
      W(\astar;\vepsbar)&=2 \wb \eps \cdot \ang[\big]{\phi(x,a^\star), \theta} - \wb \eps^2 \cdot \prn[\big]{{1 + \eta \ang[\big]{\phi(x,\wh a) - \phi(x,a^\star), \wh g(x)}}}\\
      &\geq 
     \frac{3}{2} \eps^\star \cdot \ang[\big]{\phi(x,a^\star), \theta} - (\eps^\star)^2 \cdot \prn[\big]{{1 + \eta \ang[\big]{\phi(x,\wh a) - \phi(x,a^\star), \wh g(x)}}} \\
      &= \frac{1}{2} \frac{\ang{\phi(x,a^\star), \theta}^2}{{1 + \eta \ang{\phi(x,\wh a) - \phi(x,a^\star), \wh g(x)}}} 
      = \frac{1}{2} \iota(a^\star),
    \end{align*}
    where we use that ${{1 + \eta \ang[\big]{\phi(x,\wh a) - \phi(x,a^\star), \wh g(x)}}} \geq 1$ for the first inequality and use the definition of $\eps^\star$ for the second equality.

    On the other hand, when $\sgn(\wb \eps \cdot \ang{\phi(x,a^\star), \theta}) < 0$, we similarly have  
    \begin{align*}
      W(\astar;\vepsbar)&=2 \wb \eps \cdot \ang[\big]{\phi(x,a^\star), \theta} - \wb \eps^2 \cdot \prn[\big]{{1 + \eta \ang[\big]{\phi(x,\wh a) - \phi(x,a^\star), \wh g(x)}}}\\
      & \geq 
     2\eps^\star  \cdot\ang[\big]{\phi(x,a^\star), \theta} - (\eps^\star)^2 \cdot \prn[\big]{{1 + \eta \ang[\big]{\phi(x,\wh a) - \phi(x,a^\star), \wh g(x)}}} 
       = \iota(a^\star).
    \end{align*}
   Summarizing both cases, we have $\iota(\bar a) \geq \frac{1}{2}\iota(a^\star)$.
    \end{proof}

\subsection{\pfref{thm:barycentric_reweight}}
\label{app:barycentric_reweight}

\thmBarycentricReweight*

\begin{proof}[\pfref{thm:barycentric_reweight}]
We begin by examining the range of $\sqrt{\iota(a^{\star})}$ used in \cref{prop:grid_search}.
Note that the linear function $\theta$ passed as an argument to \cref{alg:barycentric_reweight} takes the form $\wb \phi(x,a) \mapsto \det(\wb \phi(x,\cS_i(a)))$, i.e., ${\ang{\wb \phi(x,a), \theta}} =  {\det(\wb \phi(x,\cS_i(a)))}$, where $\cS_i(a) \ldef \prn{a_1, \ldots, a_{i-1}, a, a_{i+1}, \ldots , a_d}$. 
For the upper bound, we have 
\begin{align*}
\abs{\ang{\wb \phi(x,a^{\star}), \theta}} = \abs{\det(\wb \phi(x,\cS_i(a^{\star})))} \leq \prod_{a \in \cS_i(a^{\star})}^{}  \nrm{\wb \phi(x,a)}_2^d \leq \sup_{a \in \cA} \nrm{\phi(x,a)}_2^d \leq 1
\end{align*}
by Hadamard's inequality and the fact that the reweighting appearing in \cref{eq:reweighting} enjoys $\nrm{\wb \phi(x,a)}_2 \leq \nrm{\phi(x,a)}_2$. This shows that $\sqrt{\iota(a^\star)} \leq 1$.
For the lower bound, we first recall that in \cref{alg:barycentric_reweight}, the set $\cS$ is initialized to have $\abs{\det(\phi(x, \cS))} \geq r^{d}$, and thus $\abs{\det(\wb \phi(x, \cS))} \geq \wb r^{d}$, where $\wb r \ldef \frac{r}{\sqrt{1 + 2 \eta}}$ accounts for the reweighting in \cref{eq:reweighting}. Next, we observe that as a consequence of the update rule in \cref{alg:barycentric_reweight}, we are guaranteed that $\abs{\det(\wb \phi(x,\cS))} \geq \wb r^{d}$ across all rounds.
Thus, whenever \cref{alg:igw_argmax} is invoked with the linear function $\theta$ described above, there must exist an action $a\in\cA$ such that $\abs{\ang{\wb \phi(x,a), \theta}} \geq \wb r^{d}$, which implies that $\sqrt{\iota(a^\star)} \geq \wb r^{d}$ and we can take $\zeta \ldef \wb r^{d}$ in \cref{prop:grid_search}.

  We next bound the number of iterations of the while-loop before the algorithm terminates. Let $\wb C \ldef \frac{\sqrt{2}}{2} \cdot C > 1$.  At each iteration (beginning from line 3) of \cref{alg:barycentric_reweight}, one of two outcomes occurs:
    \begin{enumerate}
        \item We find an index $ i \in [d]$ and an action $a \in \cA$ such that $\abs{\det(\wb \phi(x,\cS_i(a)))} > \wb C \abs{\det(\wb \phi(x, \cS))}$, and update $a_i = a$.
\item We conclude that $\sup_{a \in \cA} \max_{i \in [d]} \abs{\det \prn{\wb \phi(x,\cS_i(a))}} \leq C \abs{\det \prn{\wb \phi(x,\cS)}}$ and terminate the algorithm.
    \end{enumerate}
    We observe that 
    (i) the initial set $\cS$ has $\abs{\det(\wb \phi(x, \cS))} \geq \wb r^d$ with $\wb r \ldef \frac{r}{\sqrt{1+2\eta}}$ (as discussed before), 
    (ii) $\sup_{\cS \subseteq \cA, \abs{\cS} = d} \abs{\det(\wb \phi(x,\cS))} \leq 1$ by Hadamard's inequality,
    and (iii)
    each update of $\cS$ increases the (absolute) determinant by a factor of $\wb C$. 
    Thus, fix any $C > \sqrt{2}$, we are guaranteed that \cref{alg:barycentric_reweight} terminates within $O(d \log \prn{ e \vee \frac{\eta}{r}})$ iterations of the while-loop.

    We now discuss the correctness of \pref{alg:barycentric_reweight}, i.e., when terminated, the set $\cS$ is a $C$-approximate barycentric spanner with respect to the reweighted embedding $\wb \phi$.
First, note that by \cref{prop:grid_search}, \pref{alg:igw_argmax} is guaranteed to identify an action $\check a\in\cA$ such that $\abs{\det(\wb \phi(x,\cS_i(\check a)))} > \wb C \abs{\det(\wb \phi(x,\cS))}$ as long as there exists an action $a^\star\in\cA$ such that $\abs{\det(\wb \phi(x, \cS_i(a^\star)))} > C \abs{\det(\wb \phi(x,\cS))}$. As a result, by Observation 2.3 in \citet{awerbuch2008online}, if no update is made and \cref{alg:barycentric_reweight} terminates, we have identified a $C$-approximate barycentric spanner with respect to embedding $\wb \phi$.

\noindent\emph{Computational complexity.}
We provide the computational complexity analysis for \cref{alg:barycentric_reweight} in the following. 
We use $\wb \Phi_\cS$ to denote the matrix whose  $k$-th column is  $\wb \phi(x,a_k)$ with  $a_k \in \cS$.
\begin{itemize}
	\item \emph{Initialization.}
We first notice that, given $\wh g(x) \in \R^d$ and  $\wh a \ldef \argmax_{a \in \cA} \ang{\phi(x,a), \wh g(x)}$, it takes $O(d)$ time to compute  $\wb \phi(x,a)$ for any  $a \in \cA$.
Thus, computing  $\det(\wb \Phi_\cS)$ and  $\wb \Phi_\cS^{-1}$ takes $O(d^2 + d^{\omega}) = O(d^{\omega})$ time, where we use $O(d^{\omega})$ (with $2 \leq \omega \leq 3$) to denote the time of computing matrix determinant/inversion.
The maximum memory requirement is  $O(d^2)$, following from the storage of $\crl{\wb \phi(x,a)}_{a \in \cS}$ and  $\wb \Phi_\cS^{-1}$.
	\item \emph{Outer loops (lines 1-2).}
We have already shown that \cref{alg:barycentric} terminates within $O(d \log \prn{e \vee \frac{\eta}{r}})$  iterations of the while-loop (line 2). 
	It is also clear that the for-loop (line 2) is invoked at most $d$ times.
	 \item \emph{Computational complexity for lines 3-7.}
		 We discuss how to efficiently implement this part using rank-one updates.
We analyze the computational complexity for each line in the following.
The analysis largely follows from the proof of \cref{prop:barycentric_basic_computational}.
\begin{itemize}
	\item \emph{Line 3.}
Using rank-one update of the matrix determinant (as discussed in the proof of \cref{prop:barycentric_basic_computational}), we have  
   \begin{align*}
 \det( \wb \phi(x,a_1),\ldots, \wb \phi(x,a_{i-1}), Y, \wb \phi(x,a_{i+1}),\ldots, \wb \phi(x, a_d) ) = \ang{Y, \theta}, 
   \end{align*}
   where $\theta = \det(\wb \Phi_\cS) \cdot \prn{\wb \Phi_\cS^{-1}}^\top e_i$.
   Thus, whenever $\det(\wb \Phi_\cS)$ and  $\wb \Phi_\cS^{-1}$ are known, compute $\theta$ takes  $O(d)$ time. 
   The maximum memory requirement is $O(d^2)$, following from the storage of $\wb \Phi_\cS^{-1}$.
   \item \emph{Line 4.} When $\theta$ is computed, we can compute  $a$ by 
invoking \argmaxIGW (\cref{alg:igw_argmax}). As discussed in \cref{prop:grid_search}, this step takes runtime $O(\prn{\Topt \cdot d +d^2} \cdot \log (e \vee \frac{\eta}{r}))$ and maximum memory $O(\Mopt + d \log (e \vee \frac{\eta}{r}) + d)$ (by taking $\zeta = \wb r^{d}$ as discussed before).
\item \emph{Line 5.}
	Once $\theta$ and $\det(\wb \Phi_\cS)$ are computed, checking the updating criteria takes  $O(d)$ time. The maximum memory requirement is $O(d)$, following from the storage of $\wb \phi(x, a)$ and $\theta$.
 \item \emph{Line 6.}
	 As discussed in the proof of \cref{prop:barycentric_basic_computational}, if an update $a_i = a$ is made, we can update $\det(\wb \Phi_\cS)$ and $\wb \Phi_\cS^{-1}$ using rank-one updates with $O(d^{2})$ time and memory.
\end{itemize}
Thus, using rank-one updates, the total runtime for line 3-7 adds up to $O(\prn{\Topt \cdot d +d^2} \cdot \log (e \vee \frac{\eta}{r}))$ and maximum memory requirement is $O(\Mopt + d^2+ d \log (e \vee \frac{\eta}{r}) )$.
\end{itemize}
   To summarize,
   \cref{alg:barycentric} has runtime $O(\prn{\Topt \cdot d^{3} +d^4} \cdot \log^{2} (e \vee \frac{\eta}{r}))$
   and uses at most $O(\Mopt + d^2 + d \log (e \vee \frac{\eta}{r}) )$ units of memory. 
    \end{proof}

    \subsection{Efficient Initializations for \cref{alg:barycentric_reweight}}
    \label{app:det_assumption}
    
    In this section we discuss specific settings in which the initialization required by \cref{alg:barycentric_reweight} can be computed efficiently. For the first result, we let $\ball(0,r)\ldef\crl*{x\in\bbR^{d}\mid{}\nrm*{x}_2\leq{}r}$ denote the ball of radius $r$ in $\bbR^{d}$.
\begin{example}
  \label{ex:init1}
  Suppose that there exists $r \in (0,1)$ such that $\ball(0, r) \subseteq \crl{\phi(x,a): a \in \cA}$. Then by choosing $\cS \ldef \crl{r e_1, \dots, r e_d} \subseteq \cA$, we have $\abs{\det \prn{\phi(\cS)}} = r^d$.
\end{example}

    The next example is stronger, and shows that we can efficiently compute a set with large determinant whenever such a set exists.

    \begin{example}
      \label{prop:determinant_initialization}
        Suppose there exists a set $\cS^\star \subseteq \cA$ such that $\abs{\det(\phi(\cS^\star))} \geq \wb r^d$ for some $\wb r > 0$. Then there exists an efficient algorithm that identifies a set $\cS \subseteq \cA$ with $\abs{\det(\phi(\cS))} \geq r^d$ for $r \ldef \frac{\wb r}{8d}$, and does so with 
   runtime $O(\Topt \cdot d^2 \log d + d^4 \log d)$ and memory $O(\Mopt + d^2)$.
      \end{example}
\begin{proof}[Proof for \pref{prop:determinant_initialization}]
The guarantee is achieved by running \cref{alg:barycentric} with $C=2$.
One can show that this strategy achieves the desired approximation guarantee by slightly generalizing the proof of a similar result in \citet{mahabadi2019composable}. In more detail, \citet{mahabadi2019composable} study the problem of identifying a subset $\cS\subseteq\cA$ such that $\abs*{\cS} = k$ and $\det(\Phi_{\cS}^\top \Phi_{\cS})$ is (approximately) maximized, where $\Phi_\cS \in \R^{d \times \abs{\cS}}$ denotes the matrix whose columns are $\phi(x,a)$ for  $a \in \cS$.
We consider the case when $k=d$, and
make the following observations.
 \begin{itemize}
	\item We have 
            $\det(\Phi_{\cS}^\top \Phi_{\cS}) = \paren*{\det(\Phi_{\cS})}^2 = \paren{\det(\phi(x, \cS))}^2$. Thus, maximizing $\det(\Phi_{\cS}^\top \Phi_{\cS})$ is equivalent to maximizing $\abs{\det(\phi(x,\cS))}$.
	 \item 
           The Local Search Algorithm provided in \citet{mahabadi2019composable} (Algorithm 4.1 therein) has the same update and termination condition as \cref{alg:barycentric}. As a result, one can show that the conclusion of their Lemma 4.1 also applies to \cref{alg:barycentric}.
\end{itemize}
    \end{proof}

\section{Other Details for Experiments}
\label{app:experiment}
\subsection{Basic Details}
\label{app:exp_basic}

\paragraph{Datasets}

\oneshotwiki \citep{singh12:wiki-links,oneshotwiki} is a
named-entity recognition task where contexts are text phrases
preceding and following the mention text, and where actions are text
phrases corresponding to the concept names.  We use the python
package \sentencetransformers \citep{reimers-2019-sentence-bert} to
separately embed the text preceding and following the reference into $\R^{768}$,
and then concatenate, resulting in a context embedding in $\R^{1536}$.
We embed the action (mentioned entity) text into $\R^{768}$ and then use
SVD on the collection of embedded actions to reduce the dimensionality
to $\R^{50}$.  The reward function is an indicator function for
whether the action corresponds to the actual entity
mentioned.  \oneshotwiki-311 (resp. \oneshotwiki-14031) is a subset of
this dataset obtained by taking all actions with at least 2000
(resp. 200) examples. 

\amazon-3m \citep{Bhatia16} is an extreme multi-label dataset whose
contexts are text phrases corresponding to the title and description
of an item, and whose actions are integers corresponding to item tags.
We separately embed the title and description phrases using \sentencetransformers,
which leads to a context embedding in $\R^{1536}$.
Following the protocol
used in \citet{sen2021top}, the first 50000 examples are fully supervised,
and subsequent examples have bandit feedback.  We use Hellinger
PCA~\citep{lebret2014word} on the supervised data label cooccurrences to
construct the action embeddings {in $\R^{800}$}. Rewards are binary, and indicate
whether a given item has the chosen tag. Actions that do not occur in the supervised
portion of the dataset cannot be output by the model, but are retained
for evaluation: For example, if during the bandit feedback phase, an example
consists solely of tags that did not occur during the supervised phase,
the algorithm will experience a reward of 0 for every feasible action
on the example. For a typical seed, this results in roughly 890,000 feasible actions for the model. In the $(k=5, r=3)$ setup, we take
the top-$k$ actions as the greedy slate, and then independently
decide whether to explore for each exploration slot {(the bottom $r$ slots)}. For exploration, we
sample from the spanner set without replacement.

\paragraph{Regression functions and oracles}
For bilinear models, regression functions take the form $f(x, a) = \ang{\phi(a), Wx}$, 
where $W$ is a matrix of learned parameters.  For deep
models, regression functions pass the original context through 2 residual
leaky \textsf{ReLU} layers before applying the bilinear layer, $f(x, a) =
\ang{\phi(a), W \wb g(x)}$, where $\wb g$ {is a learned two-layer neural network, and $W$ is a matrix of learned parameters}.
{For experiments with respect to \oneshotwiki datasets, we add a learned bias term for regression functions (same for every action); for experiments with respect to the \amazon-\textsf{3m} dataset, we additionally add an action-dependent bias term that is obtained from the supervised examples.}
The online regression oracle is implemented using \textsf{PyTorch}'s Adam optimizer with log loss (recall that rewards are 0/1).

\paragraph{Hyperparameters}
For each algorithm, we optimize its hyperparameters using random search \citep{bergstra2012random}. Speccifically, hyperparameters are tuned by taking the best of 59 randomly selected configurations for a fixed seed (this seed is not used for evaluation).
A seed determines both dataset shuffling, initialization of
regressor parameters, and random choices made by any action
sampling scheme. 

\paragraph{Evaluation}
We evaluate each algorithm on 32 seeds.
All reported confidence intervals are 90\% bootstrap CIs for the mean.

\subsection{Practical Modification to Sampling Procedure in \spannerIGW}
\label{app:exp_modification}
For experiments with \spannerIGW, we slightly modify the action
sampling distribution so as to avoid computing the normalization
constant $\lambda$. First, we modify the weighted embedding scheme
given in \pref{eq:reweighting} using the following expression:
\begin{align*}
    \wb{\phi}(x_t, a) \ldef \frac{\phi(x_t,a)}{\sqrt{1 + d + \frac{\gamma}{4d} \prn[\big]{ \wh f_t(x_t,\wh a_t) - \wh f_t(x_t, a)}}}. 
\end{align*}
We obtain a $4d$-approximate optimal design for the reweighted
embeddings by first computing a $2$-approximate barycentric spanner
$\cS$, then taking $q_t^{\mathrm{opt}} \ldef \unif(\cS)$. To proceed, let $\wh a_t \ldef \argmax_{a \in \cA} \wh f(x_t,a)$ and $\wb d \ldef
\abs{\cS \cup \crl{\wh a_t}}$. 
We construct the sampling distribution $p_t \in \Delta(\cA)$ as follows:
\begin{itemize}
    \item Set $p_t(a) \ldef \frac{1}{\wb d + \frac{\gamma}{4d}
        \prn[\big]{\wh f_t(x_t,\wh a_t) - \wh f_t(x_t, a)}}$ for each
      $a \in \supp(\cS)$.
\item Assign remaining probability mass to $\wh a_t$.
\end{itemize}
With a small modification to the proof of \cref{prop:dec_igw}, one
can show that this construction certifies that $\dec_\gamma(\cF) = 
\bigoh\prn[\big]{\frac{d^2}{\gamma}}$. Thus, the regret bound
in \cref{thm:igw} holds up to a constant factor.
{Similarly, with a small modification to the proof of \cref{thm:barycentric_reweight}, we can also show that
---with respect to this new embedding---\cref{alg:barycentric_reweight} has $O(\prn{\Topt \cdot d^{3} +d^4} \cdot \log^{2} \prn[\big]{ \frac{d + \gamma /d}{r}})$ runtime and $O(\Mopt + d^2 + d \log \prn[\big]{\frac{d + \gamma /d}{r}} )$ memory.}

\subsection{Timing Information}
\label{app:exp_timing}
\cref{tab:timings} contains timing information the
$\oneshotwiki$-14031 dataset with a bilinear model.  The CPU timings
are most relevant for practical scenarios such as information
retrieval and recommendation systems, while the GPU timings are
relevant for scenarios where simulation is possible.  Timings for
\greedyalg do not include the one-time cost to compute the spanner
set.  Timings for all algorithms use precomputed context and action
embeddings.  For all but algorithms but \mainalg, timings reflect the
major bottleneck of computing the argmax action, since all subsequent
steps take $O(1)$ time with respect to $\abs{\cA}$.  In particular,
\squarecb is implemented using rejection sampling, which does not
require explicit construction of the action distribution.  For
\mainalg, the additional overhead is due to the time required to construct an approximate optimal design for each example.

\begin{table}[H]
    \centering
    \caption{Per-example inference timings for \oneshotwiki-14031.
      CPU timings use batch size $1$ on an Azure
      \textsf{STANDARD\_D4\_V2} machine. GPU timings use batch size
      1024 on an Azure \textsf{STANDARD\_NC6S\_V2} (Nvidia P100-based)
      machine.}
    \label{tab:timings}
    \begin{tabular}{c c c}
    \toprule
         Algorithm & CPU & GPU \\
         \midrule
         \greedy & 2 ms & 10 $\mu$s \\
         \greedyalg & 2 ms & 10 $\mu$s \\
         \squarecb & 2 ms & 10 $\mu$s\\
         \mainalg & 25 ms & 180 $\mu$s \\
    \bottomrule
    \end{tabular}
  \end{table}

  \subsection{Additional Figures}
  \label{app:exp_figure}
  \begin{figure}[H]
\centering\includegraphics[width=.4\linewidth]{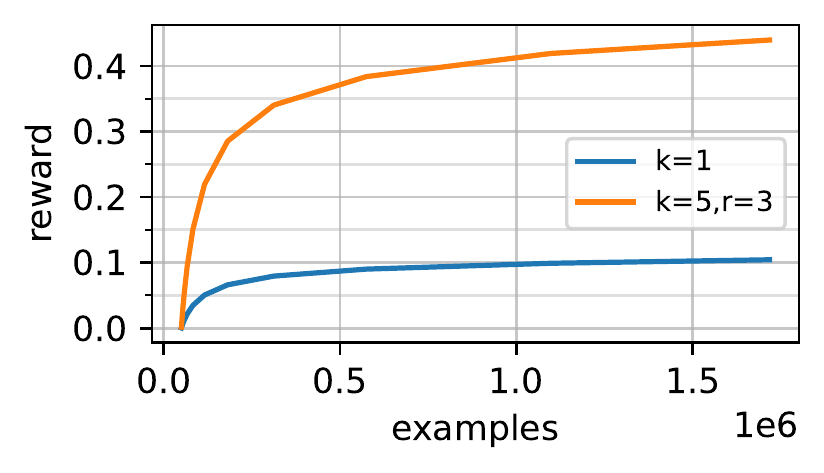}
\caption{\amazon-3m results for \greedyalg.  Confidence intervals are
  rendered, but are but too small to visualize. For $(k=1)$, the final CI is
$[0.1041, 0.1046]$, and for $(k=5,r=3)$, the final CI is $[0.438, 0.440]$.}
\label{fig:amazon3m}
\end{figure}

\end{document}